\documentclass{article}
\PassOptionsToPackage{numbers, compress}{natbib}

\usepackage{amsthm}
\usepackage[main,final]{neurips_2026}

\usepackage[utf8]{inputenc} 
\usepackage{newunicodechar}
\newunicodechar{，}{,}
\usepackage[T1]{fontenc}    
\usepackage{hyperref}       
\usepackage{url}            
\usepackage{booktabs}       
\usepackage{amsfonts}       
\usepackage{nicefrac}       
\usepackage{microtype}      
\usepackage{xcolor}         
\usepackage{multirow}   
\usepackage{amsmath}    
\usepackage{subfig}
\usepackage{graphicx}
\usepackage{amssymb}
\usepackage[table]{xcolor}
\usepackage{amsthm}
\usepackage[normalem]{ulem}
\newtheorem{theorem}{Theorem}
\newtheorem{lemma}{Lemma}
\newtheorem{prop}{Proposition}
\newtheorem{corollary}{Corollary}
\newtheorem{assumption}{Assumption}
\usepackage{algorithm}
\usepackage{algpseudocode}
\theoremstyle{definition}
\theoremstyle{plain}
\usepackage{enumitem}

\usepackage{makecell}
\title{StrideDiffusion: Accelerating Diffusion Models for Time-series Generation}

\author{%
\textbf{Du Yin}$^{1,4}$, \textbf{Estrid He}$^{2}$, 
\textbf{Julián Jerónimo Bañuelos}$^{2,3}$, 
\textbf{Yang Yang}$^{1}$, \\
\textbf{Feng Hu}$^{2}$, 
\textbf{Yuchen Luo}$^{2}$, 
\textbf{Hao Xue}$^{1,4}$, 
\textbf{Stephan Sigg}$^{3}$, 
\textbf{Flora Salim}$^{1}$\\
$^1$UNSW\\
$^2$RMIT\\
$^3$Aalto University\\
$^4$HKUST(GZ)\\
\texttt{du.yin@unsw.edu.au, estrid.he@rmit.edu.au}\\
\texttt{haoxue@hkust-gz.edu.cn, flora.salim@unsw.edu.au}
}

\begin{document}

\maketitle

\begin{abstract}
Diffusion models have become competitive generators for time series, but their practical use is limited by the large number of sequential denoising steps required at inference time. Existing fast samplers typically use fixed or generic timestep schedules, overlooking a distinctive property of time-series diffusion: different spectral bands evolve at different rates during the reverse process. We introduce \textbf{StrideDiffusion}, a training-free spectral-aware sampler that adaptively selects the denoising stride from band-level activity. At each step, StrideDiffusion monitors relative \textit{band energy}, \textit{log-power drift}, and \textit{phase velocity} to identify whether high-frequency dynamics remain active or whether the trajectory is dominated by stable low-frequency structure. It then takes fine steps when rapidly varying bands are active and larger jumps once only coarse components remain. A bandwise stability analysis shows that inactive frequency bands change only linearly with the jump size under deterministic affine reverse updates, providing a local justification for spectral activity as a step-size indicator. Across six unconditional time-series generation benchmarks, StrideDiffusion uses only \textbf{14-66} function evaluations instead of \textbf{500/1000} denoising steps, achieving up to \textbf{18.9$\times$} wall-clock speedup while preserving or improving generation quality. On conditional imputation and forecasting, it further delivers \textbf{5-14$\times$} average acceleration with comparable predictive accuracy. These results show that spectral evolution provides a practical and principled signal for fast time-series diffusion sampling. Our code is available at https://anonymous.4open.science/r/stridediff-ts.
\end{abstract}

\section{Introduction}

Diffusion models have become a strong choice for time-series forecasting, imputation, and unconditional generation~\cite{tashiro2021csdi,yuandiffusion,rasul2021autoregressivedenoisingdiffusionmodels}. The main practical bottleneck is inference cost: each sample is produced through hundreds to thousands of sequential denoising steps, with a full forward pass at every step, which precludes low-latency or large-batch deployment even on capable hardware.

Most existing accelerators are designed for image and video diffusion. ODE-based solvers~\cite{karras2022elucidating,lu2022dpm,lu2025dpm} compress the step count uniformly without using anything about the signal being denoised, so they overspend on the easy segments of the trajectory. Feature-caching methods~\cite{ma2024deepcache,li2024faster,liu2025timestep} reuse intermediate activations across timesteps, which works for spatially similar image features but transfers poorly to time series, where every prediction depends on the full temporal context. Distillation~\cite{salimansprogressive,chen2025sana,wang2023videolcm} pushes the step count further at the cost of retraining and a separate student network. None of these methods uses the structure of the reverse process itself to decide where computation matters.

We start from a simple empirical observation (Fig.~\ref{fig1}). In the reverse trajectory of a trained time-series diffusion model, different frequency bands become active at different stages: high-frequency bands carry energy early and then decay, while the low-frequency trend grows monotonically and dominates near the end of sampling. The pattern holds on both stationary and non-stationary data (Section~\ref{sec:redundancy_analysis}), so a uniform schedule overspends on the simple segments. Section~\ref{sec:theory_spectral_stability} makes this precise: under a deterministic single-step affine update, the change of any frequency band is controlled by its energy in $x_\rho$ and the predicted clean signal $\hat{x}_0$, so inactive bands change only $\mathcal{O}(\Delta)$ with the stride. Step size should therefore track band activity rather than be fixed in advance.

\begin{figure}
    \centering
    \includegraphics[width=0.99\linewidth]{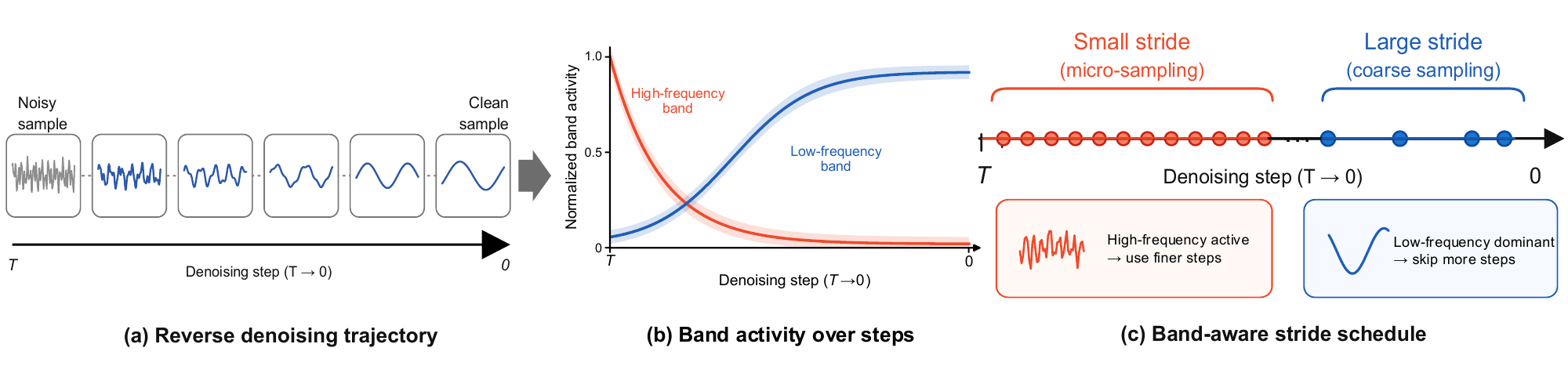}
    \caption{Overview of band-aware sampling. (a) The reverse denoising trajectory from $t{=}T$ to $t{=}0$. (b) High-frequency bands are active early while low-frequency bands dominate near the end. (c) Our scheduler matches the stride to band activity: micro steps when high frequencies are active, large jumps when only the low-frequency trend remains.}
    \vspace{-1.5em}
    \label{fig1}
\end{figure}

Based on this observation, we implement it as a specific sampler named \textbf{StrideDiffusion}. \textbf{StrideDiffusion} is a training-free, band-aware stride scheduler for time-series diffusion. At each step it tracks three spectral statistics: relative band energy, log-power drift, and phase velocity, and gates each band as active or inactive via a power gate combined with a dynamics gate. The active set selects the next stride: fine steps when high-frequency bands are active, mid leaps when only low-frequency bands remain, and coarse leaps once only the DC component remains. Fine steps use deterministic DDIM~\cite{ddim} and coarser strides use DPM-Solver-2 multistep~\cite{lu2022dpm}. DDIM provides the formal affine core of our bandwise stability analysis, while DPM-Solver-2 inherits the same leading structure.

The main contributions are:

\begin{itemize}[
  leftmargin=2em,
  labelsep=0.5em,
  itemsep=1pt,
  topsep=2pt,
  parsep=0pt,
  partopsep=0pt,
  after=\vspace{-1.5em}
]
\item We reveal a coarse-to-fine spectral progression in reverse diffusion across stationary and non-stationary time series (Section~\ref{sec:redundancy_analysis}), motivating non-uniform sampling.
\item We give a bandwise stability result for deterministic DDIM update: inactive bands change only $\mathcal{O}(\Delta)$ with the stride (Section~\ref{sec:theory_spectral_stability}), justifying band activity as a local step-size indicator.
\item We propose StrideDiffusion, a training-free adaptive sampler that uses relative band energy, log-power drift, and phase velocity to adaptively select sampling strides (Section~\ref{sec:adaptive_sample}).
\item On synthetic and real-world time-series datasets, StrideDiffusion preserves generative fidelity at a fraction of the function evaluations of standard solvers (Section~\ref{sec:experiments}).
\end{itemize}

\vspace{-0.8em}
\section{Related Works}
\vspace{-0.8em}

\label{sec:related_work}

\noindent\textbf{Diffusion for Time Series.} Diffusion models have been adapted to time series via score-based and conditional formulations~\cite{tashiro2021csdi,yuandiffusion,rasul2021autoregressivedenoisingdiffusionmodels}, and refined with priors such as non-autoregressive denoising~\cite{shen2023nonautoregressiveconditionaldiffusionmodels}, seasonal-trend decomposition~\cite{yuandiffusion}, retrieval guidance~\cite{liu2024retrieval}, and non-stationary scoring~\cite{ye2025non}.

\textbf{Inference Acceleration.} Training-based accelerators include progressive and consistency distillation~\cite{salimansprogressive,chen2025sana,wang2023videolcm} and quantization~\cite{quan1,quan2,quan3}, while training-free approaches span DDIM~\cite{ddim}, higher-order ODE/SDE solvers~\cite{karras2022elucidating,lu2022dpm,lu2025dpm}, and feature caching~\cite{ma2024deepcache,li2024faster,liu2025timestep}; most target image or video diffusion, and porting them to time series where each step depends on the full temporal context remains open.

\textbf{Spectrum-aware Diffusion.} Diffusion is known to follow a coarse-to-fine spectral trajectory~\cite{rissanen2022generative,yang2023diffusion}, which has motivated samplers that emphasize critical low- or high-frequency steps~\cite{lee2025beta,yuan2025freqprior,yu2025dmfft}; in contrast to these quality-oriented methods, we exploit frequency-band activation as an \emph{efficiency} signal for sampler scheduling. An extended discussion is given in Appendix~\ref{app:related_work_extended}.

\section{Preliminaries}

\subsection{Time Series Generation} 

Let $\mathcal{D}=\{s_{1:L}^{n}\}_{n=1}^{N}$ be a dataset of $N$ multivariate sequences $s_{1:L}\in\mathbb{R}^{L\times d}$, with sequence length $L$ and feature dimension $d$. The goal of time-series generation is to learn a model $G(\cdot)$ whose samples $\hat{s}_{1:L}$ match the data distribution. Under \emph{unconditional} generation, $G$ maps a latent noise $z\sim\mathcal{N}(0,I)$ to a sample, $\hat{s}_{1:L}=G(z)$; under \emph{conditional} generation, $G$ models $p(s_{1:L}\mid c)$ for some conditioning information $c$ (e.g., observed entries for imputation, or a history window for forecasting). The temporal multi-scale structure of time series makes diffusion models a strong fit for this problem, but their iterative sampling is the bottleneck addressed in this work.

\subsection{Diffusion Models for Time Series Generation}
DDPMs~\cite{ho2020denoising} pair a forward chain that corrupts a clean sequence $x_0=s_{1:L}\in\mathbb{R}^{L\times d}$ into Gaussian noise with a learned reverse chain that denoises back. Under a fixed schedule $\{\beta_t\}_{t=1}^{T}$, with $\alpha_t=1-\beta_t$, $\bar{\alpha}_t=\prod_{i\le t}\alpha_i$, and $\rho_t=\bar{\alpha}_t$, the forward marginal admits the closed form
\begin{equation}
\label{eq:forward_reparameterization}
    x_t=\sqrt{\bar{\alpha}_t}\,x_0+\sqrt{1-\bar{\alpha}_t}\,\epsilon,\qquad \epsilon\sim\mathcal{N}(0,I).
\end{equation}

The reverse process predicts either $\hat{x}_0$~\cite{yuandiffusion} or the injected noise $\epsilon_\theta(x_t,t)$~\cite{kongdiffwave,tashiro2021csdi}; we use the latter, from which
\begin{equation}
\label{eq:x0_prediction}
    \hat{x}_0(x_t,t)=\frac{x_t-\sqrt{1-\bar{\alpha}_t}\,\epsilon_\theta(x_t,t)}{\sqrt{\bar{\alpha}_t}}.
\end{equation}

Iterating the reverse step from $x_T$ down to $x_0$ is expensive. DDIM~\cite{ddim} swaps the Markovian step for a non-Markovian one that may jump to any earlier $t_{\mathrm{next}}<t$:
\begin{equation}
\label{eq:ddim}
    x_{t_{\mathrm{next}}}=\sqrt{\bar{\alpha}_{t_{\mathrm{next}}}}\,\hat{x}_0(x_t,t)+\sqrt{1-\bar{\alpha}_{t_{\mathrm{next}}}-\sigma_{t,t_{\mathrm{next}}}^{2}}\,\epsilon_\theta(x_t,t)+\sigma_{t,t_{\mathrm{next}}}\,z,
\end{equation}
with $z\sim\mathcal{N}(0,I)$ and $\sigma_{t,t_{\mathrm{next}}}=\eta\sqrt{(1-\bar{\alpha}_{t_{\mathrm{next}}})/(1-\bar{\alpha}_t)}\sqrt{1-\bar{\alpha}_t/\bar{\alpha}_{t_{\mathrm{next}}}}$; setting $\eta=0$ removes the noise term and yields the deterministic update used throughout this work.

\section{Redundancy in Iterative Denoising}

\begin{figure}[H]
    \centering
    \includegraphics[width=0.79\linewidth]{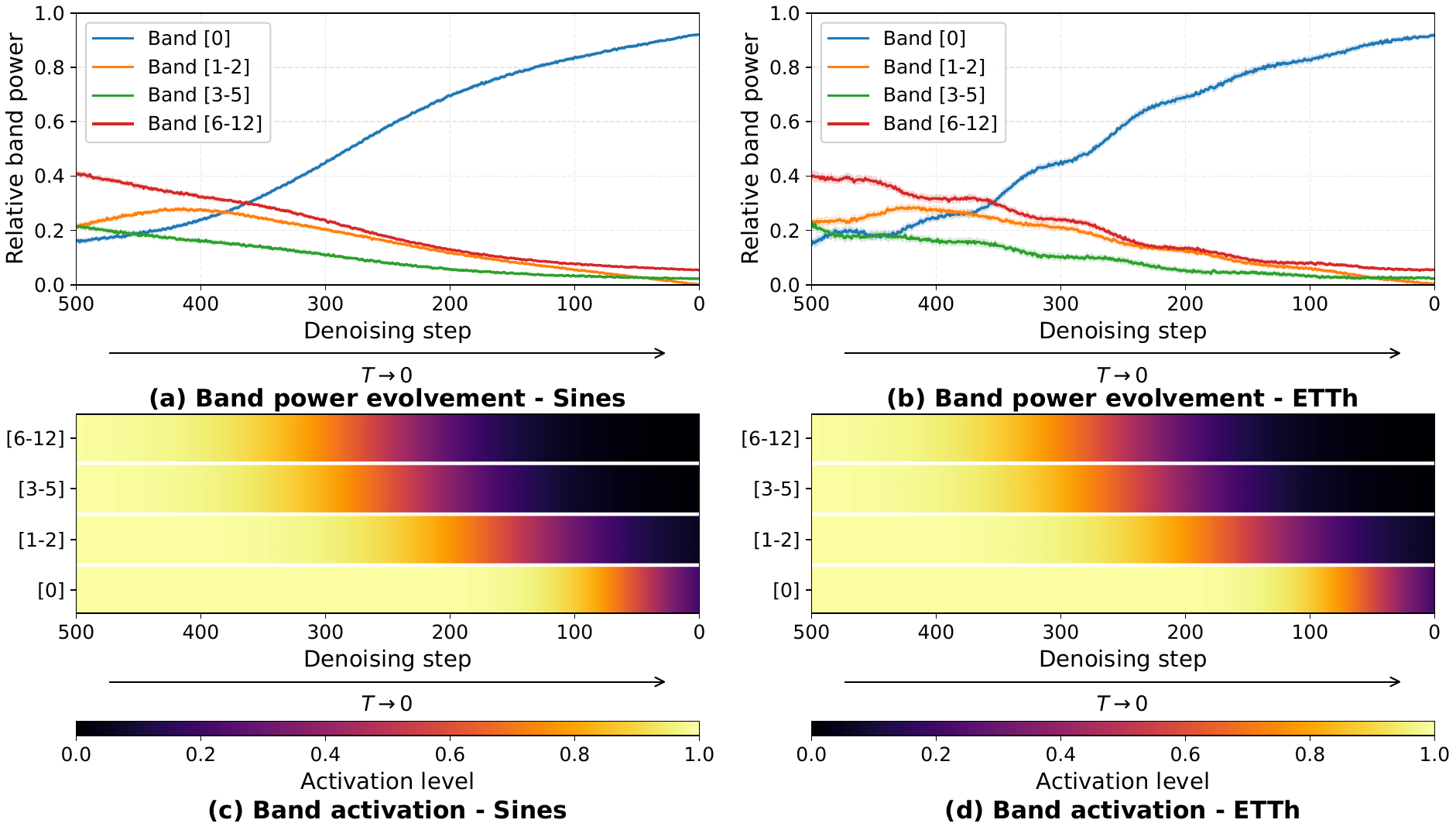}
    \caption{Band activity during reverse denoising on Sines (left) and ETTh (right) with $T{=}500$. Top: relative power of bands $[0]$, $[1\text{-}2]$, $[3\text{-}5]$, $[6\text{-}12]$. Bottom: activation heatmaps. High-frequency bands decay first while the global trend band $[0]$ grows monotonically; the coarse-to-fine progression holds on both stationary (Sines) and non-stationary (ETTh) data.}
    \label{fig:band_activity}
    \vspace{-0.6em}
\end{figure}

\label{sec:redundancy_analysis}
Iterative denoising is widely known to contain redundant steps that can be skipped without quality loss~\cite{salimansprogressive,mao2023leapfrog,ma2024deepcache,wimbauer2024cache}. Below we analyze where this redundancy lives in the time-series setting by tracking how individual frequency bands evolve along the reverse trajectory.

\textit{Observation.}\,\textbf{Band activity evolves coarse-to-fine over the reverse process.} On Sines~\cite{yoon2019time} and ETTh~\cite{haoyietal-informer-2021} (length~24, $T{=}500$), the trace in Fig.~\ref{fig:band_activity} shows: \textbf{(i)} at the outset, energy is broadly distributed across low ($[1\text{-}2]$), mid ($[3\text{-}5]$), and high ($[6\text{-}12]$) bands; \textbf{(ii)} high-frequency bands decay first while the global trend band $[0]$ grows monotonically and dominates late sampling, so high-frequency updates become redundant earlier than low-frequency ones; \textbf{(iii)} the pattern holds on both stationary (Sines) and non-stationary (ETTh) data, with a cleaner hierarchy on Sines.



\subsection{Spectral Stability of Deterministic Single-Step Linear Updates}
\label{sec:theory_spectral_stability}


StrideDiffusion uses DDIM for fine steps and DPM-Solver-2 multistep for leaps. 
We show below that the bandwise change is controlled by band energy in $x_\rho$ and $\hat{x}_0$, so band activity can serve as a local step-size indicator.

\noindent\textbf{Setup.}
Let $x_\rho\in\mathbb{R}^{L\times d}$ denote a sample at noise level $\rho=\bar\alpha_t$ ($L$ sequence length, $d$ channels). For frequency band $B$, write $\mathcal{P}_B = \mathcal{F}^{-1} M_B \mathcal{F}$ for the orthogonal Fourier projector keeping frequencies in $B$, with band energy $E_\rho[B]=\|\mathcal{P}_B x_\rho\|_F^2$ and relative energy $\pi_\rho[B]=E_\rho[B]/\sum_{B'}E_\rho[B']$ (full setup in Appendix~\ref{app:fourier_projectors}).

For a deterministic DDIM update from \(\rho\) to \(\rho'=\rho+\Delta\), the reverse step admits the affine form
\begin{equation}
\label{eq:single_step_affine}
x_{\rho'} =
s_{\rho,\rho'}x_\rho
+
a_{\rho,\rho'}\hat{x}_0(x_\rho,\rho),
\end{equation}
where
\[
s_{\rho,\rho'}=\sqrt{\frac{1-\rho'}{1-\rho}},
\qquad
a_{\rho,\rho'}=
\sqrt{\rho'}
-
\sqrt{\rho}\sqrt{\frac{1-\rho'}{1-\rho}}.
\]
The DPM-Solver-2 branch has the same leading structure plus a multistep history correction, analyzed in Appendix~\ref{app:dpm_solver_extension}.


\begin{lemma}[Bandwise sensitivity of deterministic DDIM]
\label{lem:bandwise_ddim_sensitivity}
Let $\rho'=\rho+\Delta$ with $\Delta>0$. For any frequency band $B$, the deterministic DDIM update in Eq.~\eqref{eq:single_step_affine} satisfies
\begin{equation}
    \mathcal{P}_B(x_{\rho'}-x_\rho)
    =
    (s_{\rho,\rho'}-1)\mathcal{P}_B x_\rho
    +
    a_{\rho,\rho'}\mathcal{P}_B\hat{x}_0(x_\rho,\rho).
    \label{eq:bandwise_exact_update}
\end{equation}
Moreover, for sufficiently small $\Delta$,
\begin{equation}
    \mathcal{P}_B(x_{\rho'}-x_\rho)
    =
    -\frac{\Delta}{2(1-\rho)}
    \mathcal{P}_B x_\rho
    +
    \frac{\Delta}{2\sqrt{\rho}(1-\rho)}
    \mathcal{P}_B\hat{x}_0(x_\rho,\rho)
    +
    O(\Delta^2).
    \label{eq:bandwise_first_order}
\end{equation}
\end{lemma}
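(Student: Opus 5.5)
The proof has two parts: an exact algebraic identity obtained from linearity of the Fourier projector, and a first-order Taylor expansion of the two scalar coefficients in $\Delta$.

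\textbf{Exact identity.} Since $\mathcal{P}_B=\mathcal{F}^{-1}M_B\mathcal{F}$ is a linear operator on $\mathbb{R}^{L\times d}$, I apply it to both sides of Eq.~\eqref{eq:single_step_affine}, treating $s_{\rho,\rho'}$ and $a_{\rho,\rho'}$ as scalars that commute with $\mathcal{P}_B$. This gives
\[
\mathcal{P}_B x_{\rho'}=s_{\rho,\rho'}\mathcal{P}_B x_\rho+a_{\rho,\rho'}\mathcal{P}_B\hat{x}_0(x_\rho,\rho).
\]
Subtracting $\mathcal{P}_B x_\rho$ from both sides yields Eq.~\eqref{eq:bandwise_exact_update}.

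Before expanding, I would check that Eq.~\eqref{eq:single_step_affine} really is Eq.~\eqref{eq:ddim} with $\eta=0$. Substituting $\epsilon_\theta=(x_\rho-\sqrt{\rho}\,\hat{x}_0)/\sqrt{1-\rho}$ from Eq.~\eqref{eq:x0_prediction} into Eq.~\eqref{eq:ddim} and collecting terms recovers exactly these $s$ and $a$. So no additional assumption is needed.

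\textbf{First-order expansion.} I fix $\rho\in(0,1)$ and take $\Delta$ small enough that $\rho'<1$. Then $s_{\rho,\rho+\Delta}$ and $a_{\rho,\rho+\Delta}$ are smooth functions of $\Delta$ near $0$, so Taylor's theorem applies.

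\begin{itemize}
\item For $s$: writing $\sqrt{1-\rho-\Delta}=\sqrt{1-\rho}\,\bigl(1-\tfrac{\Delta}{1-\rho}\bigr)^{1/2}$ gives
\[
s_{\rho,\rho'}-1=-\frac{\Delta}{2(1-\rho)}+O(\Delta^2).
\]
\item For $a$: expanding $\sqrt{\rho+\Delta}=\sqrt{\rho}+\frac{\Delta}{2\sqrt{\rho}}+O(\Delta^2)$ and reusing the expansion of $s$,
\[
a_{\rho,\rho'}=\frac{\Delta}{2\sqrt{\rho}}+\frac{\sqrt{\rho}\,\Delta}{2(1-\rho)}+O(\Delta^2)=\frac{\Delta}{2\sqrt{\rho}(1-\rho)}+O(\Delta^2).
\]
The last equality uses $\frac{1}{\sqrt\rho}+\frac{\sqrt\rho}{1-\rho}=\frac{1-\rho+\rho}{\sqrt\rho(1-\rho)}$.
\end{itemize}

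Substituting both expansions into Eq.~\eqref{eq:bandwise_exact_update} gives Eq.~\eqref{eq:bandwise_first_order}. The remainders become $O(\Delta^2)\,\mathcal{P}_B x_\rho$ and $O(\Delta^2)\,\mathcal{P}_B\hat{x}_0$. Both are $O(\Delta^2)$ in Frobenius norm because $x_\rho$ and $\hat{x}_0(x_\rho,\rho)$ are fixed, evaluated at the current state, and $\|\mathcal{P}_B\|\le 1$ since it is an orthogonal projector.

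\textbf{Main obstacle.} The algebra is routine; the delicate point is making the $O(\Delta^2)$ claim precise. The constant in the remainder depends on $\rho$ and blows up as $\rho\to 0$ or $\rho\to 1$. I would therefore state the result for fixed $\rho\in(0,1)$, with $\Delta$ bounded away from $1-\rho$. This keeps the second derivatives of $\sqrt{\cdot}$ and $\sqrt{1-\cdot}$ bounded on $[\rho,\rho+\Delta]$, and I would bound the remainder via the Lagrange form of Taylor's theorem.
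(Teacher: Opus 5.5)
Your proposal is correct and matches the paper's proof step for step: the paper also substitutes $\hat{\epsilon}_\theta=(x_\rho-\sqrt{\rho}\,\hat{x}_0)/\sqrt{1-\rho}$ into the $\eta=0$ DDIM update to get the affine form, applies the linear projector $\mathcal{P}_B$, subtracts $\mathcal{P}_B x_\rho$, and Taylor-expands $\sqrt{1-z}$ and $\sqrt{\rho+\Delta}$ to obtain the same first-order coefficients. Your remark that the $O(\Delta^2)$ constant depends on $\rho$ and degenerates as $\rho\to 0$ or $\rho\to 1$ makes explicit what the paper leaves implicit until the corollary restricts to $[\rho_{\min},1-\gamma]$.
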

For deterministic DDIM, the bandwise change is controlled by the band energy of both \(x_\rho\) and \(\hat{x}_0(x_\rho,\rho)\). 
Thus, inactivity in both quantities is a sufficient condition for local band stability. 
For DPM-Solver-2, the same leading structure appears together with an explicit history correction, analyzed in Appendix~\ref{app:dpm_solver_extension}.

\begin{corollary}[Stability of inactive bands]
\label{cor:inactive_band_stability}
Assume $\rho \in [\rho_{\min},1-\gamma]$ for some $\rho_{\min}>0$ and $\gamma>0$, and let
$\Delta \le \gamma/2$. If a frequency band $B$ satisfies
\begin{equation}
    \|\mathcal{P}_B x_\rho\|_F \le \varepsilon_x,
    \qquad
    \|\mathcal{P}_B\hat{x}_0(x_\rho,\rho)\|_F \le \varepsilon_0,
    \label{eq:inactive_band_condition}
\end{equation}
then the deterministic DDIM update satisfies
\begin{equation}
    \|\mathcal{P}_B(x_{\rho'}-x_\rho)\|_F
    \le
    C_{\rho_{\min},\gamma}
    \Delta
    \bigl(\varepsilon_x+\varepsilon_0\bigr),
    \label{eq:inactive_band_bound}
\end{equation}
where $C_{\rho_{\min},\gamma}$ is a constant depending only on the admissible noise-level range.
\end{corollary}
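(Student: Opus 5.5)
The plan is to work directly from the exact identity \eqref{eq:bandwise_exact_update} of Lemma~\ref{lem:bandwise_ddim_sensitivity}, not from the first-order expansion \eqref{eq:bandwise_first_order}. That expansion carries an unquantified $O(\Delta^2)$ remainder, whereas the corollary asks for a bound holding uniformly for all $\Delta\le\gamma/2$. Taking Frobenius norms in \eqref{eq:bandwise_exact_update} and applying the triangle inequality gives
\[
\|\mathcal{P}_B(x_{\rho'}-x_\rho)\|_F \le |s_{\rho,\rho'}-1|\,\varepsilon_x + |a_{\rho,\rho'}|\,\varepsilon_0 .
\]
It then suffices to show $|s_{\rho,\rho'}-1|\le C_1\Delta$ and $|a_{\rho,\rho'}|\le C_2\Delta$, with constants depending only on $\rho_{\min}$ and $\gamma$. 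We can then take $C_{\rho_{\min},\gamma}=\max(C_1,C_2)$.

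\textbf{Bound on $s-1$.} Since $\rho\le 1-\gamma$ and $\Delta\le\gamma/2$, we have $1-\rho'=1-\rho-\Delta\ge \gamma/2>0$, so every square root is well defined. Write $u=(1-\rho')/(1-\rho)\in[1/2,1]$. Then
\[
1-s_{\rho,\rho'}=\frac{1-u}{1+\sqrt u}\le 1-u=\frac{\Delta}{1-\rho}\le\frac{\Delta}{\gamma},
\]
so $C_1=1/\gamma$ works.

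\textbf{Bound on $a$.} Decompose
\[
a_{\rho,\rho'}=(\sqrt{\rho'}-\sqrt{\rho})+\sqrt{\rho}\,(1-s_{\rho,\rho'}).
\]
The first term equals $\Delta/(\sqrt{\rho'}+\sqrt{\rho})\le \Delta/(2\sqrt{\rho_{\min}})$, using $\rho'\ge\rho\ge\rho_{\min}$. By the previous step, the second term is at most $\sqrt{\rho}\,\Delta/\gamma\le\Delta/\gamma$. Both terms are nonnegative, so
\[
|a_{\rho,\rho'}|\le \Delta\left(\frac{1}{2\sqrt{\rho_{\min}}}+\frac{1}{\gamma}\right),
\]
which gives $C_2$.

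The only delicate point is keeping the constants uniform over the admissible range. Two facts do this: $\rho_{\min}>0$ prevents blow-up of $1/\sqrt{\rho}$, and $\Delta\le\gamma/2$ keeps $1-\rho'$ bounded away from zero. With both in hand, everything reduces to elementary mean-value-type estimates on square roots, and the corollary follows with $C_{\rho_{\min},\gamma}=\frac{1}{2\sqrt{\rho_{\min}}}+\frac{1}{\gamma}$.
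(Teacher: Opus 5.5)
Your proof is correct and follows the same route as the paper: start from the exact projected identity of Lemma~\ref{lem:bandwise_ddim_sensitivity}, apply the triangle inequality, and bound $|s_{\rho,\rho'}-1|$ and $|a_{\rho,\rho'}|$ by a constant times $\Delta$ using $1-\rho'\ge\gamma/2$ and $\rho\ge\rho_{\min}$. The only difference is in how those coefficient bounds are obtained. The paper argues abstractly from smoothness and the mean-value theorem, using $s_{\rho,\rho}=1$ and $a_{\rho,\rho}=0$. You instead rationalize the square-root differences, which yields the explicit constant $C_{\rho_{\min},\gamma}=\frac{1}{2\sqrt{\rho_{\min}}}+\frac{1}{\gamma}$.
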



\textbf{Interpretation.}
Inactive bands are perturbed only by $O(\Delta)$, so larger strides are less risky when high-frequency bands remain low-energy and slowly varying; once a high-frequency band carries non-negligible energy or changes rapidly, the bound becomes loose and finer steps are needed. The constants in Eq.~\eqref{eq:inactive_band_bound} also blow up as $\rho\to 1$, motivating conservative steps near the end of sampling. Together this calls for a band-activity gate keyed on energy and its temporal variation. In practice, StrideDiffusion uses DDIM for fine steps and a second-order multistep solver for larger leaps; the latter introduces a history correction that is analyzed separately in Appendix~\ref{app:dpm_solver_extension}.

\subsection{Spectral-Guided Fast Inference}
StrideDiffusion turns the spectral observation of Section~\ref{sec:redundancy_analysis} into a sampler. A band-gating module monitors which frequency bands are active at each denoising step, and an adaptive scheduler maps the active set to a stride: fine when high-frequency bands are active, coarse when only low-frequency components remain. Figure~\ref{fig:framework} shows the overall design.


\begin{figure*}
\centering
\setlength{\abovecaptionskip}{0pt}
\setlength{\belowcaptionskip}{-2pt}
    \includegraphics[width=0.93\textwidth]{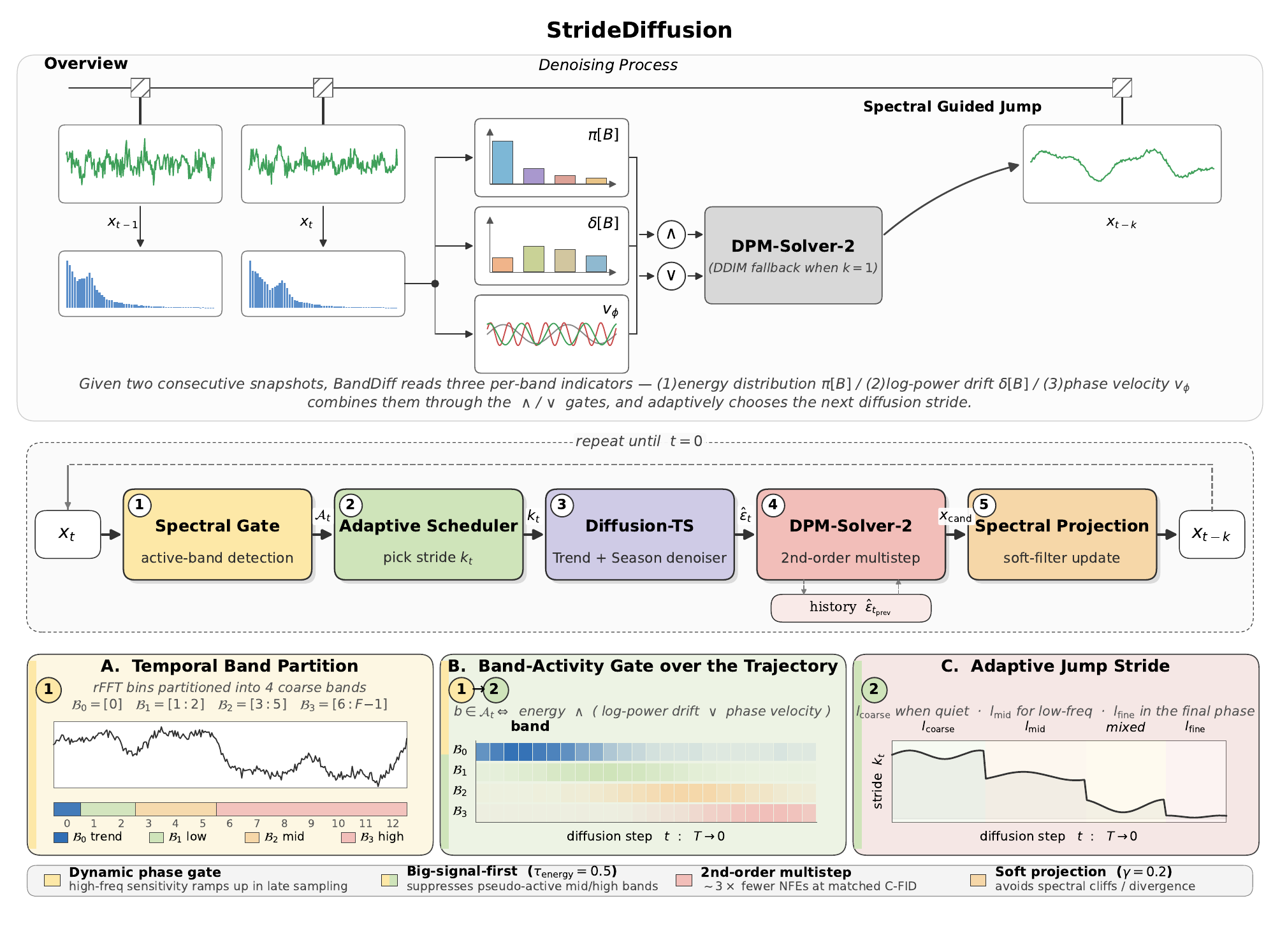}
    \caption{Overview of the proposed \textbf{StrideDiffusion} for spectral-guided fast inference of time series diffusion model. Given two consecutive time series snapshots (in green), StrideDiffusion examines the frequency-domain band activity (in blue) through energy distribution, magnitude drift, and phase velocity, and adaptively determines the next diffusion step. When bands are all active, StrideDiffusion performs fine updates with micro step size; when high-frequency bands become settled, it leaps forward with coarse step size to accelerate inference.}    
    \label{fig:framework}
    \vspace{-0.5em}
\end{figure*}

\subsubsection{Band Gating} 
We propose a spectral-guided gating module to monitor band activity throughout the diffusion process. Given two multivariate time series snapshots at consecutive diffusion steps, $x_{t}$ and $x_{t-1}$, and a partition of the discrete frequency axis into bands, this module identifies the set of bands with active dynamics. Specifically, we map a temporal time series signal $x_t$ to frequency domain using a real FFT:
\begin{equation}
    X_t = \texttt{rFFT}(x_t) \in \mathbb{C}^{F\times d},
\end{equation}
where $F$ represents the total number of frequency components and $d$ represents the feature dimension of the temporal signals. For each band $B\in\mathcal{B}$, we denote its inclusive frequency index set $\mathcal{I}_B \subseteq \{0,\ldots,F-1\}$. For each frequency component, we define its power as
\(
    p_t[f] = \|X_t[f,:]\|_2^2 .
\)
The power within band $B$ can then be computed as:
\begin{equation}
    P_t[B] = \frac{1}{\vert \mathcal{I}_B\vert}\sum_{f\in\mathcal{I}_B} p_t[f].
\end{equation}

\textbf{Energy Fraction.} Let $\mathcal{B}$ represents the set of bands, the energy proportion of each band can be summarized as:
\begin{equation}
    \pi_t[B] = \frac{P_t[B]}{\sum_{B'\in\mathcal{B}} P_t[B']}.
\end{equation}

\textbf{Magnitude Drift.} The magnitude drift over two consecutive denoising steps reflects the evolving trend of the time series samples. We compute the power change over two steps as:
\begin{equation}
    \delta_t[B] = 
    \left|
    \log\!\left(P_t[B]+\varepsilon\right)
    -
    \log\!\left(P_{t-1}[B]+\varepsilon\right)
    \right|,
\end{equation}
where $\varepsilon>0$ is a small stabilizer.

\textbf{Phase Velocity.} In addition to the magnitude drift, the phase change over consecutive denoising steps also indicates if a certain band is active or being activated. We compute the phase change for frequency $f$ from $t\rightarrow t-1$ as:
\begin{equation}
    \phi_t[f] =
    \texttt{angle}
    \left(
    \sum_{j=1}^{d}
    X_{t-1}[f,j]\cdot\overline{X_t[f,j]}
    \right),
\end{equation}
where $\overline{X_t[f,j]}$ represents the conjugate of $X_t[f,j]$. Let
\(\rho_t=\bar{\alpha}_t\),
    and
    \(h_t=\rho_{t-1}-\rho_t\).
Thus, the phase velocity within each band can be computed as a power-weighted average of frequency components within the band:
\begin{equation}
    v_{\phi,t}[B]
    =
    \frac{1}{h_t}
    \sqrt{
    \frac{
    \sum_{f\in\mathcal{I}_B} (p_{t-1}[f]+\varepsilon)\cdot \phi_t[f]^2
    }{
    \sum_{f\in\mathcal{I}_B}(p_{t-1}[f]+\varepsilon)
    }
    } .
\end{equation}

\textbf{Dual Gating.} A frequency band $B$ is determined as active if it meets the criteria imposed by both a \textit{power gate} and a \textit{dynamics gate}. The \textit{power gate} decides a band as active if the band carries a non-negligible proportion of the total signal power. That is, the relative power proportion of the band, $\pi_t[B]$, exceeds the threshold $\tau_{\mathrm{energy}}$. The \textit{dynamics gate} decides a band as active if the band exhibits sufficient temporal variation, expressed as either a magnitude drift $\delta_t[B]$ above the threshold $\tau_{\mathrm{mag}}$, or a phase velocity above the threshold $\tau_{\mathrm{phase}}$. Formally, the dual gating can be expressed as:
\begin{equation}
    \mathcal{A}_t
    =
    \left\{
    B\in\mathcal{B}:
    \pi_t[B] \geqslant \tau_{\mathrm{energy}}
    \land
    \left(
    \delta_t[B] \geqslant \tau_{\mathrm{mag}}
    \lor
    v_{\phi,t}[B] \geqslant \tau_{\mathrm{phase}}
    \right)
    \right\}.
    \label{eq:dual_gate}
\end{equation}
\textbf{Connection to the stability condition.} Corollary~\ref{cor:inactive_band_stability} shows that bandwise stability is guaranteed when a band is inactive in both the current sample \(x_\rho\) and the predicted clean signal \(\hat{x}_0(x_\rho,\rho)\). The gate in Eq.~\eqref{eq:dual_gate} uses trajectory-level statistics from consecutive samples as an efficient proxy for this condition. Incorporating \(\hat{x}_0\)-band energy into the gate is a natural extension, but would change the stride-decision rule; we therefore leave it to future work and discuss this point further in Appendix~\ref{app:discussion_xrho_x0}.

\subsubsection{Sampling with Adaptive Step Size}
\label{sec:adaptive_sample}

The band gating module returns the active set $\mathcal{A}_t$, based on which we adaptively adjust the denoising step size. When only coarse structures remain active, the sampler advances by skipping multiple steps; when high-frequency bands emerge, the schedule switches to micro-step updates. 

Specifically, we define an adaptive leap schedule with three step sizes $\{l_{\text{coarse}}, l_{\text{mid}}, l_{\text{fine}}\}$, each corresponding to a different type of spectral activity. Let $B_{\mathrm{DC}}$ denote the DC band and $\mathcal{B}_{\mathrm{low}}\subseteq\mathcal{B}$ the set of low-frequency bands. The next denoising step is then
\begin{equation}
    t_{\text{next}} = t -
    \begin{cases}
        l_{\text{coarse}}, & \mathcal{A}_t \subseteq \{B_{\mathrm{DC}}\}, \\
        l_{\text{mid}},    & \mathcal{A}_t \subseteq \mathcal{B}_{\mathrm{low}}, \\
        l_{\text{fine}},   & \text{otherwise}.
    \end{cases}
    \label{eq:adaptive_stride}
\end{equation}

\paragraph{Late-step Micro Override.}
The Corollary~\ref{cor:inactive_band_stability} stability bound holds on $\rho\in[\rho_{\min},1-\gamma]$, with constants that grow as $\rho\to 1$, i.e., as sampling approaches the clean-data regime ($t\to 0$). To avoid amplified errors in this regime, we override the band-driven schedule with $l_{\text{fine}}$ whenever $t\le K_{\mathrm{micro}}$, regardless of $\mathcal{A}_t$:
\begin{equation}
    t_{\text{next}} = t - l_{\text{fine}}\quad\text{if}\quad t\le K_{\mathrm{micro}}.
    \label{eq:late_micro}
\end{equation}

The window length $K_{\mathrm{micro}}$ is a hyperparameter; sensitivity to its value is reported in Section~\ref{sec:sensitivity}. The leap schedule is heuristic and a learned policy on the same spectral statistics is left for future work (Appendix~\ref{sec:limitations}).


To transit from $x_t$ to $x_{t_{\text{next}}}$ we apply the deterministic single-step affine update of Section~\ref{sec:theory_spectral_stability} (Eq.~\eqref{eq:single_step_affine}): a DDIM step when the stride equals $l_{\text{fine}}$ or no history is available, and a DPM-Solver-2 multistep update otherwise. 
With \(\Delta_t=\bar{\alpha}_{t_{\mathrm{next}}}-\bar{\alpha}_t>0\), Lemma~\ref{lem:bandwise_ddim_sensitivity} shows that deterministic DDIM perturbs inactive bands only to first order in \(\Delta_t\). 
For DPM-Solver-2, Appendix~\ref{app:dpm_solver_extension} shows that the same leading structure is accompanied by a history correction proportional to the change in the noise predictor. 
The leap-vs-micro discrepancy is analyzed separately in Appendix~\ref{sec:theory_local_error}, under local regularity assumptions on the deterministic update map.

\paragraph{Local-error Reading of The Gate.}
The bandwise stability of Lemma~\ref{lem:bandwise_ddim_sensitivity} extends to a leap-vs-micro-step error bound that is second order in the leap size $H$, with band-restricted constants determined by
the local rate of change of the trajectory and the denoising field (Theorem~\ref{thm:spectral_guided_leap_error}, Appendix~\ref{sec:theory_local_error}). The three
gating quantities used in Section~\ref{sec:adaptive_sample} provide observable finite-difference proxies for local spectral variation of the reverse trajectory (Proposition~\ref{prop:spectral_activity_indicator}). The spectral gate can therefore be read as an adaptive local-error controller: large strides are safe in spectrally stable regions, while fine steps are required once high-frequency dynamics emerge.

\section{Experiments and Results}
\label{sec:experiments}

\subsection{Experiment Setup and Evaluation Protocols}
\label{sec:exp_setup}

\noindent\textbf{Computing Setup and Baselines.} All inference timings are measured on the same machine: 24 vCPUs / 48 threads (Intel Xeon Silver 4310 @ 2.10\,GHz), 256\,GiB RAM, and a single NVIDIA A5000 GPU (24\,GB VRAM). We compare StrideDiffusion against representative diffusion-based generators for time series: \textbf{Diffusion-TS}~\cite{yuandiffusion}, an $\hat{x}_0$-predictor with a Fourier-augmented loss, evaluated in both its vanilla form and its fast-sampler variant (\textbf{Diffusion-TS-fast}); \textbf{DiffWave (Fast)}~\cite{kongdiffwave}, a widely-used diffusion-based waveform model with a fast sampler; and \textbf{DiffTime}, an unconditional CSDI~\cite{coletta2023constrained} variant that pairs diffusion with an ODE sampler.

\noindent\textbf{Datasets and Metrics.} We use four real-world datasets (Stocks, ETTh, Energy, fMRI) and two simulated ones (Sines, MuJoCo). We report four standard metrics: Context-FID for fidelity, Correlational score for temporal structure, Discriminative score for real-vs-synthetic separability, and Predictive score for downstream utility.

\begin{table*}[!h]
\centering
\caption{Unconditional generation across Datasets. \textbf{Bold}: best; \underline{underline}: second best. Sampling times for Diffwave (Fast) and DiffTime are N/A due to prohibitive inference cost.}
\label{tab:results-unconditional-generation}
\footnotesize
\renewcommand\arraystretch{0.79}
\setlength{\tabcolsep}{0.6mm}\begin{tabular}{@{}cccccccc@{}}
\toprule
\textbf{Metric} & \textbf{Method} & \textbf{Sines} & \textbf{Stocks} & \textbf{ETTh} & \textbf{MuJoCo} & \textbf{Energy} & \textbf{fMRI} \\
\midrule
\multirow{6}{*}{\begin{tabular}[c]{@{}c@{}}Context-FID\\Score ($\downarrow$)\end{tabular}}
& Diffusion-TS         & 0.011±.002 & \underline{0.182±.026}    & \textbf{0.136±.009}    & \textbf{0.016±.002}    & 0.102±.011             & \underline{0.109±.005} \\
& Diffusion-TS (Fast)  & 0.016±.001             & 0.195±.032 & 0.227±.005 & 0.031±.003             & \underline{0.091±.006}    & \underline{0.114±.003} \\
& Diffusion-TS (Fast-200) & 0.016±.001          & 0.195±.025 & 0.234±.011             & 0.030±.002             & \underline{0.100±.014} & 0.132±.002 \\
& Diffwave  (Fast)     & 0.014±.002             & 0.232±.032             & 0.873±.061             & 0.393±.041             & 1.031±.131             & 0.244±.018 \\
& DiffTime             & \textbf{0.006±.001}    & 0.236±.074             & 0.299±.044             & 0.188±.028             & 0.279±.045             & 0.340±.015 \\
& Ours                 &  \cellcolor{gray!20} \underline{0.007±.001}              & \cellcolor{gray!20} \textbf{0.110±.022}             & \cellcolor{gray!20} \underline{0.147±.011}             & \cellcolor{gray!20} \underline{0.029±.007} & \cellcolor{gray!20} \textbf{0.087±.034}            & \cellcolor{gray!20} \textbf{0.095±.005} \\
\midrule
\multirow{6}{*}{\begin{tabular}[c]{@{}c@{}}Correlational\\Score ($\downarrow$)\end{tabular}}
& Diffusion-TS         & 0.018±.004 & \underline{0.009±.003} & \textbf{0.054±.003}    & 0.209±.016             & 0.943±.114             & 1.283±.096 \\
& Diffusion-TS (Fast)  & 0.019±.003             & 0.011±.005             & 0.068±.003             & \underline{0.199±.015} & 0.921±.076             & 1.107±.055 \\
& Diffusion-TS (Fast-200) & \underline{0.017±.003} & 0.011±.005             & 0.072±.006             & 0.204±.012             & \underline{0.909±.046} & \underline{1.099±.066} \\
& Diffwave  (Fast)     & 0.022±.005             & 0.030±.020             & 0.175±.006             & 0.579±.018             & 5.001±.154             & 3.927±.049 \\
& DiffTime             & 0.017±.004    & \textbf{0.006±.002}    & \underline{0.067±.005} & 0.218±.031             & 1.158±.095             & 1.501±.048 \\
& Ours                 & \cellcolor{gray!20} \textbf{0.016±.002} & 0.027±.003             & 0.079±.004             & \cellcolor{gray!20} \textbf{0.154±.011}    & \cellcolor{gray!20} \textbf{0.629±.093}    & \cellcolor{gray!20} \textbf{0.982±.046} \\
\midrule
\multirow{6}{*}{\begin{tabular}[c]{@{}c@{}}Discriminative\\Score ($\downarrow$)\end{tabular}}
& Diffusion-TS         & \textbf{0.011±.008}    & \underline{0.080±.018}    & \textbf{0.066±.006}    & \textbf{0.012±.004}    & \underline{0.122±.006}    & 0.153±.012 \\
& Diffusion-TS (Fast)  & 0.018±.006             & 0.109±.014             & 0.095±.006             & \underline{0.027±.006} & \underline{0.142±.007} & \underline{0.116±.012} \\
& Diffusion-TS (Fast-200) & 0.017±.007          & 0.109±.016             & 0.094±.006 & \underline{0.027±.005} & 0.147±.003             & 0.128±.010 \\
& Diffwave  (Fast)     & 0.017±.008             & 0.232±.061             & 0.190±.008             & 0.203±.096             & 0.493±.004             & 0.402±.029 \\
& DiffTime             & \underline{0.013±.006} & \underline{0.097±.016} & 0.100±.007             & 0.154±.045             & 0.445±.004             & 0.245±.051 \\
& Ours                 & 0.017±.004             & \cellcolor{gray!20} \textbf{0.065±.014}             & \cellcolor{gray!20} \underline{0.081±.004}             & \cellcolor{gray!20} \underline{0.027±.004}            & \cellcolor{gray!20} \textbf{0.121±.003}             & \cellcolor{gray!20} \textbf{0.102±.011} \\
\midrule
\multirow{6}{*}{\begin{tabular}[c]{@{}c@{}}Predictive\\Score ($\downarrow$)\end{tabular}}
& Diffusion-TS         & \textbf{0.093±.000}    & \textbf{0.037±.000}    & \textbf{0.121±.002}    & \underline{0.008±.000} & \textbf{0.251±.000} & \underline{0.100±.000} \\
& Diffusion-TS (Fast)  & \underline{0.094±.001} & \textbf{0.037±.000}    & \underline{0.122±.002} & \textbf{0.007±.000}    & \textbf{0.251±.000} & \textbf{0.099±.000} \\
& Diffusion-TS (Fast-200) & \underline{0.094±.000} & \underline{0.038±.000} & \underline{0.122±.002} & \textbf{0.007±.000}  & \textbf{0.251±.000} & \textbf{0.099±.000} \\
& Diffwave  (Fast)     & \textbf{0.093±.000}    & 0.047±.000             & 0.130±.001             & 0.013±.000             & \textbf{0.251±.000} & 0.101±.000 \\
& DiffTime             & \textbf{0.093±.000}    & \underline{0.038±.001} & \textbf{0.121±.004}    & 0.010±.001             & \underline{0.252±.000}             & \underline{0.100±.000} \\
& Ours                 & \cellcolor{gray!20} \underline{0.094±.000} & \cellcolor{gray!20} \textbf{0.037±.000}    & \cellcolor{gray!20} \underline{0.122±.002}             & \cellcolor{gray!20} \textbf{0.007±.001}    & \cellcolor{gray!20} \textbf{0.251±.000}    & \cellcolor{gray!20} \textbf{0.099±.000} \\
\midrule
\multirow{7}{*}{\begin{tabular}[c]{@{}c@{}}Time (s)\\($\downarrow$)\end{tabular}}
& Diffusion-TS      & 84.07 & 40.48 & 179.71 & 214.30 & 696.02 & 461.75 \\
     & Diffusion-TS (Fast) & 86.11 & 40.42 & 180.49 & 217.95 & 687.68 & 458.73 \\
     & Diffusion-TS (Fast-200) & 33.69 & 16.49 & 73.16 & 43.87 & 136.75 & 91.93 \\
& Diffwave  (Fast)     & N/A                     & N/A                    & N/A                    & N/A                    & N/A                    & N/A \\
& DiffTime             & N/A                     & N/A                    & N/A                    & N/A                    & N/A                    & N/A \\
& Ours                 & \cellcolor{blue!10} \textbf{10.23}           & \cellcolor{blue!10}  \textbf{5.09}          & \cellcolor{blue!10} \textbf{9.52}         & \cellcolor{blue!10} \textbf{19.63}         & \cellcolor{blue!10} \textbf{50.56}         & \cellcolor{blue!10} \textbf{38.66} \\
& $\Delta$ Time      & \cellcolor{blue!10} \textbf{8.22 / 3.29}           & \cellcolor{blue!10} \textbf{7.95 / 3.24}         & \cellcolor{blue!10} \textbf{18.88 / 7.68}         & \cellcolor{blue!10} \textbf{10.92 / 2.24}         & \cellcolor{blue!10} \textbf{13.77 / 2.70}         & \cellcolor{blue!10} \textbf{11.94 / 2.38} \\
\bottomrule
\end{tabular}
\end{table*}

\subsection{Unconditional Generation}

At sequence length 24, Table~\ref{tab:results-unconditional-generation} shows StrideDiffusion is up to $18.88\times$ faster and ranks first or second on most metrics; the Pareto plot in Appendix~\ref{sec:pareto_appendix} further confirms it dominates DDIM-$N$ ($N\!\in\!\{10,\ldots,200\}$) on five of six datasets in both Context-FID and wallclock.

\subsection{Conditional Generation}
\label{sec:cond_appendix}

Under the Diffusion-TS protocol with sequence length 48, we evaluate imputation across missing ratios $\{0.1,0.25,0.5,0.75,0.9\}$ (Table~\ref{tab:cond_impute}) and forecasting across horizons $\{6,12,24,36\}$ (Table~\ref{tab:cond_forecast}, Appendix~\ref{sec:cond_appendix}) on Stocks, ETTh, Energy, fMRI. MSE is on par with Diffusion-TS in most cells while inference is $5\text{-}10\times$ faster on average. An ablation with/without Langevin refinement against Diffusion-TS-fast appears in Table~\ref{tab:cond_impute_lv} (Appendix~\ref{sec:cond_appendix}).

\begin{table}[!h]
  \centering
  \caption{Conditional generation results for imputation across missing ratios. Mean across seeds.}
  \label{tab:cond_impute}
  \resizebox{\linewidth}{!}{%
  \renewcommand\arraystretch{0.70}
\begin{tabular}{clcccccccc}
    \toprule
    \multirow{2}{*}{Missing Ratio} & \multirow{2}{*}{Method} & \multicolumn{2}{c}{Stocks} & \multicolumn{2}{c}{ETTh} & \multicolumn{2}{c}{Energy} & \multicolumn{2}{c}{fMRI} \\
    \cmidrule(lr){3-4} \cmidrule(lr){5-6} \cmidrule(lr){7-8} \cmidrule(lr){9-10}
     &  & MSE $\downarrow$ & Time(s) $\downarrow$ & MSE $\downarrow$ & Time(s) $\downarrow$ & MSE $\downarrow$ & Time(s) $\downarrow$ & MSE $\downarrow$ & Time(s) $\downarrow$ \\
    \midrule
    \multirow{2}{*}{0.1} & Diffusion-TS & 0.010 & 56.97 & 0.002 & 214.69 & 0.011 & 716.59 & 0.017 & 436.07 \\
     & Ours  & \textbf{0.008} & \textbf{25.44} & \textbf{0.002} & \textbf{18.91} & \textbf{0.010} & \textbf{120.78} & \textbf{0.015} & \textbf{96.22} \\
    \midrule
    \multirow{2}{*}{0.25} & Diffusion-TS & 0.010 & 55.93 & 0.002 & 210.79 & 0.013 & 704.15 & 0.018 & 418.51 \\
     & Ours & \textbf{0.008} & \textbf{15.47} & 0.003 & \textbf{11.75} & \textbf{0.013} & \textbf{119.17} & \textbf{0.015} & \textbf{92.91} \\
    \midrule
    \multirow{2}{*}{0.5} & Diffusion-TS & 0.011 & 56.24 & 0.003 & 217.75 & 0.016 & 699.99 & 0.020 & 420.23 \\
     & Ours & \textbf{0.008} & \textbf{8.33} & \textbf{0.003} & \textbf{19.06} & \textbf{0.016} & \textbf{121.43} & \textbf{0.017} & \textbf{86.08} \\
    \midrule
    \multirow{2}{*}{0.75} & Diffusion-TS & 0.012 & 56.46 & 0.004 & 219.53 & 0.019 & 681.06 & 0.022 & 420.89 \\
     & Ours & \textbf{0.008} & \textbf{7.30} & 0.005 & \textbf{18.99} & \textbf{0.019} & \textbf{120.73} & \textbf{0.021} & \textbf{78.07} \\
    \midrule
    \multirow{2}{*}{0.9} & Diffusion-TS & 0.012 & 55.44 & 0.004 & 213.46 & 0.019 & 676.07 & 0.022 & 418.91 \\
     & Ours & \textbf{0.008} & \textbf{7.61 }& 0.005 & \textbf{17.65} & \textbf{0.019} & \textbf{129.11} & \textbf{0.021} & \textbf{77.04} \\
 \midrule
\multicolumn{2}{c}{\textbf{Avg. Speedup vs Diffusion-TS}}    
& \multicolumn{2}{c}{\cellcolor{blue!10} \textbf{5.53$\times$}}                   
& \multicolumn{2}{c}{\cellcolor{blue!10} \textbf{12.87$\times$}}             
& \multicolumn{2}{c}{\cellcolor{blue!10} \textbf{5.70$\times$}}             
& \multicolumn{2}{c}{\cellcolor{blue!10} \textbf{4.95$\times$}}             \\
 \bottomrule
  \end{tabular}}
\end{table}


\subsection{Ablation Study}
\label{sec:ablation}

We keep the per-dataset balanced configuration as \textbf{Full} and ablate one component at a time: comparing four variants (Table~\ref{tab:ablation_A}): \emph{Vanilla DDPM} collapses all strides ($l_{\text{coarse}}=l_{\text{mid}}=l_{\text{fine}}=1$); \emph{w/o gate} drops Eq.~\eqref{eq:dual_gate} and forces $l_{\text{coarse}}$ at every step; \emph{w/o late-step micro} sets $K_{\mathrm{micro}}=0$; \emph{w/o $\tau_{\mathrm{energy}}$} disables the power gate. Full delivers $13$-$21\times$ speedup with on-par Context-FID; removing the gate collapses quality (C-FID up to $\sim 38\times$ worse on \textsc{sines}), while removing the energy threshold roughly halves the speedup and still degrades quality on most datasets. Together this identifies the energy-driven gate as the indispensable component.

\begin{table}[!h]
\centering
\caption{Component ablation on unconditional generation. Speedup is relative to vanilla full-$T$ DDPM; \textbf{bold} marks the best value per column (vanilla excluded from speedup comparison). See Appendix~\ref{sec:ablation_appendix} for full discussion.}
\label{tab:ablation_A}
\resizebox{\textwidth}{!}{%
\renewcommand\arraystretch{0.87}
\begin{tabular}{lrrrrrrrr}
\toprule
\multirow{2}{*}{Variant} & \multicolumn{2}{c}{\textsc{sines}} & \multicolumn{2}{c}{\textsc{stocks}} & \multicolumn{2}{c}{\textsc{energy}} & \multicolumn{2}{c}{\textsc{fmri}} \\
\cmidrule(lr){2-3} \cmidrule(lr){4-5} \cmidrule(lr){6-7} \cmidrule(lr){8-9}
 & Speedup $\uparrow$ & C-FID $\downarrow$ & Speedup $\uparrow$ & C-FID $\downarrow$ & Speedup $\uparrow$ & C-FID $\downarrow$ & Speedup $\uparrow$ & C-FID $\downarrow$ \\
\midrule
\textbf{Full (ours)} & 14.0$\times$ & \textbf{0.0086} & 13.2$\times$ & 0.1275 & 20.9$\times$ & 0.0883 & 14.4$\times$ & \textbf{0.0937} \\
Vanilla DDPM (no jump) & 1.0$\times$ & 0.0161 & 1.0$\times$ & 0.2398 & 1.0$\times$ & 0.0817 & 1.0$\times$ & 0.1154 \\
w/o gate & \textbf{41.5$\times$} & 0.3263 & \textbf{29.9$\times$} & 0.4064 & \textbf{42.1$\times$} & 0.1231 & 13.5$\times$ & 0.0937 \\
w/o late-step micro & 21.1$\times$ & 0.0099 & 13.3$\times$ & \textbf{0.1270} & 23.8$\times$ & \textbf{0.0813} & \textbf{16.3$\times$} & 0.1039 \\
w/o $\tau_{\mathrm{energy}}$ & 3.4$\times$ & 0.0103 & 3.0$\times$ & 0.1959 & 10.2$\times$ & 0.0942 & 9.3$\times$ & 0.1548 \\
\bottomrule
\end{tabular}}
\end{table}

\subsection{Hyperparameter Sensitivity Analysis}
\label{sec:sensitivity}

We perform 1-D sweeps on \textsc{sines} around the per-dataset balanced default (Fig.~\ref{fig:hyper}); the same protocol on the other datasets is reported in Appendix~\ref{sec:sensitivity_appendix}. $l_{\mathrm{coarse}}\!\in\!\{10,20,30,50,100\}$ produces a clean U-shape in C-FID with the minimum at the default, confirming a genuine sweet spot. The late-step micro window $K_{\mathrm{micro}}$ shows a threshold-like effect: C-FID is flat for windows $\le 12$ and drops sharply at $20$ (the default), at $\sim 2\times$ sampling cost. In contrast, $\tau_{\mathrm{phase}}$ varied over a $16\times$ range alters C-FID by only $0.024\%$, so gating is effectively dominated by energy and magnitude drift. These patterns hold across datasets: the sampler is robust to $\tau_{\mathrm{phase}}$ and only mildly sensitive to the remaining two within recommended ranges.

\begin{figure}[!t]
\centering
\includegraphics[width=0.81\textwidth]{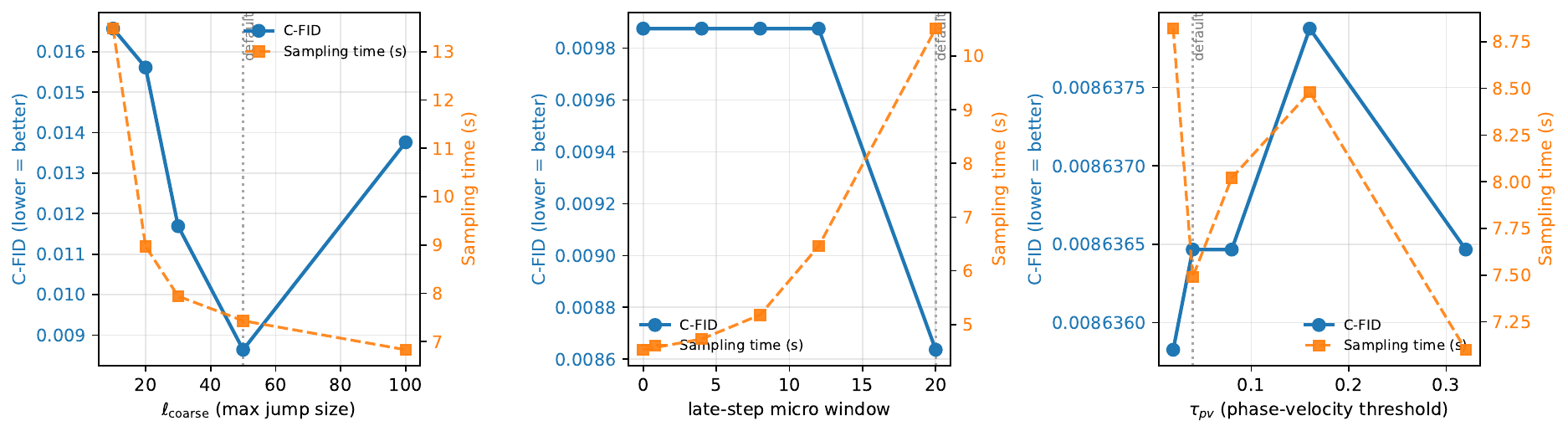}
\caption{Hyperparameter sensitivity on \textsc{sines}. In each row: $l_{\mathrm{coarse}}$ shows a U-shape with the minimum at the default (dotted line), the late-step micro window $K_{\mathrm{micro}}$ has a threshold-like effect, and $\tau_{\mathrm{phase}}$ has no measurable effect ($0.024\%$ C-FID range on \textsc{sines}).}
\label{fig:hyper}

\end{figure}

\subsection{Case Study}
\label{sec:case_study}

\paragraph{Sampler behavior.}
The per-dataset NFE budget ($14$-$66$ vs.\ $T\!\in\!\{500,1000\}$) emerges automatically from the gate (Fig.~\ref{fig:nfe_distribution}): \textsc{etth} stays in \emph{no\_active} throughout ($14.1$ NFE, $\sim 36\times$ compression), while \textsc{sines}/\textsc{energy} open with a \emph{high\_active} phase before joining the \emph{no\_active}$\to$\emph{low\_only}$\to$\emph{late\_micro} tail shared by the rest.

\begin{figure}[!h]
\centering
\includegraphics[width=\linewidth]{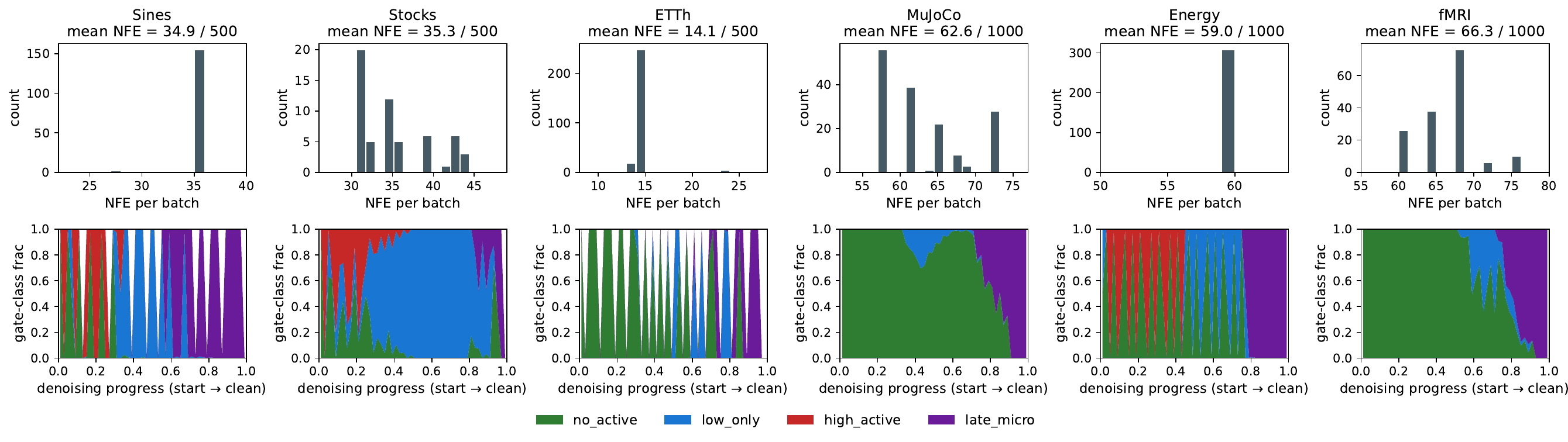}
\caption{Sampler behavior across datasets. \textbf{Top:} per-batch NFE histogram; the dashed vertical line marks the vanilla DDPM budget $T{=}1000$. \textbf{Bottom:} fraction of batches in each gate class (\emph{no\_active} / \emph{low\_only} / \emph{high\_active} / \emph{late\_micro}) at each denoising step ($T{-}t$ on the x-axis).}
\label{fig:nfe_distribution}
\end{figure}

\paragraph{Imputation visualization.}
\label{sec:case_study_qual}
Figure~\ref{fig:cond_qual_stocks} shows the imputation results at missing ratio $0.5$ on a single \textsc{stocks} sequence: our band-aware sampler (red) reconstructs the held-out positions about as accurately as Diffusion-TS DDPM (gray dashed), while running $\sim\!7\times$ faster (Table~\ref{tab:cond_impute}). Forecasting visualization and the full per-(missing-ratio, horizon) sweep across all four datasets are provided in Appendix~\ref{sec:cond_qual_appendix}.

\begin{figure}[!h]
\centering
\includegraphics[width=0.91\textwidth]{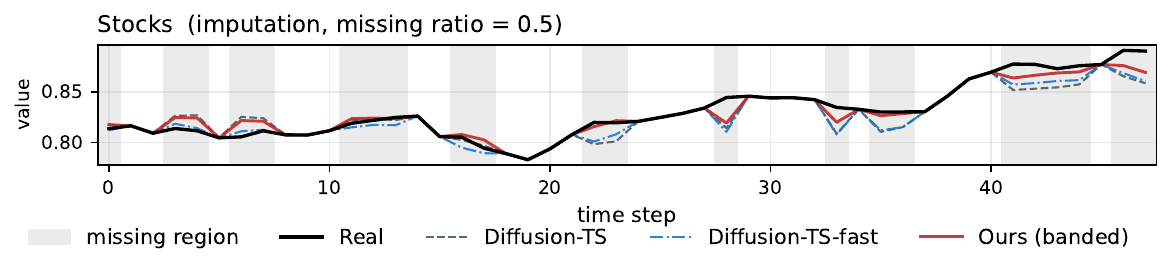}
\caption{Imputation visualization on \textsc{stocks} at missing ratio $0.5$. Black: ground truth. Light gray shading: held-out positions. Gray dashed: full-$T$ Diffusion-TS DDPM. Blue dash-dot: Diffusion-TS-fast (DDIM-200). Red solid: our band-aware sampler.}
\label{fig:cond_qual_stocks}
\end{figure}

\vspace{-0.7em}
\section{Conclusions}

We present StrideDiffusion, a training-free adaptive band-aware sampler for efficient time-series diffusion generation. Motivated by the coarse-to-fine spectral progression we observe in the reverse trajectory, we prove a bandwise stability bound for deterministic single-step affine updates that justifies band activity as a local step-size indicator. Our stability analysis supports this band-aware scheduling, and experiments show that it substantially reduces sampling cost while keeping generation quality across unconditional and conditional tasks. Additionally, learning the stride policy from the same spectral statistics is a natural extension (Appendix~\ref{sec:limitations}).

\begin{ack}
We would like to acknowledge the ARC Centre of Excellence for Automated Decision-Making and Society (CE200100005). We acknowledge the resources and services from the National Computational Infrastructure (NCI), which is supported by the Australian Government. This research is also partially supported by the ARC Training Centre for Whole Life Design of Carbon Neutral Infrastructure (IC230100015).
\end{ack}

\bibliographystyle{plainnat}
\bibliography{reference}







\newpage
\appendix

\section{Extended Related Work}
\label{app:related_work_extended}

\subsection{Diffusion Models for Time Series Generation}
Diffusion models have emerged as a prominent generative paradigm, demonstrating remarkable success in image and speech synthesis. Unlike traditional approaches based on adversarial training or latent-variable inference, diffusion models directly learn the data distribution by reversing a forward noise process. This formulation enables fine-grained control over the generation trajectory, enabling the model to construct sequential outputs through a stepwise refinement.

Such a mechanism has recently been explored and extended in the context of time series data~\cite{tashiro2021csdi, yuandiffusion, rasul2021autoregressivedenoisingdiffusionmodels}, where the iterative denoising process naturally mirrors the sequential structure of temporal data. At each denoising step, the model refines the sequence by conditioning on the surrounding temporal context, typically through attention mechanisms or temporal convolutions. As a result, this stepwise correction enables the model to focus on fine-grained patterns within a local window, while the accumulation of context across iterations gradually integrates long-range temporal dependencies. More recent advancements have further extended the capabilities of diffusion models for time series generation. TimeDiff~\cite{shen2023nonautoregressiveconditionaldiffusionmodels} introduces a non-autoregressive conditional diffusion framework, effectively mitigating exposure bias and enhancing stability in long-horizon forecasting. Diffusion-TS~\cite{yuandiffusion} further integrates seasonal-trend decomposition and Fourier-based objectives to emphasize temporal structure, yielding a more interpretable generative process. RATD~\cite{liu2024retrieval} incorporates retrieval-based guidance into the denoising process, conditioning each step on similar historical trajectories to improve generation accuracy under complex temporal patterns. NsDiff~\cite{ye2025non} addresses non-stationary time series by explicitly modeling distributional shifts through time-conditioned score networks and variance-adaptive noise schedules, enabling the model to adapt to variational dynamics.

\subsection{Inference Acceleration for Diffusion Models}
Recent advancements have highlighted the remarkable generation quality of diffusion models across diverse content creation tasks. Nevertheless, the requirement for numerous iterative denoising steps results in slow inference, limiting their practical deployment.

Previous efforts on accelerating diffusion models often rely on additional resources for optimization or fine-tuning. Knowledge distillation techniques, including progressive distillation and consistency distillation, train student models to achieve comparable quality with fewer sampling steps~\cite{chen2025sana, wang2023videolcm, salimansprogressive}. Quantization methods enhance inference speed via reduced parameter precision, while preserving generation quality by quantization-aware training~\cite{quan1,quan2,quan3}. Distributed inference approaches enable parallel processing across devices to accelerate generation~\cite{dist1}.

In contrast, training-free methods offer immediate applicability to pre-trained models. DDIM reformulates diffusion as deterministic ODE sampling~\cite{ddim}, substantially reducing iterative denoising steps. Additional research has developed sophisticated ODE/SDE solvers that leverage higher-order numerical integration schemes for faster denoising~\cite{karras2022elucidating, lu2022dpm, lu2025dpm}. Recently, another promising direction reduces redundant computation through feature caching across timesteps. DeepCache~\cite{ma2024deepcache} caches high-level UNet features across adjacent timesteps to avoid redundant forward passes. Faster Diffusion~\cite{li2024faster} observes that encoder outputs remain stable during denoising and caches these features while only updating the decoder. TeaCache~\cite{liu2025timestep} leverages timestep embedding similarity to determine optimal caching schedules for attention blocks. While these methods have significantly improved efficiency in image, video, and text generation, their application to the time series diffusion model remains largely unexplored. Unlike visual data, where spatial structures allow for flexible caching strategies, time series exhibit strong temporal dependencies, making feature reuse far more challenging since each prediction depends on the full historical context.

\subsection{Spectrum-aware Diffusion Models}
Recent analyses have revealed a deep connection between diffusion models and spectral processing. \citet{rissanen2022generative} performed spectral analysis, revealing that diffusion models implicitly exhibit coarse-to-fine generation in the frequency domain. \citet{yang2023diffusion} formalized this observation by proving that DDPMs recover low-frequency components first, then progressively add high-frequency details.

Building upon these theoretical findings, subsequent works explicitly incorporate frequency operation to enhance generation quality. \citet{lee2025beta} discovered that substantial low-frequency changes occur early, while high-frequency refinements happen later. Their Beta Sampling method prioritizes these critical steps, achieving superior FID scores compared to uniform sampling. FreqPrior~\cite{yuan2025freqprior} addresses the variance decay problem through frequency-domain noise refinement, significantly improving motion dynamics and imaging details. DMFFT~\cite{yu2025dmfft} modulates frequency amplitude, phase, and bands in U-Net features to improve text-image alignment and structural quality.

Current methods demonstrate that explicit frequency control preserves fine-grained details while maintaining global coherence. However, existing works primarily focus on quality enhancement, with limited exploration of leveraging spectral information for acceleration. In this work, we investigate how frequency band activation patterns can be exploited to improve the inference efficiency of diffusion models.

\section{Spectral Stability of Single-Step Affine Updates}
\label{app:single_step_setup}

\subsection{Fourier Projectors and Band Energy}
\label{app:fourier_projectors}

Let $x_\rho \in \mathbb{R}^{L \times d}$ denote a multivariate time-series sample at noise level
$\rho=\bar{\alpha}_t$, where $L$ is the sequence length and $d$ is the number of channels. We use
$\rho$ instead of the discrete index $t$ to emphasize the dependence on the cumulative signal
coefficient. During reverse sampling, $\rho$ increases from a small value toward $1$.

Let $\mathcal{F}$ denote the discrete Fourier transform along the temporal dimension. For a
frequency band $B$, let $M_B$ be the diagonal mask that keeps frequencies in $B$ and removes
all others. We define the orthogonal Fourier projector
\begin{equation}
    \mathcal{P}_B x
    =
    \mathcal{F}^{-1} M_B \mathcal{F} x .
\end{equation}
The energy of $x_\rho$ in band $B$ is
\begin{equation}
    E_\rho[B]
    =
    \|\mathcal{P}_B x_\rho\|_F^2,
\end{equation}
and the corresponding relative band energy is
\begin{equation}
    \pi_\rho[B]
    =
    \frac{E_\rho[B]}{\sum_{B'}E_\rho[B']}.
\end{equation}

\subsection{Proof of Lemma~\ref{lem:bandwise_ddim_sensitivity}}
\label{app:proof_bandwise_ddim_sensitivity}

The deterministic DDIM update from $\rho$ to $\rho'=\rho+\Delta$ is
\begin{equation}
    x_{\rho'}
    =
    \sqrt{\rho'}\hat{x}_0(x_\rho,\rho)
    +
    \sqrt{1-\rho'}\hat{\epsilon}_\theta(x_\rho,\rho),
\end{equation}
where
\begin{equation}
    \hat{\epsilon}_\theta(x_\rho,\rho)
    =
    \frac{x_\rho-\sqrt{\rho}\hat{x}_0(x_\rho,\rho)}{\sqrt{1-\rho}}.
\end{equation}
Substituting the second equation into the first gives
\begin{align}
    x_{\rho'}
    &=
    \sqrt{\rho'}\hat{x}_0(x_\rho,\rho)
    +
    \sqrt{1-\rho'}
    \frac{x_\rho-\sqrt{\rho}\hat{x}_0(x_\rho,\rho)}{\sqrt{1-\rho}}
    \\
    &=
    \sqrt{\frac{1-\rho'}{1-\rho}}x_\rho
    +
    \left(
    \sqrt{\rho'}
    -
    \sqrt{\rho}\sqrt{\frac{1-\rho'}{1-\rho}}
    \right)
    \hat{x}_0(x_\rho,\rho).
\end{align}
Therefore,
\begin{equation}
    x_{\rho'}
    =
    s_{\rho,\rho'}x_\rho
    +
    a_{\rho,\rho'}\hat{x}_0(x_\rho,\rho),
\end{equation}
where
\begin{equation}
    s_{\rho,\rho'}
    =
    \sqrt{\frac{1-\rho'}{1-\rho}},
    \qquad
    a_{\rho,\rho'}
    =
    \sqrt{\rho'}
    -
    \sqrt{\rho}
    \sqrt{\frac{1-\rho'}{1-\rho}}.
\end{equation}
Subtracting $x_\rho$ from both sides yields
\begin{equation}
    x_{\rho'}-x_\rho
    =
    (s_{\rho,\rho'}-1)x_\rho
    +
    a_{\rho,\rho'}\hat{x}_0(x_\rho,\rho).
\end{equation}
Since $\mathcal{P}_B$ is linear, applying $\mathcal{P}_B$ gives the exact projected update
\begin{equation}
    \mathcal{P}_B(x_{\rho'}-x_\rho)
    =
    (s_{\rho,\rho'}-1)\mathcal{P}_B x_\rho
    +
    a_{\rho,\rho'}\mathcal{P}_B\hat{x}_0(x_\rho,\rho).
\end{equation}

We now derive the first-order expansion. Since $\rho'=\rho+\Delta$,
\begin{equation}
    s_{\rho,\rho'}
    =
    \sqrt{\frac{1-\rho-\Delta}{1-\rho}}
    =
    \sqrt{1-\frac{\Delta}{1-\rho}}.
\end{equation}
Using the Taylor expansion $\sqrt{1-z}=1-\frac{z}{2}+O(z^2)$ gives
\begin{equation}
    s_{\rho,\rho'}-1
    =
    -\frac{\Delta}{2(1-\rho)}
    +
    O(\Delta^2).
\end{equation}
Similarly,
\begin{equation}
    \sqrt{\rho'}
    =
    \sqrt{\rho+\Delta}
    =
    \sqrt{\rho}
    +
    \frac{\Delta}{2\sqrt{\rho}}
    +
    O(\Delta^2).
\end{equation}
Therefore,
\begin{align}
    a_{\rho,\rho'}
    &=
    \sqrt{\rho'}
    -
    \sqrt{\rho}s_{\rho,\rho'}
    \\
    &=
    \left(
    \sqrt{\rho}
    +
    \frac{\Delta}{2\sqrt{\rho}}
    +
    O(\Delta^2)
    \right)
    -
    \sqrt{\rho}
    \left(
    1-\frac{\Delta}{2(1-\rho)}
    +
    O(\Delta^2)
    \right)
    \\
    &=
    \frac{\Delta}{2\sqrt{\rho}}
    +
    \frac{\sqrt{\rho}\Delta}{2(1-\rho)}
    +
    O(\Delta^2)
    \\
    &=
    \frac{\Delta}{2\sqrt{\rho}(1-\rho)}
    +
    O(\Delta^2).
\end{align}
Substituting the expansions of $s_{\rho,\rho'}-1$ and $a_{\rho,\rho'}$ into the exact projected
update gives
\begin{equation}
    \mathcal{P}_B(x_{\rho'}-x_\rho)
    =
    -\frac{\Delta}{2(1-\rho)}
    \mathcal{P}_B x_\rho
    +
    \frac{\Delta}{2\sqrt{\rho}(1-\rho)}
    \mathcal{P}_B\hat{x}_0(x_\rho,\rho)
    +
    O(\Delta^2),
\end{equation}
which proves the lemma.

\subsection{Proof of Corollary~\ref{cor:inactive_band_stability}}
\label{app:proof_inactive_band_stability}

From Lemma~\ref{lem:bandwise_ddim_sensitivity}, we have
\begin{equation}
    \mathcal{P}_B(x_{\rho'}-x_\rho)
    =
    (s_{\rho,\rho'}-1)\mathcal{P}_B x_\rho
    +
    a_{\rho,\rho'}\mathcal{P}_B\hat{x}_0(x_\rho,\rho).
\end{equation}
Taking Frobenius norms and applying the triangle inequality gives
\begin{equation}
    \|\mathcal{P}_B(x_{\rho'}-x_\rho)\|_F
    \le
    |s_{\rho,\rho'}-1|\|\mathcal{P}_B x_\rho\|_F
    +
    |a_{\rho,\rho'}|
    \|\mathcal{P}_B\hat{x}_0(x_\rho,\rho)\|_F.
\end{equation}

We now bound the scalar coefficients. Since $\rho \le 1-\gamma$ and $\Delta\le\gamma/2$,
we have $1-\rho \ge \gamma$ and $1-\rho-\Delta \ge \gamma/2$. Hence the functions
$s_{\rho,\rho'}$ and $a_{\rho,\rho'}$ are smooth on the considered interval. Because
$s_{\rho,\rho}=1$ and $a_{\rho,\rho}=0$, the mean-value theorem implies that there exists a
constant $C_{\rho_{\min},\gamma}>0$, depending only on $\rho_{\min}$ and $\gamma$, such that
\begin{equation}
    |s_{\rho,\rho'}-1|
    \le
    C_{\rho_{\min},\gamma}\Delta,
    \qquad
    |a_{\rho,\rho'}|
    \le
    C_{\rho_{\min},\gamma}\Delta.
\end{equation}
Using the inactivity condition
\begin{equation}
    \|\mathcal{P}_B x_\rho\|_F \le \varepsilon_x,
    \qquad
    \|\mathcal{P}_B\hat{x}_0(x_\rho,\rho)\|_F \le \varepsilon_0,
\end{equation}
we obtain
\begin{equation}
    \|\mathcal{P}_B(x_{\rho'}-x_\rho)\|_F
    \le
    C_{\rho_{\min},\gamma}\Delta
    (\varepsilon_x+\varepsilon_0).
\end{equation}
This proves the desired stability bound.

\subsection{Extension to DPM-Solver-2 Multistep Updates}
\label{app:dpm_solver_extension}

\begin{prop}[Bandwise DPM-Solver-2 update with history correction]
\label{prop:dpm_solver_history}
Let \(\lambda(\rho)=\frac{1}{2}\log\frac{\rho}{1-\rho}\), 
\(h=\lambda(\rho')-\lambda(\rho)\), and
\(r=h_{\mathrm{prev}}/h\). Assume the DPM-Solver-2 multistep update is written in the
\(\epsilon\)-prediction form
\begin{equation}
    x_{\rho'}
    =
    \sqrt{\frac{\rho'}{\rho}}x_\rho
    -
    \sqrt{1-\rho'}(e^h-1)\hat{\epsilon}^{(2)}_\rho,
    \label{eq:dpm_solver2_update}
\end{equation}
where
\begin{equation}
\hat{\epsilon}^{(2)}_\rho
=
\hat{\epsilon}_\rho
+
\frac{1}{2r}
\left(
\hat{\epsilon}_\rho-\hat{\epsilon}_{\mathrm{prev}}
\right),
\qquad
\hat{\epsilon}_\rho
=
\frac{x_\rho-\sqrt{\rho}\hat{x}_0(x_\rho,\rho)}{\sqrt{1-\rho}}.
\end{equation}
Then the update can be written as
\begin{equation}
x_{\rho'}
=
s_{\rho,\rho'}x_\rho
+
a_{\rho,\rho'}\hat{x}_0(x_\rho,\rho)
+
c_{\rho,\rho',r}
\left(
\hat{\epsilon}_\rho-\hat{\epsilon}_{\mathrm{prev}}
\right),
\end{equation}
where
\begin{equation}
s_{\rho,\rho'}
=
\sqrt{\frac{1-\rho'}{1-\rho}},
\qquad
a_{\rho,\rho'}
=
\sqrt{\rho'}
-
\sqrt{\rho}
\sqrt{\frac{1-\rho'}{1-\rho}},
\end{equation}
and
\begin{equation}
c_{\rho,\rho',r}
=
-\frac{\sqrt{1-\rho'}(e^h-1)}{2r}.
\end{equation}
Consequently, for any frequency band \(B\),
\begin{equation}
\mathcal{P}_B(x_{\rho'}-x_\rho)
=
(s_{\rho,\rho'}-1)\mathcal{P}_Bx_\rho
+
a_{\rho,\rho'}\mathcal{P}_B\hat{x}_0(x_\rho,\rho)
+
c_{\rho,\rho',r}
\mathcal{P}_B
\left(
\hat{\epsilon}_\rho-\hat{\epsilon}_{\mathrm{prev}}
\right).
\end{equation}
If \(\rho\in[\rho_{\min},1-\gamma]\), \(\Delta=\rho'-\rho\leq\gamma/2\), and
\(|r|\geq r_{\min}>0\), then
\begin{equation}
|c_{\rho,\rho',r}|
\leq
C_{\rho_{\min},\gamma,r_{\min}}\Delta.
\end{equation}
Therefore, under the inactivity condition in Eq.~\eqref{eq:inactive_band_condition},
\begin{equation}
\|\mathcal{P}_B(x_{\rho'}-x_\rho)\|_F
\leq
C_{\rho_{\min},\gamma}\Delta(\varepsilon_x+\varepsilon_0)
+
C_{\rho_{\min},\gamma,r_{\min}}\Delta
\left\|
\mathcal{P}_B
\left(
\hat{\epsilon}_\rho-\hat{\epsilon}_{\mathrm{prev}}
\right)
\right\|_F.
\end{equation}
\end{prop}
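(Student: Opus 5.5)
The plan is to reduce the DPM-Solver-2 step to the DDIM affine form of Eq.~\eqref{eq:single_step_affine} plus a residual. First, split $\hat{\epsilon}^{(2)}_\rho=\hat{\epsilon}_\rho+\frac{1}{2r}(\hat{\epsilon}_\rho-\hat{\epsilon}_{\mathrm{prev}})$ in Eq.~\eqref{eq:dpm_solver2_update}. The history part contributes exactly $-\frac{\sqrt{1-\rho'}(e^h-1)}{2r}(\hat{\epsilon}_\rho-\hat{\epsilon}_{\mathrm{prev}})=c_{\rho,\rho',r}(\hat{\epsilon}_\rho-\hat{\epsilon}_{\mathrm{prev}})$.

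It then remains to show that the first-order part $\sqrt{\rho'/\rho}\,x_\rho-\sqrt{1-\rho'}(e^h-1)\hat{\epsilon}_\rho$ coincides with the DDIM update. The key identity is
\[
e^h=\exp\bigl(\lambda(\rho')-\lambda(\rho)\bigr)=\sqrt{\frac{\rho'(1-\rho)}{\rho(1-\rho')}},
\]
so that $\sqrt{1-\rho'}\,e^h/\sqrt{1-\rho}=\sqrt{\rho'/\rho}$. Substitute $\hat{\epsilon}_\rho=(x_\rho-\sqrt{\rho}\hat{x}_0)/\sqrt{1-\rho}$ and collect terms:
\begin{itemize}
\item The coefficient of $x_\rho$ becomes $\sqrt{\rho'/\rho}-\sqrt{\rho'/\rho}+\sqrt{(1-\rho')/(1-\rho)}=s_{\rho,\rho'}$.
\item The coefficient of $\hat{x}_0$ becomes $\frac{\sqrt{\rho}\sqrt{1-\rho'}}{\sqrt{1-\rho}}(e^h-1)=\sqrt{\rho'}-\sqrt{\rho}\,s_{\rho,\rho'}=a_{\rho,\rho'}$.
\end{itemize}
This gives the claimed decomposition. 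Subtracting $x_\rho$ and applying the linear projector $\mathcal{P}_B$ yields the bandwise identity, exactly as in the proof of Lemma~\ref{lem:bandwise_ddim_sensitivity}.

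For the coefficient bound, first note that $\rho'=\rho+\Delta\in[\rho_{\min},1-\gamma/2]$. On this compact interval $\lambda'(\rho)=\frac{1}{2\rho(1-\rho)}$ is bounded by a constant depending only on $\rho_{\min},\gamma$. The mean-value theorem then gives $0<h\le C_1\Delta$, and since $\Delta\le\gamma/2\le 1$, also $h\le C_1$. Hence $e^h-1\le e^{C_1}h\le e^{C_1}C_1\Delta$. Combining this with $\sqrt{1-\rho'}\le 1$ and $1/|r|\le 1/r_{\min}$ yields
\[
|c_{\rho,\rho',r}|\le \frac{e^{C_1}C_1}{2r_{\min}}\,\Delta=:C_{\rho_{\min},\gamma,r_{\min}}\Delta.
\]
The final inequality then follows from the triangle inequality on the three projected terms. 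The first two terms are bounded exactly as in the proof of Corollary~\ref{cor:inactive_band_stability} under condition \eqref{eq:inactive_band_condition}, and the third is bounded by the estimate on $|c_{\rho,\rho',r}|$.

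The only real obstacle is the exact-exponential identity for $e^h$ and the bookkeeping showing that the first-order DPM-Solver coefficients collapse exactly to $s_{\rho,\rho'}$ and $a_{\rho,\rho'}$. A sign or square-root slip in $\lambda$ there would break the match, so I would verify it carefully. The remaining steps are routine compactness and mean-value estimates.
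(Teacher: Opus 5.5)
Your proposal is correct and follows essentially the same route as the paper. You split off the history term as $c_{\rho,\rho',r}(\hat{\epsilon}_\rho-\hat{\epsilon}_{\mathrm{prev}})$, use $e^h=\sqrt{\rho'(1-\rho)/(\rho(1-\rho'))}$ to reduce the remaining coefficients to $s_{\rho,\rho'}$ and $a_{\rho,\rho'}$, bound $h$ by the mean-value theorem for $\lambda$ on $[\rho_{\min},1-\gamma/2]$, and finish with the triangle inequality, where your explicit estimate $e^h-1\le e^{C_1}h$ spells out the paper's remark that $e^h$ is bounded on that compact interval.
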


\begin{proof}
Substituting the definition of \(\hat{\epsilon}^{(2)}_\rho\) into
Eq.~\eqref{eq:dpm_solver2_update} gives
\begin{align}
x_{\rho'}
&=
\sqrt{\frac{\rho'}{\rho}}x_\rho
-
\sqrt{1-\rho'}(e^h-1)\hat{\epsilon}_\rho
-
\frac{\sqrt{1-\rho'}(e^h-1)}{2r}
\left(
\hat{\epsilon}_\rho-\hat{\epsilon}_{\mathrm{prev}}
\right).
\end{align}
Using
\[
\hat{\epsilon}_\rho
=
\frac{x_\rho-\sqrt{\rho}\hat{x}_0(x_\rho,\rho)}{\sqrt{1-\rho}},
\]
we obtain
\begin{align}
x_{\rho'}
&=
\left[
\sqrt{\frac{\rho'}{\rho}}
-
\frac{\sqrt{1-\rho'}(e^h-1)}{\sqrt{1-\rho}}
\right]x_\rho
+
\frac{\sqrt{\rho}\sqrt{1-\rho'}(e^h-1)}{\sqrt{1-\rho}}
\hat{x}_0(x_\rho,\rho)
\nonumber\\
&\quad
-
\frac{\sqrt{1-\rho'}(e^h-1)}{2r}
\left(
\hat{\epsilon}_\rho-\hat{\epsilon}_{\mathrm{prev}}
\right).
\end{align}
Since
\[
e^h
=
\exp(\lambda(\rho')-\lambda(\rho))
=
\sqrt{
\frac{\rho'(1-\rho)}
{\rho(1-\rho')}
},
\]
the coefficient of \(x_\rho\) becomes
\begin{align}
\sqrt{\frac{\rho'}{\rho}}
-
\frac{\sqrt{1-\rho'}(e^h-1)}{\sqrt{1-\rho}}
&=
\sqrt{\frac{\rho'}{\rho}}
-
\sqrt{\frac{\rho'}{\rho}}
+
\sqrt{\frac{1-\rho'}{1-\rho}}
\\
&=
\sqrt{\frac{1-\rho'}{1-\rho}}
=
s_{\rho,\rho'}.
\end{align}
Similarly, the coefficient of \(\hat{x}_0(x_\rho,\rho)\) becomes
\begin{align}
\frac{\sqrt{\rho}\sqrt{1-\rho'}(e^h-1)}{\sqrt{1-\rho}}
&=
\sqrt{\rho'}
-
\sqrt{\rho}\sqrt{\frac{1-\rho'}{1-\rho}}
\\
&=
a_{\rho,\rho'}.
\end{align}
Thus,
\[
x_{\rho'}
=
s_{\rho,\rho'}x_\rho
+
a_{\rho,\rho'}\hat{x}_0(x_\rho,\rho)
+
c_{\rho,\rho',r}
\left(
\hat{\epsilon}_\rho-\hat{\epsilon}_{\mathrm{prev}}
\right),
\]
with
\[
c_{\rho,\rho',r}
=
-\frac{\sqrt{1-\rho'}(e^h-1)}{2r}.
\]
Applying the linear projector \(\mathcal{P}_B\) and subtracting
\(\mathcal{P}_Bx_\rho\) gives the bandwise decomposition.

It remains to bound the history coefficient. Since
\[
\lambda'(\rho)
=
\frac{1}{2\rho(1-\rho)},
\]
and \(\rho,\rho'\in[\rho_{\min},1-\gamma/2]\), the mean-value theorem gives
\[
|h|
=
|\lambda(\rho')-\lambda(\rho)|
\leq
C_{\rho_{\min},\gamma}\Delta.
\]
On the same compact interval, \(e^h\) is bounded, and therefore
\[
|e^h-1|
\leq
C_{\rho_{\min},\gamma}|h|
\leq
C_{\rho_{\min},\gamma}\Delta.
\]
Because \(|r|\geq r_{\min}>0\), we have
\[
|c_{\rho,\rho',r}|
=
\frac{\sqrt{1-\rho'}|e^h-1|}{2|r|}
\leq
C_{\rho_{\min},\gamma,r_{\min}}\Delta.
\]
Finally, using the DDIM coefficient bounds
\[
|s_{\rho,\rho'}-1|\leq C_{\rho_{\min},\gamma}\Delta,
\qquad
|a_{\rho,\rho'}|\leq C_{\rho_{\min},\gamma}\Delta,
\]
together with the inactivity conditions
\[
\|\mathcal{P}_Bx_\rho\|_F\leq\varepsilon_x,
\qquad
\|\mathcal{P}_B\hat{x}_0(x_\rho,\rho)\|_F\leq\varepsilon_0,
\]
yields
\[
\|\mathcal{P}_B(x_{\rho'}-x_\rho)\|_F
\leq
C_{\rho_{\min},\gamma}\Delta(\varepsilon_x+\varepsilon_0)
+
C_{\rho_{\min},\gamma,r_{\min}}\Delta
\left\|
\mathcal{P}_B
\left(
\hat{\epsilon}_\rho-\hat{\epsilon}_{\mathrm{prev}}
\right)
\right\|_F.
\]
This proves the result.
\end{proof}

This result shows that the DPM-Solver-2 branch inherits the same DDIM leading-order bandwise structure, but with an additional history correction proportional to the change in the noise predictor. 
When the noise prediction varies smoothly across adjacent solver states, this correction is small; otherwise, it explicitly captures the risk of aggressive multistep extrapolation.

\subsection{Discussion: relation between the stability condition and the practical gate}
\label{app:discussion_xrho_x0}

The bound in Corollary~\ref{cor:inactive_band_stability} depends on both
\(\mathcal{P}_B x_\rho\) and
\(\mathcal{P}_B\hat{x}_0(x_\rho,\rho)\). This gives a sufficient condition for bandwise stability:
if a band is inactive in both the current sample and the predicted clean signal, then a deterministic
DDIM jump cannot substantially alter that band beyond a first-order \(O(\Delta)\) perturbation.

The implemented StrideDiffusion gate does not explicitly test the
\(\hat{x}_0\)-band energy. Instead, it uses observable trajectory statistics computed from consecutive
states \(x_{t-1}\) and \(x_t\), namely relative band energy, log-power drift, and phase velocity. These
quantities provide lightweight proxies for local spectral activity along the reverse trajectory. Thus,
Corollary~\ref{cor:inactive_band_stability} should be interpreted as motivating the use of spectral
activity as a step-size indicator, rather than as proving that every inactive decision made by the
practical gate satisfies the full sufficient condition.

Equivalently, if only the current trajectory band energy is known to be small, the DDIM bound can be
written in the residual form
\[
\|\mathcal{P}_B(x_{\rho'}-x_\rho)\|_F
\leq
C_{\rho_{\min},\gamma}\Delta
\left(
\|\mathcal{P}_B x_\rho\|_F
+
\|\mathcal{P}_B\hat{x}_0(x_\rho,\rho)\|_F
\right).
\]
The practical gate directly controls the first term through spectral energy and local temporal
variation, while the second term remains an ungated predicted-clean residual. This residual explains
why the gate is designed conservatively: high-frequency activity, rapid magnitude drift, or large phase
velocity triggers fine steps, and the late-stage micro-step override further reduces the risk of injecting
unresolved structure near the clean-data regime.




\section{Spectral Activity as a Local Error Indicator}
\label{sec:theory_local_error}

Section~\ref{sec:theory_spectral_stability} shows that inactive bands are stable under any single
deterministic single-step update used by StrideDiffusion (DDIM or DPM-Solver-2 multistep). We now connect this bandwise stability result to the adaptive scheduler.
The goal is to justify why the proposed spectral quantities: relative energy, log-power drift, and phase velocity that can serve as local error indicators for deciding whether a large leap is safe.

\subsection{Bandwise Spectral Dynamics}
Let
\begin{equation}
    X_\rho = \mathcal{F}x_\rho
\end{equation}
be the Fourier representation of the reverse trajectory, and let $X_\rho[B]$ denote the collection
of Fourier coefficients in band $B$. The band power and relative energy are
\begin{equation}
    P_\rho[B] = \|X_\rho[B]\|_F^2,
    \qquad
    \pi_\rho[B] =
    \frac{P_\rho[B]}{\sum_{B'}P_\rho[B']}.
\end{equation}
For two adjacent observations $\rho$ and $\rho+h$, we define the log-power drift
\begin{equation}
    \delta_\rho[B]
    =
    \left|
    \log(P_{\rho+h}[B]+\varepsilon)
    -
    \log(P_\rho[B]+\varepsilon)
    \right|,
    \label{eq:log_power_drift}
\end{equation}
where $\varepsilon>0$ is a small stabilizer. We also define the band-level phase velocity as
\begin{equation}
    v_{\phi,\rho}[B]
    =
    \frac{1}{h}
    \left(
    \frac{
    \sum_{f\in B} w_\rho[f]
    \left|
    \mathrm{wrap}\!\left(
    \angle X_{\rho+h}[f] - \angle X_\rho[f]
    \right)
    \right|^2
    }{
    \sum_{f\in B} w_\rho[f]
    }
    \right)^{1/2},
    \label{eq:phase_velocity}
\end{equation}
where $w_\rho[f]=|X_\rho[f]|^2+\varepsilon$ and $\mathrm{wrap}(\cdot)$ maps phase differences
to $[-\pi,\pi]$.

The following proposition explains why these quantities are useful local indicators.

\begin{prop}[Spectral activity estimates local trajectory variation]
\label{prop:spectral_activity_indicator}
Assume that $X_\rho[f]$ is differentiable in $\rho$ and that $X_\rho[f]\neq 0$ on band $B$.
Then, for small $h$,
\begin{equation}
    \delta_\rho[B]
    =
    h
    \left|
    \frac{d}{d\rho}\log P_\rho[B]
    \right|
    +
    O(h^2),
    \label{eq:log_power_fd}
\end{equation}
and
\begin{equation}
    v_{\phi,\rho}[B]
    =
    \left(
    \frac{
    \sum_{f\in B} w_\rho[f]
    |\partial_\rho \phi_\rho[f]|^2
    }{
    \sum_{f\in B} w_\rho[f]
    }
    \right)^{1/2}
    +
    O(h),
    \label{eq:phase_fd}
\end{equation}
where $\phi_\rho[f]=\angle X_\rho[f]$. Therefore, log-power drift estimates the local radial
change of the band, while phase velocity estimates the local angular change of the band.
\end{prop}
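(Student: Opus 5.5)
Both claims are finite-difference Taylor expansions along the trajectory $\rho\mapsto X_\rho[f]$, and I would treat the two statistics separately, using only differentiability and nonvanishing on $B$. Throughout I assume (implicitly in ``differentiable'') that $\rho\mapsto X_\rho[f]$ is $C^2$ near $\rho$, or at least has Lipschitz derivative. This is what turns the remainders into $O(h^2)$ and $O(h)$ rather than $o(h)$ and $o(1)$. Since $B$ is a finite index set, all constants can be taken uniform over $f\in B$.

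\textbf{Log-power drift.} Set $g(\rho)=\log(P_\rho[B]+\varepsilon)$. Because $P_\rho[B]=\sum_{f\in B}|X_\rho[f]|^2$ is a finite sum of $C^2$ functions and $P_\rho[B]+\varepsilon>0$, $g$ is $C^2$. Taylor's theorem gives $g(\rho+h)-g(\rho)=h\,g'(\rho)+O(h^2)$, and taking absolute values preserves this form because $\bigl||a+O(h^2)|-|a|\bigr|\le O(h^2)$. It remains to identify $g'$ with $\frac{d}{d\rho}\log P_\rho[B]$. Since $X_\rho[f]\neq0$ on $B$, we have $P_\rho[B]>0$, so $\log P$ is well defined and $g'=P'/(P+\varepsilon)=\frac{d}{d\rho}\log P\cdot\frac{P}{P+\varepsilon}$. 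The statement therefore holds either with $\log P$ read as the $\varepsilon$-stabilized log, or up to the factor $P/(P+\varepsilon)=1-O(\varepsilon)$. I would state this convention explicitly; most naturally, one identifies $\log P$ with $\log(P+\varepsilon)$ as in the definition.

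\textbf{Phase velocity.} Nonvanishing of $X_\rho[f]$ lets us choose a $C^2$ lift $\phi_\rho[f]$ of the argument locally, since $X/|X|$ is a $C^2$ map into the circle. Then $\phi_{\rho+h}[f]-\phi_\rho[f]=h\,\partial_\rho\phi_\rho[f]+O(h^2)$. For $h$ small enough this quantity lies in $(-\pi,\pi)$, so $\mathrm{wrap}$ acts as the identity on it.

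\emph{Choosing the threshold for $h$ is the one delicate step.} We need $h|\partial_\rho\phi|+Ch^2<\pi$ uniformly over the finite band. This is where $X_\rho[f]\neq0$ is essential, because $\partial_\rho\phi=\operatorname{Im}(\partial_\rho X/X)$ must stay bounded.

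Squaring gives $|\mathrm{wrap}(\cdot)|^2=h^2|\partial_\rho\phi|^2+O(h^3)$. The weights $w_\rho[f]$ are evaluated at $\rho$, do not depend on $h$, and are bounded below by $\varepsilon$. Hence the weighted average equals $h^2 A+O(h^3)$, where $A$ is the weighted mean of $|\partial_\rho\phi|^2$.

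Taking the square root and dividing by $h$ gives $\sqrt{A+O(h)}$. If $A>0$, this equals $\sqrt A+O(h)$ by the smoothness of $\sqrt{\cdot}$ away from $0$. If $A=0$, then every $\partial_\rho\phi_\rho[f]=0$, so each phase increment is $O(h^2)$ directly, and the expression is $O(h)$. In both cases we obtain \eqref{eq:phase_fd}; I would treat $A=0$ as a separate short case rather than rely on a H\"older estimate.

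Finally, I would note the interpretation. In polar form, $\partial_\rho X=X\bigl(\partial_\rho\log|X|+i\,\partial_\rho\phi\bigr)$, so the two statistics capture the radial and angular parts of the derivative, respectively.
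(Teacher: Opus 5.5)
Your proposal is correct and follows the same route as the paper: a Taylor expansion of $\log(P_\rho[B]+\varepsilon)$, with the same $\varepsilon$-caveat the paper states as ``$P_\rho[B]\gg\varepsilon$'', and a first-order expansion of the wrapped phase increment substituted into the weighted average. Your additions tighten steps the paper glosses over: the explicit $C^2$ requirement behind the $O(h^2)$ remainders, the continuous lift showing $\mathrm{wrap}$ acts as the identity for small $h$ (in place of the paper's vaguer ``away from phase wrapping discontinuities''), and the separate $A=0$ case where $\sqrt{\cdot}$ is not Lipschitz.
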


Proposition~\ref{prop:spectral_activity_indicator} shows that the spectral gate is not merely a
heuristic. The energy fraction $\pi_\rho[B]$ identifies whether a band contributes meaningfully to
the current trajectory. The log-power drift $\delta_\rho[B]$ estimates whether the magnitude of the
band is changing rapidly. The phase velocity $v_{\phi,\rho}[B]$ estimates whether the oscillatory
structure of the band is evolving rapidly. A band with low energy, small log-power drift, and small
phase velocity is locally stable and is therefore a candidate for safe skipping.

\paragraph{Leap versus repeated micro-updates.}
We next bound the error introduced by replacing several micro-updates with one larger leap. Let
$\Psi_{\rho,h}$ denote one deterministic single-step update (DDIM or DPM-Solver-2 multistep) update from $\rho$ to $\rho+h$:
\begin{equation}
    x_{\rho+h} = \Psi_{\rho,h}(x_\rho).
\end{equation}
Given a total jump size $H=mh$, define the micro-step trajectory as
\begin{equation}
    x^{\mathrm{micro}}_{j+1}
    =
    \Psi_{\rho+jh,h}(x^{\mathrm{micro}}_j),
    \qquad
    j=0,\ldots,m-1,
\end{equation}
with $x^{\mathrm{micro}}_0=x_\rho$. The corresponding one-leap approximation is
\begin{equation}
    x^{\mathrm{leap}}_{\rho+H}
    =
    \Psi_{\rho,H}(x_\rho).
\end{equation}

\begin{assumption}[Local regularity of the reverse update]
\label{ass:local_regularity}
On the interval $[\rho,\rho+H]$, the deterministic single-step map admits the local expansion
\begin{equation}
    \Psi_{\rho,h}(x)
    =
    x + h f(x,\rho) + R(x,\rho,h),
    \qquad
    \|R(x,\rho,h)\|_F \le C_R h^2,
    \label{eq:ddim_local_expansion}
\end{equation}
where $f$ is locally Lipschitz in $x$ and $\rho$:
\begin{equation}
    \|f(x,\rho)-f(y,\rho)\|_F \le L_x\|x-y\|_F,
    \qquad
    \|f(x,\rho)-f(x,\rho')\|_F \le L_\rho|\rho-\rho'|.
\end{equation}
We also assume $\|f(x,\rho)\|_F\le M$ along the local trajectory.
\end{assumption}

For multistep solvers such as DPM-Solver-2, \(\Psi_{\rho,h}\) should be interpreted as a deterministic map on the augmented solver state, including the stored previous noise prediction; for notational simplicity we write only its action on \(x_\rho\).

\begin{theorem}[Error of a spectral-guided leap]
\label{thm:spectral_guided_leap_error}
Under Assumption~\ref{ass:local_regularity}, the discrepancy between one large deterministic leap of size $H$ (realized by either branch of $\Psi$) and $m$ repeated micro-updates of size $h=H/m$ satisfies
\begin{equation}
    \left\|
    x^{\mathrm{micro}}_{\rho+H}
    -
    x^{\mathrm{leap}}_{\rho+H}
    \right\|_F
    \le
    C H^2
    \left(
    L_x M + L_\rho + C_R
    \right),
    \label{eq:leap_error_bound}
\end{equation}
where $C$ is a universal constant for sufficiently small $H$.

Moreover, for any frequency band $B$,
\begin{equation}
    \left\|
    \mathcal{P}_B
    \left(
    x^{\mathrm{micro}}_{\rho+H}
    -
    x^{\mathrm{leap}}_{\rho+H}
    \right)
    \right\|_F
    \le
    C H^2
    \left(
    L_{x,B}M_B + L_{\rho,B} + C_{R,B}
    \right),
    \label{eq:bandwise_leap_error_bound}
\end{equation}
where $L_{x,B}$, $L_{\rho,B}$, $M_B$, and $C_{R,B}$ are the corresponding band-restricted
regularity constants.
\end{theorem}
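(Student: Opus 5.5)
The plan is to compare both trajectories with the exact integral curve of the vector field $f$ from Assumption~\ref{ass:local_regularity}, or more simply with each other through a telescoping decomposition. First I will write the leap using the local expansion:
\[
x^{\mathrm{leap}}_{\rho+H}=x_\rho+Hf(x_\rho,\rho)+R(x_\rho,\rho,H),
\qquad \|R(x_\rho,\rho,H)\|_F\le C_RH^2 .
\]
Next I will unroll the micro-steps:
\[
x^{\mathrm{micro}}_{m}=x_\rho+h\sum_{j=0}^{m-1} f(x^{\mathrm{micro}}_j,\rho+jh)+\sum_{j=0}^{m-1}R(x^{\mathrm{micro}}_j,\rho+jh,h).
\]
The remainder sum is bounded by $m\,C_Rh^2=C_RHh\le C_RH^2$.

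Subtracting, the discrepancy reduces to the quadrature-type error
\[
h\sum_{j=0}^{m-1}\bigl(f(x^{\mathrm{micro}}_j,\rho+jh)-f(x_\rho,\rho)\bigr)
\]
plus the two remainders. For each term I split it by inserting $f(x_\rho,\rho+jh)$ and apply the two Lipschitz bounds. This gives $\le L_x\|x^{\mathrm{micro}}_j-x_\rho\|_F+L_\rho jh$.

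The key intermediate estimate is the a priori drift bound
\[
\|x^{\mathrm{micro}}_j-x_\rho\|_F\le jh\,M+jC_Rh^2\le jh(M+C_Rh).
\]
It follows by induction from $\|f\|_F\le M$ along the local trajectory and the remainder bound. For sufficiently small $H$ I absorb $C_Rh$ into $M$; alternatively, the cross term $L_xC_RH^3$ is dominated by $C_RH^2$ once $L_xH\le1$. Summing, $h\sum_j jh\le H^2/2$, which yields the total bound $\tfrac12H^2(L_xM+L_\rho)+2C_RH^2$. This is of the form of Eq.~\eqref{eq:leap_error_bound} with, e.g., $C=2$.

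For the bandwise statement Eq.~\eqref{eq:bandwise_leap_error_bound}, I apply the linear projector $\mathcal{P}_B$ to the same decomposition. The identity structure (the $x_\rho$ terms cancel) is preserved. It remains to bound $\|\mathcal{P}_B(f(x,\rho)-f(y,\rho'))\|_F$ and $\|\mathcal{P}_BR\|_F$. I will define the band-restricted constants precisely as the constants making these projected inequalities hold:
\[
\|\mathcal{P}_B(f(x,\rho)-f(y,\rho))\|_F\le L_{x,B}\|x-y\|_F,\qquad \|\mathcal{P}_BR\|_F\le C_{R,B}h^2 .
\]
The analogous definitions hold for $L_{\rho,B}$. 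I take $M_B$ as a bound for the drift $\|x^{\mathrm{micro}}_j-x_\rho\|_F/(jh)$ entering the Lipschitz term. Note that the Lipschitz input is the full-space drift, so $M_B$ must be read as bounding $\|f\|_F$ on the inputs that $\mathcal{P}_Bf$ is sensitive to. The same summation then gives the claim, since $\|\mathcal{P}_B\|\le1$ (orthogonal projector).

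The main obstacle is this interpretive step. $\mathcal{P}_B$ does not commute with the nonlinear $f$, so a genuinely band-local drift bound cannot be derived. I will therefore state the band-restricted constants as definitions (projected Lipschitz and remainder constants) and keep $M_B$ tied to the drift seen by $\mathcal{P}_Bf$. This way the argument is a verbatim projection of the full-space one. A secondary point is handling the DPM-Solver-2 branch. Following the remark after Assumption~\ref{ass:local_regularity}, I treat the augmented state $(x,\hat\epsilon_{\mathrm{prev}})$ as the variable, so that the same one-step expansion and induction apply unchanged.
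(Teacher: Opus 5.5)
Your proposal is correct and follows the paper's proof. Both unroll the $m$ micro-steps, bound the accumulated remainder by $C_R H h\le C_R H^2$, control $h\sum_j\bigl(f(x_j,\rho+jh)-f(x_\rho,\rho)\bigr)$ through the two Lipschitz constants and the a priori drift bound $\|x_j-x_\rho\|_F\le C\,jhM$, subtract the leap's own expansion, and get the bandwise bound by projecting with $\mathcal{P}_B$. Your treatment of the band-restricted constants is more careful than the paper's one-line ``apply the same argument to $f_B=\mathcal{P}_Bf$ and $R_B=\mathcal{P}_BR$'': you note that $\mathcal{P}_B$ does not commute with the nonlinear $f$, so $M_B$ has to be read as the full-space drift rate, a point the paper passes over.
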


\paragraph{Implication for adaptive scheduling.}
Theorem~\ref{thm:spectral_guided_leap_error} states that the error of collapsing micro-updates into
one leap is second order in the leap size, with constants determined by how quickly the reverse
trajectory and denoising field vary. Proposition~\ref{prop:spectral_activity_indicator} gives
observable finite-difference proxies for this variation. Therefore, a large stride is safe when
high-frequency bands have low relative energy, small log-power drift, and small phase velocity.
In this regime, the band-restricted constants in Eq.~\eqref{eq:bandwise_leap_error_bound} are small,
so the leap error remains controlled.

Conversely, when a high-frequency band has substantial energy or rapidly changing magnitude or
phase, the corresponding local variation constants become large. In this case, the scheduler should
avoid a large jump and use fine-grained updates. Thus, the spectral gate can be interpreted as an
adaptive local-error controller: it takes coarse steps in spectrally stable regions and reverts to small
steps when high-frequency dynamics remain active.


\subsection{Proof of Proposition~\ref{prop:spectral_activity_indicator}}
\label{app:proof_spectral_activity_indicator}

We first consider the log-power drift. By definition,
\begin{equation}
    P_\rho[B] = \sum_{f\in B}|X_\rho[f]|^2.
\end{equation}
Assuming differentiability of $X_\rho[f]$ in $\rho$, $P_\rho[B]$ is also differentiable. For small
$h$, Taylor expansion gives
\begin{equation}
    \log(P_{\rho+h}[B]+\varepsilon)
    =
    \log(P_\rho[B]+\varepsilon)
    +
    h
    \frac{d}{d\rho}
    \log(P_\rho[B]+\varepsilon)
    +
    O(h^2).
\end{equation}
Taking the absolute difference yields
\begin{equation}
    \delta_\rho[B]
    =
    h
    \left|
    \frac{d}{d\rho}
    \log(P_\rho[B]+\varepsilon)
    \right|
    +
    O(h^2).
\end{equation}
When $P_\rho[B]\gg\varepsilon$, this reduces to Eq.~\eqref{eq:log_power_fd}. Hence log-power
drift is a finite-difference estimator of the local relative change in band power.

We next consider phase velocity. Write each nonzero Fourier coefficient in polar form:
\begin{equation}
    X_\rho[f] = r_\rho[f]e^{i\phi_\rho[f]}.
\end{equation}
For sufficiently small $h$ and away from phase wrapping discontinuities,
\begin{equation}
    \mathrm{wrap}\!\left(
    \angle X_{\rho+h}[f] - \angle X_\rho[f]
    \right)
    =
    h \partial_\rho \phi_\rho[f] + O(h^2).
\end{equation}
Substituting this expansion into the definition of $v_{\phi,\rho}[B]$ gives
\begin{align}
    v_{\phi,\rho}[B]
    &=
    \frac{1}{h}
    \left(
    \frac{
    \sum_{f\in B}w_\rho[f]
    |h\partial_\rho\phi_\rho[f]+O(h^2)|^2
    }{
    \sum_{f\in B}w_\rho[f]
    }
    \right)^{1/2}
    \\
    &=
    \left(
    \frac{
    \sum_{f\in B}w_\rho[f]
    |\partial_\rho\phi_\rho[f]|^2
    }{
    \sum_{f\in B}w_\rho[f]
    }
    \right)^{1/2}
    +
    O(h).
\end{align}
This proves Eq.~\eqref{eq:phase_fd}. Therefore, log-power drift and phase velocity estimate the
radial and angular components of local spectral variation, respectively.

\subsection{Proof of Theorem~\ref{thm:spectral_guided_leap_error}}
\label{app:proof_spectral_guided_leap_error}

We prove the global bound; the bandwise bound follows by applying the same argument after
projecting all quantities with $\mathcal{P}_B$.

Let $x_0=x_\rho$ and let $x_j=x^{\mathrm{micro}}_j$. By Assumption~\ref{ass:local_regularity},
one micro-step satisfies
\begin{equation}
    x_{j+1}
    =
    x_j
    +
    h f(x_j,\rho+jh)
    +
    R(x_j,\rho+jh,h),
    \qquad
    \|R(x_j,\rho+jh,h)\|_F \le C_Rh^2.
\end{equation}
Summing over $j=0,\ldots,m-1$ gives
\begin{equation}
    x^{\mathrm{micro}}_{\rho+H}
    =
    x_0
    +
    h\sum_{j=0}^{m-1} f(x_j,\rho+jh)
    +
    \sum_{j=0}^{m-1}R(x_j,\rho+jh,h).
    \label{eq:micro_sum}
\end{equation}
Since $mh=H$, the accumulated remainder satisfies
\begin{equation}
    \left\|
    \sum_{j=0}^{m-1}R(x_j,\rho+jh,h)
    \right\|_F
    \le
    m C_R h^2
    =
    C_R Hh
    \le
    C_R H^2,
    \label{eq:micro_remainder}
\end{equation}
where the last inequality holds because $h\le H$.

We next compare the sum of vector fields with $Hf(x_0,\rho)$. For each $j$,
\begin{align}
    \|f(x_j,\rho+jh)-f(x_0,\rho)\|_F
    &\le
    L_x\|x_j-x_0\|_F
    +
    L_\rho jh.
\end{align}
Since $\|f(x,\rho)\|_F\le M$ and the one-step remainder is $O(h^2)$, for sufficiently small $H$,
\begin{equation}
    \|x_j-x_0\|_F
    \le
    C jh M.
\end{equation}
Therefore,
\begin{align}
    h\sum_{j=0}^{m-1}
    \|f(x_j,\rho+jh)-f(x_0,\rho)\|_F
    &\le
    h\sum_{j=0}^{m-1}
    \left(
    C L_x jh M
    +
    L_\rho jh
    \right)
    \\
    &\le
    C H^2(L_xM+L_\rho).
    \label{eq:field_variation_bound}
\end{align}
Combining Eqs.~\eqref{eq:micro_sum}, \eqref{eq:micro_remainder}, and
\eqref{eq:field_variation_bound}, we obtain
\begin{equation}
    x^{\mathrm{micro}}_{\rho+H}
    =
    x_0
    +
    H f(x_0,\rho)
    +
    O\!\left(
    H^2(L_xM+L_\rho+C_R)
    \right).
    \label{eq:micro_expansion}
\end{equation}

The one-leap update also satisfies the local expansion in Assumption~\ref{ass:local_regularity}:
\begin{equation}
    x^{\mathrm{leap}}_{\rho+H}
    =
    \Psi_{\rho,H}(x_0)
    =
    x_0
    +
    H f(x_0,\rho)
    +
    R(x_0,\rho,H),
\end{equation}
with
\begin{equation}
    \|R(x_0,\rho,H)\|_F \le C_RH^2.
    \label{eq:leap_expansion}
\end{equation}
Subtracting Eq.~\eqref{eq:leap_expansion} from Eq.~\eqref{eq:micro_expansion} gives
\begin{equation}
    \left\|
    x^{\mathrm{micro}}_{\rho+H}
    -
    x^{\mathrm{leap}}_{\rho+H}
    \right\|_F
    \le
    C H^2
    \left(
    L_x M + L_\rho + C_R
    \right),
\end{equation}
which proves Eq.~\eqref{eq:leap_error_bound}.

For the bandwise result, define
\begin{equation}
    f_B(x,\rho)=\mathcal{P}_B f(x,\rho),
    \qquad
    R_B(x,\rho,h)=\mathcal{P}_B R(x,\rho,h).
\end{equation}
Applying the same argument to $f_B$ and $R_B$ gives
\begin{equation}
    \left\|
    \mathcal{P}_B
    \left(
    x^{\mathrm{micro}}_{\rho+H}
    -
    x^{\mathrm{leap}}_{\rho+H}
    \right)
    \right\|_F
    \le
    C H^2
    \left(
    L_{x,B}M_B + L_{\rho,B} + C_{R,B}
    \right),
\end{equation}
which proves the theorem.

\section{Pseudocode for StrideDiffusion}
\label{app:inference_pseudocode}

\paragraph{Notation.} We use $\rho_t=\bar{\alpha}_t$ as the cumulative signal coefficient,
with $\rho_{-1}\triangleq 1$ for boundary handling. The model $M_\theta$ is the $\hat{x}_0$-predictor of Diffusion-TS, from which we obtain
$\hat{\epsilon}_\theta(x_t,t)=(x_t-\sqrt{\rho_t}\hat{x}_0(x_t,t))/\sqrt{1-\rho_t}$.
The frequency partition $\mathcal{B}=\{B_0,\ldots,B_{|\mathcal{B}|-1}\}$ is built over the real-FFT bins of length $L$ ($F=\lfloor L/2\rfloor+1$), with the low-frequency subset denoted
$\mathcal{B}_{\mathrm{low}}\subseteq\mathcal{B}$. We denote the active set returned by the gate as $\mathcal{A}_t$, the leap schedule by
$(l_{\mathrm{coarse}},l_{\mathrm{mid}},l_{\mathrm{fine}})$,
the late-stage micro-step horizon by $K_{\mathrm{micro}}$, the dual-gate thresholds by $(\tau_{\mathrm{energy}},\tau_{\mathrm{mag}},\tau_{\mathrm{phase}})$, the DDIM stochasticity by $\eta$, and the optional soft-projection decay by $\gamma_d\in[0,1]$.

\begin{algorithm}[!h]
\caption{StrideDiffusion: spectral-guided fast inference}
\label{alg:stride_diffusion}
\begin{algorithmic}[1]
\Require Sample shape $(B,L,d)$; trained $\hat{x}_0$-predictor $M_\theta$ with $T$ training steps and schedule $\{\rho_t\}_{t=0}^{T-1}$;
band partition $\mathcal{B}$ with index sets $\{\mathcal{I}_B\}$ and low-frequency set $\mathcal{B}_{\mathrm{low}}$; leap schedule
$(l_{\mathrm{coarse}},l_{\mathrm{mid}},l_{\mathrm{fine}})$; late micro horizon $K_{\mathrm{micro}}$; thresholds $(\tau_{\mathrm{energy}},\tau_{\mathrm{mag}},\tau_{\mathrm{phase}})$; DDIM noise level $\eta$.
\Ensure Generated clean sample $x_0$.
\State Sample $x \sim \mathcal{N}(0,I)$ with shape $(B,L,d)$
\State $x_{\mathrm{prev}} \gets x$;\quad $\hat{\epsilon}_{\mathrm{prev}} \gets \mathrm{None}$;\quad $t \gets T-1$
\While{$t \ge 0$}
  \State \textbf{(1) Adaptive phase threshold.}
        $\mathrm{prog}\gets 1-t/T$;\quad
        $\tau_{\mathrm{phase}}^{(t)}\gets \tau_{\mathrm{phase}}\cdot(1-0.5\,\mathrm{prog})$
        \Comment{tighter phase gate late in sampling}
  \State \textbf{(2) Band-activity gate.}
        $\mathcal{A}_t \gets$ \Call{BandActivity}{$x_{\mathrm{prev}},x,\mathcal{B},
        \tau_{\mathrm{energy}},\tau_{\mathrm{mag}},\tau_{\mathrm{phase}}^{(t)}$}
        \Comment{Alg.~\ref{alg:band_activity}}
  \State \textbf{(3) Adaptive jump size.}
  \If{$t \le K_{\mathrm{micro}}$}
        \State $k \gets l_{\mathrm{fine}}$
        \Comment{always micro-step in the late, low-noise regime}
  \ElsIf{$\mathcal{A}_t = \emptyset$}
        \State $k \gets \min(l_{\mathrm{coarse}},\,t)$
  \ElsIf{$\mathcal{A}_t \subseteq \mathcal{B}_{\mathrm{low}}$}
        \State $k \gets \min(l_{\mathrm{mid}},\,t)$
  \Else
        \State $k \gets \min(l_{\mathrm{fine}},\,t)$
  \EndIf
  \State $t_{\mathrm{next}} \gets t-k$
  \State \textbf{(4) Single forward pass.}
        $\hat{x}_0 \gets \mathrm{clip}\!\bigl(M_\theta(x,t),\,-1,\,1\bigr)$
  \State \hphantom{\textbf{(4) Single forward pass.}}\,
        $\hat{\epsilon}_{\mathrm{curr}} \gets
         \bigl(x-\sqrt{\rho_t}\,\hat{x}_0\bigr)\big/\sqrt{1-\rho_t}$
  \State \textbf{(5) Deterministic single-step update $\Psi_{t,t_{\mathrm{next}}}$.}
  \If{$\hat{\epsilon}_{\mathrm{prev}} \neq \mathrm{None}$ \textbf{and} $k > 1$
       \textbf{and} $t_{\mathrm{next}} \ge 0$}
        \State $x_{\mathrm{cand}} \gets$
               \Call{DPMSolver2Jump}{$x,t,t_{\mathrm{next}},
               \hat{\epsilon}_{\mathrm{curr}},\hat{\epsilon}_{\mathrm{prev}},\rho$}
               \Comment{Alg.~\ref{alg:dpm2_jump}}
  \Else
        \State $x_{\mathrm{cand}} \gets$
               \Call{DDIMJump}{$x,t,t_{\mathrm{next}},
               \hat{\epsilon}_{\mathrm{curr}},\hat{x}_0,\rho,\eta$}
               \Comment{Alg.~\ref{alg:ddim_jump}}
  \EndIf
  \State $\hat{\epsilon}_{\mathrm{prev}} \gets \hat{\epsilon}_{\mathrm{curr}}$
  \State \textbf{(6) (Optional) Soft band projection.}
  \If{\textsc{useProjection} \textbf{and} $\mathcal{A}_t \neq \emptyset$}
        \State $\Delta \gets x_{\mathrm{cand}}-x$;\quad
              $\mathcal{A}_t^{\mathrm{proj}} \gets \mathcal{A}_t \cup \mathcal{B}_{\mathrm{low}}$
        \State $x_{\mathrm{new}} \gets x +
              \Call{BandSoftProject}{\Delta,\mathcal{B},\mathcal{A}_t^{\mathrm{proj}},
              \gamma_d}$
              \Comment{keeps active bands, attenuates others by $\gamma_d$}
  \Else
        \State $x_{\mathrm{new}} \gets x_{\mathrm{cand}}$
  \EndIf
  \State \textbf{(7) Advance.}
        $x_{\mathrm{prev}} \gets x$;\quad $x \gets x_{\mathrm{new}}$;\quad $t \gets t_{\mathrm{next}}$
\EndWhile
\State \Return $x$
\end{algorithmic}
\end{algorithm}

\begin{algorithm}[!h]
\caption{\textsc{BandActivity}: dual energy / dynamics gate over rFFT bands}
\label{alg:band_activity}
\begin{algorithmic}[1]
\Require Two consecutive states $x_{t-1},x_t \in \mathbb{R}^{B\times L\times d}$;
band partition $\mathcal{B}$ with index sets $\{\mathcal{I}_B\}$;
thresholds $(\tau_{\mathrm{energy}},\tau_{\mathrm{mag}},\tau_{\mathrm{phase}})$;
small constant $\varepsilon>0$.
\Ensure Active band set $\mathcal{A}_t \subseteq \mathcal{B}$.
\State $X_{t-1}\gets\mathrm{rFFT}(x_{t-1},\text{dim}=L)$;\quad
       $X_t\gets\mathrm{rFFT}(x_t,\text{dim}=L)$
       \Comment{both in $\mathbb{C}^{B\times F\times d}$}
\State $p_{t-1}[f]\gets\sum_{j=1}^{d}|X_{t-1}[f,j]|^2$;\quad
       $p_t[f]\gets\sum_{j=1}^{d}|X_t[f,j]|^2$
\For{each band $B \in \mathcal{B}$}
   \State $P_{t-1}[B]\gets|\mathcal{I}_B|^{-1}\sum_{f\in\mathcal{I}_B} p_{t-1}[f]$;\quad
          $P_t[B]\gets|\mathcal{I}_B|^{-1}\sum_{f\in\mathcal{I}_B} p_t[f]$
   \State $\pi_t[B]\gets P_t[B]\big/\sum_{B'\in\mathcal{B}} P_t[B']$
          \Comment{relative band energy}
   \State $\delta_t[B]\gets
          \bigl|\log(P_t[B]+\varepsilon)-\log(P_{t-1}[B]+\varepsilon)\bigr|$
          \Comment{log-power drift}
   \State $\phi_t[f,j]\gets\angle\!\bigl(X_t[f,j]\,\overline{X_{t-1}[f,j]}\bigr)$
          for $f\in\mathcal{I}_B,\;j=1,\ldots,d$
          \Comment{per-channel phase difference}
   \State $\bar{\phi}_t[f]\gets d^{-1}\sum_{j=1}^{d}\phi_t[f,j]$
          \Comment{average across channels first}
   \State $w_t[f]\gets p_t[f]+\varepsilon$
   \State $v_{\phi,t}[B]\gets
          \sqrt{\sum_{f\in\mathcal{I}_B} w_t[f]\,\bar{\phi}_t[f]^2 \big/
                \sum_{f\in\mathcal{I}_B} w_t[f]}$
          \Comment{power-weighted phase velocity}
\EndFor
\State Reduce $(\pi_t,\delta_t,v_{\phi,t})$ over the batch by mean
\State $\mathcal{A}_t \gets \emptyset$
\For{$i=0,\ldots,|\mathcal{B}|-1$}
   \State $s_i \gets s_{\mathrm{hi}}$ \textbf{if} $i>0$ \textbf{else} $1$
          \Comment{phase boost $s_{\mathrm{hi}}\!\ge\!1$ on non-DC bands}
   \If{$\pi_t[B_i]\ge\tau_{\mathrm{energy}}$
        \textbf{and}
        $\bigl(\delta_t[B_i]\ge\tau_{\mathrm{mag}}
        \;\textbf{or}\; s_i\,v_{\phi,t}[B_i]\ge\tau_{\mathrm{phase}}\bigr)$}
        \State $\mathcal{A}_t \gets \mathcal{A}_t \cup \{B_i\}$
   \EndIf
\EndFor
\State \Return $\mathcal{A}_t$
\end{algorithmic}
\end{algorithm}

\begin{algorithm}[!h]
\caption{\textsc{DDIMJump}: deterministic DDIM update from $t$ to $t_{\mathrm{next}}$}
\label{alg:ddim_jump}
\begin{algorithmic}[1]
\Require Current state $x_t$; indices $t,t_{\mathrm{next}}$;
predicted noise $\hat{\epsilon}_\theta(x_t,t)$ and clean sample $\hat{x}_0(x_t,t)$;
schedule $\{\rho_t\}$; stochasticity $\eta\ge 0$.
\Ensure $x_{t_{\mathrm{next}}}$.
\State Use $\rho_t$ and $\rho_{t_{\mathrm{next}}}$ from the schedule, with the boundary
       convention $\rho_{-1}\triangleq 1$
\State $\sigma \gets
        \eta\sqrt{\max\!\bigl((1-\rho_{t_{\mathrm{next}}}/\rho_t)(1-\rho_t)/(1-\rho_{t_{\mathrm{next}}}+\varepsilon),0\bigr)}$
\State $c \gets \sqrt{\max\!\bigl(1-\rho_{t_{\mathrm{next}}}-\sigma^2,\;0\bigr)}$
\State $z \gets \mathcal{N}(0,I)$ \textbf{if} $\sigma>0$ \textbf{else} $0$
\State \Return $\sqrt{\rho_{t_{\mathrm{next}}}}\,\hat{x}_0(x_t,t) + c\,\hat{\epsilon}_\theta(x_t,t) + \sigma z$
\end{algorithmic}
\end{algorithm}

\begin{algorithm}[!h]
\caption{\textsc{DPMSolver2Jump}: DPM-Solver-2 multistep jump (uses $\hat{\epsilon}_{\mathrm{prev}}$)}
\label{alg:dpm2_jump}
\begin{algorithmic}[1]
\Require Current state $x_t$; indices $t,t_{\mathrm{next}}$;
current and previous noise predictions $\hat{\epsilon}_{\mathrm{curr}},\hat{\epsilon}_{\mathrm{prev}}$;
schedule $\{\rho_t\}$; history-step constant $h_{\mathrm{prev}}>0$.
\Ensure $x_{t_{\mathrm{next}}}$.
\State $\lambda(\rho)\triangleq\tfrac{1}{2}\log\!\bigl(\rho/(1-\rho)\bigr)$
       \Comment{half log-SNR}
\State $\lambda_t\gets\lambda(\rho_t)$;\quad
       $\lambda_{\mathrm{next}}\gets\lambda(\rho_{t_{\mathrm{next}}})$
\State $h \gets \lambda_{\mathrm{next}}-\lambda_t$;\quad $r \gets h_{\mathrm{prev}}/h$
\State $\hat{\epsilon}^{(2)} \gets
        \hat{\epsilon}_{\mathrm{curr}} + \tfrac{1}{2r}\bigl(\hat{\epsilon}_{\mathrm{curr}}
        -\hat{\epsilon}_{\mathrm{prev}}\bigr)$
       \Comment{second-order extrapolation}
\State \Return $\sqrt{\rho_{t_{\mathrm{next}}}/\rho_t}\,x_t
        - \sqrt{1-\rho_{t_{\mathrm{next}}}}\,\bigl(e^{h}-1\bigr)\,\hat{\epsilon}^{(2)}$
\end{algorithmic}
\end{algorithm}

\paragraph{Optional rFFT soft projection.}
When \textsc{useProjection} is enabled, the per-step increment $\Delta=x_{\mathrm{cand}}-x$ is projected onto the rFFT bins of the active bands. Letting $\mathcal{F}$ denote the rFFT along the time axis and $M$ a diagonal mask that equals $1$ on $\bigcup_{B\in\mathcal{A}_t^{\mathrm{proj}}}\mathcal{I}_B$ and $\gamma_d\in[0,1]$ elsewhere,
we set
\begin{equation}
    \textsc{BandSoftProject}(\Delta,\mathcal{B},\mathcal{A}_t^{\mathrm{proj}},\gamma_d)
    =
    \mathcal{F}^{-1}\bigl(M\odot \mathcal{F}\Delta\bigr).
\end{equation}
Setting $\gamma_d=0$ recovers the hard projector $\mathcal{P}_{\mathcal{A}_t^{\mathrm{proj}}}$ analyzed in Sec.~\ref{sec:theory_spectral_stability}; we use a small $\gamma_d>0$ to avoid abrupt cliff edges in the spectrum across consecutive jumps.

\paragraph{Conditional inference.}
For imputation and forecasting (Tables~\ref{tab:cond_impute}-\ref{tab:cond_forecast}), Algorithm~\ref{alg:stride_diffusion} is reused with two modifications inside the while loop, mirroring the Diffusion-TS conditional sampler. (i) After step (5) we optionally apply a Langevin refinement on $x_{\mathrm{new}}$ that minimizes $\|\hat{x}_0[\,\mathrm{mask}\,]-\mathrm{target}[\,\mathrm{mask}\,]\|^2$ under the model's posterior at level $t$. (ii) After step (6) we overwrite the observed entries with $q(\cdot\mid\mathrm{target},t_{\mathrm{next}})$ so that the known region remains consistent with the forward noise schedule at the post-jump level. Implementation details follow \texttt{sample\_infill\_banded} in our code release.

\section{Speed-Quality Pareto Frontier on Unconditional Generation}
\label{sec:pareto_appendix}

\begin{figure}[!h]
\centering
\includegraphics[width=\linewidth]{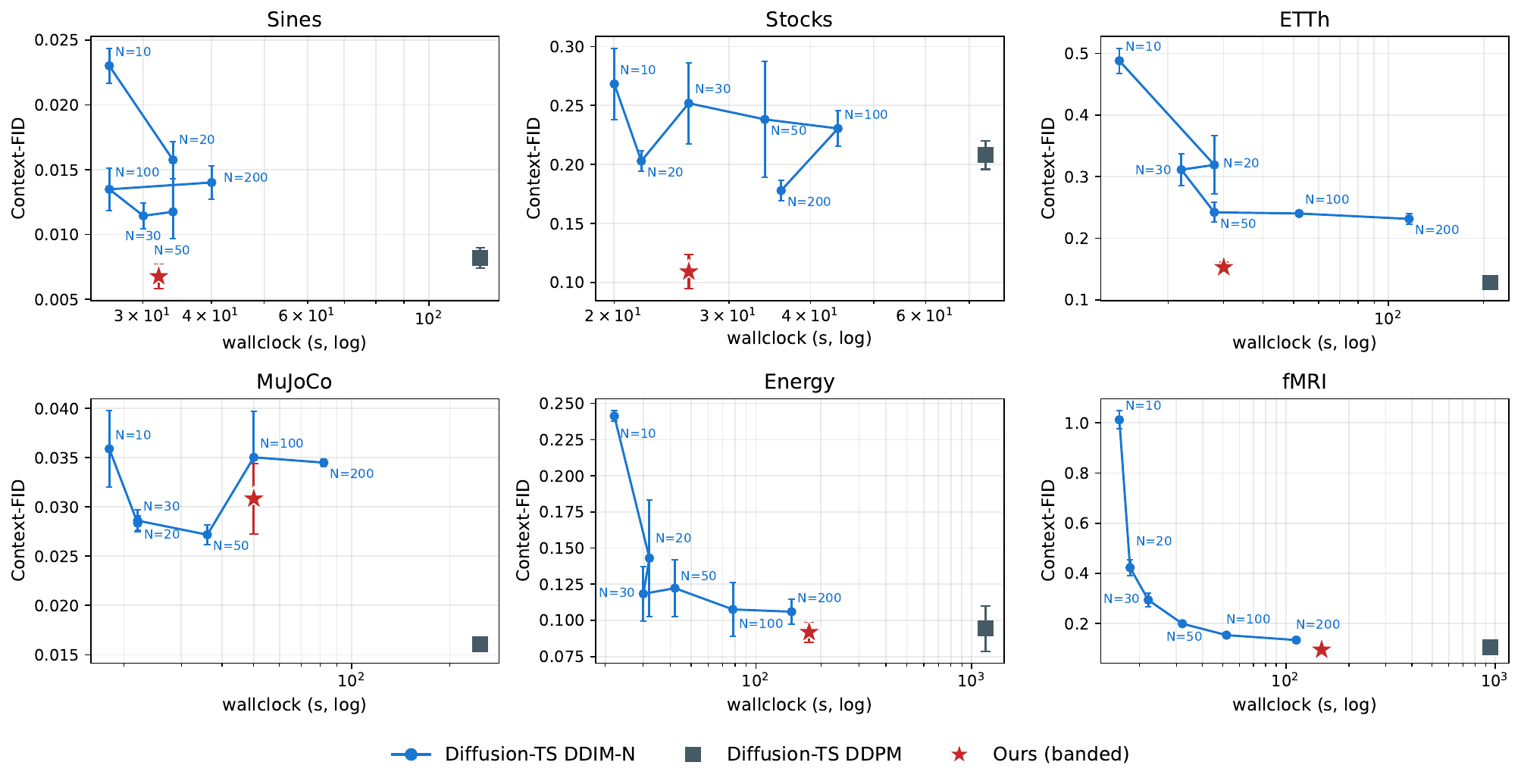}
\caption{Speed-quality Pareto frontier on unconditional generation. Each panel plots wallclock (log-scale, seconds) vs.\ Context-FID for one dataset. Blue: Diffusion-TS DDIM-$N$ for $N\in\{10,20,30,50,100,200\}$ (smaller $N$ = faster, larger $N$ = closer to full DDPM). Gray square: full-$T$ DDPM. Red star: our band-aware sampler with the per-dataset schedule of Sec.~\ref{sec:exp_setup}. Error bars are $\pm 1$ standard deviation across three Context-FID re-evaluations on a fixed sample set.}
\label{fig:pareto_uncond}
\end{figure}

To complement Table~\ref{tab:results-unconditional-generation}, we contrast our sampler with a wider set of DDIM step budgets on the same Diffusion-TS backbone. For each dataset, we run vanilla full-$T$ DDPM and DDIM-$N$ for $N\in\{10,20,30,50,100,200\}$, recording wallclock and Context-FID per (method, $N$) cell. Figure~\ref{fig:pareto_uncond} plots the resulting (wallclock, Context-FID) scatter on log-scaled axes, with the DDIM-$N$ points connected to expose the speed-quality trade-off as $N$ varies.

On \textsc{sines}, \textsc{stocks}, \textsc{etth}, \textsc{energy}, and \textsc{fmri}, our sampler (red star) lies strictly below the DDIM-$N$ curve, i.e. it dominates every uniform-stride budget in \emph{both} quality and wallclock. \textsc{mujoco} is the one exception: full-$T$ DDPM drives Context-FID down to $0.016$ that no DDIM budget recovers, and our sampler matches the best DDIM-$N$ point ($\sim 0.030$) at slightly higher cost. We attribute this to \textsc{mujoco}'s simulated, smoothly-mixing dynamics, which appear to benefit disproportionately from full-$T$ stochastic ancestral sampling rather than from any deterministic accelerator. The remaining five datasets, especially the real-world ones (\textsc{stocks}, \textsc{etth}, \textsc{energy}, \textsc{fmri}), show the band-aware schedule extracts more quality per second than uniform DDIM stride at every budget we tried.

\section{Conditional Generation under StrideDiffusion}

\begin{table}[h!]
  \centering
  \caption{Conditional generation results for Imputation across missing ratios. Mean across seeds.}
  \label{tab:cond_impute_lv}
  \resizebox{\linewidth}{!}{%
  \renewcommand\arraystretch{0.99}
\begin{tabular}{clcccccccc}
    \toprule
    Missing Ratio & Method & \multicolumn{2}{c}{Stocks} & \multicolumn{2}{c}{ETTh} & \multicolumn{2}{c}{Energy} & \multicolumn{2}{c}{fMRI} \\
    \cmidrule(lr){3-4} \cmidrule(lr){5-6} \cmidrule(lr){7-8} \cmidrule(lr){9-10}
     &  & MSE $\downarrow$ & Time(s) $\downarrow$ & MSE $\downarrow$ & Time(s) $\downarrow$ & MSE $\downarrow$ & Time(s) $\downarrow$ & MSE $\downarrow$ & Time(s) $\downarrow$ \\
    \midrule
    \multirow{4}{*}{0.1} & Diffusion-TS & 0.010 & 56.97 & 0.002 & 214.69 & 0.011 & 716.59 & 0.017 & 436.07 \\
     & Diffusion-TS-fast & 0.009 & 22.75 & 0.002 & 88.92 & 0.010 & 146.66 & 0.026 & 85.99 \\
     & Ours w/o Langevin & 0.008 & 8.79 & 0.003 & 12.28 & 0.011 & 156.95 & 0.022 & 22.90 \\
     & Ours with Langevin & \textbf{0.008} & 25.44 & \textbf{0.002} & 18.91 & \textbf{0.010} & 120.78 & \textbf{0.015} & 96.22 \\
    \midrule
    \multirow{4}{*}{0.25} & Diffusion-TS & 0.010 & 55.93 & 0.002 & 210.79 & 0.013 & 704.15 & 0.018 & 418.51 \\
     & Diffusion-TS-fast & 0.009 & 22.37 & 0.003 & 89.02 & 0.012 & 139.57 & 0.028 & 86.54 \\
     & Ours w/o Langevin & 0.008 & 9.13 & 0.004 & 12.20 & 0.014 & 175.00 & 0.023 & 26.78 \\
     & Ours with Langevin & \textbf{0.008} & 15.47 & 0.003 & 11.75 & \textbf{0.013} & 119.17 & \textbf{0.015} & 92.91 \\
    \midrule
    \multirow{4}{*}{0.5} & Diffusion-TS & 0.011 & 56.24 & 0.003 & 217.75 & 0.016 & 699.99 & 0.020 & 420.23 \\
     & Diffusion-TS-fast & 0.009 & 22.98 & 0.003 & 89.67 & 0.016 & 135.56 & 0.033 & 82.77 \\
     & Ours w/o Langevin & 0.008 & 11.53 & 0.007 & 10.98 & 0.019 & 177.47 & 0.025 & 18.90 \\
     & Ours with Langevin & \textbf{0.008} & 8.33 & \textbf{0.003} & 19.06 & \textbf{0.016} & 121.43 & \textbf{0.017} & 86.08 \\
    \midrule
    \multirow{4}{*}{0.75} & Diffusion-TS & 0.012 & 56.46 & 0.004 & 219.53 & 0.019 & 681.06 & 0.022 & 420.89 \\
     & Diffusion-TS-fast & 0.009 & 23.71 & 0.004 & 86.64 & 0.019 & 140.39 & 0.037 & 86.03 \\
     & Ours w/o Langevin& 0.018 & 16.74 & 0.015 & 8.99 & 0.032 & 179.90 & 0.028 & 24.17 \\
     & Ours with Langevin & \textbf{0.008} & 7.30 & 0.005 & 18.99 & \textbf{0.019} & 120.73 & \textbf{0.021} & 78.07 \\
    \midrule
    \multirow{4}{*}{0.9} & Diffusion-TS & 0.012 & 55.44 & 0.004 & 213.46 & 0.019 & 676.07 & 0.022 & 418.91 \\
     & Diffusion-TS-fast & 0.009 & 22.04 & 0.004 & 98.13 & 0.019 & 137.64 & 0.037 & 87.28 \\
     & Ours w/o Langevin & 0.018 & 12.82 & 0.015 & 8.33 & 0.033 & 171.46 & 0.028 & 19.09 \\
     & Ours with Langevin & \textbf{0.008} & 7.61 & 0.005 & 17.65 & \textbf{0.019} & 129.11 & \textbf{0.021} & 77.04 \\
 \midrule
\multicolumn{2}{c}{\textbf{Avg. Speedup vs Diffusion-TS}}    
& \multicolumn{2}{c}{\cellcolor{blue!10} \textbf{5.53$\times$}}                   
& \multicolumn{2}{c}{\cellcolor{blue!10} \textbf{12.87$\times$}}             
& \multicolumn{2}{c}{\cellcolor{blue!10} \textbf{5.70$\times$}}             
& \multicolumn{2}{c}{\cellcolor{blue!10} \textbf{4.95$\times$}}             \\
\bottomrule   
  \end{tabular}}
\end{table}

\begin{table}[h!]
  \centering
  \caption{Conditional generation results for Forecasting across prediction lengths. Mean across seeds.}
  \label{tab:cond_forecast}
  \resizebox{\linewidth}{!}{%
  \renewcommand\arraystretch{0.99}
  \begin{tabular}{clcccccccc}
    \toprule
    \multirow{2}{*}{Length} & \multirow{2}{*}{Method} & \multicolumn{2}{c}{Stocks} & \multicolumn{2}{c}{ETTh} & \multicolumn{2}{c}{Energy} & \multicolumn{2}{c}{fMRI} \\
    \cmidrule(lr){3-4} \cmidrule(lr){5-6} \cmidrule(lr){7-8} \cmidrule(lr){9-10}
     &  & MSE $\downarrow$ & Time(s) $\downarrow$ & MSE $\downarrow$ & Time(s) $\downarrow$ & MSE $\downarrow$ & Time(s) $\downarrow$ & MSE $\downarrow$ & Time(s) $\downarrow$ \\
    \midrule
    \multirow{2}{*}{6} & Diffusion-TS & 0.027 & 82.00 & 0.010 & 291.75 & 0.026 & 965.94 & 0.024 & 662.12 \\
     & Ours & \textbf{0.010} & \textbf{18.41} & 0.011 & 19.66 & \textbf{0.025} & \textbf{163.74} & \textbf{0.022} & \textbf{108.43} \\
    \midrule
    \multirow{2}{*}{12} & Diffusion-TS & 0.032 & 89.58 & 0.011 & 339.02 & 0.026 & 1043.88 & 0.025 & 631.89 \\
     & Ours& \textbf{0.009} & \textbf{17.54} & 0.013 & 23.84 & \textbf{0.025} & \textbf{161.75} & \textbf{0.023} & \textbf{116.28} \\
    \midrule
    \multirow{2}{*}{24} & Diffusion-TS & 0.028 & 85.30 & 0.011 & 270.15 & 0.027 & 1021.27 & 0.027 & 573.39 \\
     & Ours & \textbf{0.008} & \textbf{18.49} & 0.014 & 20.89 & \textbf{0.026} & \textbf{165.20} & \textbf{0.024} & \textbf{117.82} \\
    \midrule
    \multirow{2}{*}{36} & Diffusion-TS & 0.021 & 110.82 & 0.013 & 300.66 & 0.029 & 977.73 & 0.027 & 635.09 \\
     & Ours   & \textbf{0.009} & \textbf{20.32} & 0.018 & 21.88 & \textbf{0.028} & \textbf{157.23} & \textbf{0.025} & \textbf{113.77} \\
 \midrule
\multicolumn{2}{c}{\textbf{Avg. Speedup vs Diffusion-TS}}    
& \multicolumn{2}{c}{\cellcolor{blue!10} \textbf{4.91$\times$}}                   
& \multicolumn{2}{c}{\cellcolor{blue!10} \textbf{13.93$\times$}}             
& \multicolumn{2}{c}{\cellcolor{blue!10} \textbf{6.19$\times$}}             
& \multicolumn{2}{c}{\cellcolor{blue!10} \textbf{5.50$\times$}}             \\
\bottomrule
\end{tabular}}
\end{table}

Our conditional sampler keeps the optional Langevin step from Diffusion-TS, which pulls the observed entries back toward their target after each denoising step. Tables~\ref{tab:cond_impute_lv} and~\ref{tab:cond_forecast} report both versions. At low missing ratios and short horizons the two are nearly identical: Stocks imputation sits at $0.008$ MSE up to ratio $0.5$ either way. The picture changes once the task gets harder. ETTh imputation jumps from $0.005$ to $0.015$ at ratio $0.75$ when we drop Langevin, and Stocks forecasting at horizon $36$ goes from $0.015$ to $0.072$.

\section{Ablation: Components of Frequency-Aware Sampling}
\label{sec:ablation_appendix}

The frequency-aware sampler exposes a small set of hyperparameters
($l_{\text{coarse}}$, $l_{\text{mid}}$, $l_{\text{fine}}$, the late-step micro window $K_{\mathrm{micro}}$, and the gating thresholds $\tau_{\mathrm{energy}}$, $\tau_{\mathrm{mag}}$,
$\tau_{\mathrm{phase}}$) that trade off sampling speed against generation quality.

Both axes vary across datasets: highly periodic signals such as \textsc{sines} tolerate large jumps and admit larger speedups, while spectrally rich signals such as \textsc{fmri} reward more conservative
schedules with tighter quality tracking. We therefore do not seek a single universal configuration. Instead, for each dataset we select the hyperparameter setting that balances aggressive speedup against Context-FID parity with vanilla full-$T$ DDPM, and report this configuration as our \textbf{Full} model. All ablation variants share the same per-dataset checkpoint and evaluation protocol; only the ablated ingredient is changed (Tables~\ref{tab:ablation_A} and~\ref{tab:ablation_A_full}).

We compare five settings:
\textbf{Full (ours)} is the complete frequency-aware sampler tuned per dataset for the speed-quality balance described above;
\textbf{Vanilla DDPM} disables the adaptive leap schedule entirely (equivalent to a full-$T$ ancestral sampler);
\textbf{w/o gate} removes the band-activity gate, forcing the maximum jump $l_{\text{coarse}}$ at every step except the late-step refinement window;
\textbf{w/o late-step micro} sets the late-step refinement window to zero; and \textbf{w/o }$\tau_{\mathrm{energy}}$ disables the energy threshold inside the gate so that band activity is decided purely by phase and magnitude signals.

\paragraph{Large speedup with quality on par with vanilla DDPM.}
Across all four datasets, our balanced configuration delivers a $13$-$21\times$ wallclock speedup over vanilla full-$T$ DDPM while keeping Context-FID within a tight band of the vanilla baseline (better on \textsc{sines}, \textsc{stocks}, \textsc{fmri}; within $8\%$ on \textsc{energy}). The exact speed-quality trade-off is
dataset-dependent \textsc{energy} admits the largest acceleration ($21\times$) while \textsc{stocks} sits at $13\times$ but the qualitative conclusion is uniform: the standard single-step DDPM trajectory is significantly over-resolved for time-series generation, and a band-aware schedule recovers an order of magnitude in cost without measurable quality regression.

\paragraph{The band-activity gate is necessary for the speed gains to be safe.}
Removing the gate (\emph{w/o gate}) is the fastest variant by construction, but Context-FID degrades by $39$-$3680\%$ on \textsc{sines}, \textsc{stocks}, and \textsc{energy}. This shows that indiscriminate large jumps cannot be applied without the gate's band-aware decision; the speedups reported above hold only because the gate adapts the step size to the local frequency content. \textsc{fmri} is the only dataset on which the gate is unnecessary, which we attribute to its near-uniform spectral profile when band activity is roughly constant in time, the gating decision adds no information.

\paragraph{Energy is the load-bearing gate signal.}
Disabling the energy threshold (\emph{w/o }$\tau_{\mathrm{energy}}$) lowers the activity decision so that bands are flagged active too easily, forcing more small steps: sampling slows by up to $4\times$ relative to the full method while quality also degrades on every dataset ($+6$ to $+65\%$ Context-FID). This identifies energy, rather than the auxiliary phase or magnitude signals, as the primary driver of the gating behavior in practice.

\paragraph{Late-step micro refinement is a low-cost safeguard.}
Removing late-step micro refinement (\emph{w/o late-step micro}) hurts Context-FID on spectrally rich signals (\textsc{sines} $+14\%$, \textsc{fmri} $+11\%$) but is essentially a no-op on smoother sequences (\textsc{stocks} $-0.3\%$, \textsc{energy} $-8\%$). We retain it as part of the balanced default since its overhead is small ($<2\times$ additional steps) and it prevents quality regression on signals with sharp high-frequency structure.

\paragraph{Summary.}
The speedups offered by our sampler are large but vary across datasets, and so does the resulting generation quality. The balanced configurations reported as \textbf{Full} are chosen to be fast \emph{and} match vanilla full-$T$ DDPM in Context-FID. Ablation identifies the band-activity gate driven by the energy threshold as the indispensable component that makes the underlying jump schedule both fast and quality-preserving, with late-step refinement as a small, robust safeguard.

\begin{table}[!h]
\centering
\caption{Transposed full ablation of frequency-aware sampling components. Time is wallclock seconds for one full sampling pass; speedup is relative to vanilla full-$T$ DDPM; lower is better for C-FID and Discriminative score (Disc). \textbf{Bold} marks the best value per row, with vanilla excluded from time/speedup comparison. The \textbf{Full (ours)} configuration is the balanced choice that trades a small amount of speed for sample quality on par with vanilla DDPM.}
\label{tab:ablation_A_full}
\resizebox{\textwidth}{!}{%
\begin{tabular}{llrrrrr}
\toprule
Dataset & Metric & \textbf{Full (ours)} & Vanilla DDPM & w/o gate & w/o late-step micro & w/o $\tau_{\mathrm{energy}}$ \\
\midrule

\multirow{4}{*}{\textsc{sines}}
& Time (s) $\downarrow$ 
& 9.30 & 130.32 & \textbf{3.14} & 6.17 & 38.76 \\
& Speedup $\uparrow$ 
& 14.0$\times$ & 1.0$\times$ & \textbf{41.5$\times$} & 21.1$\times$ & 3.4$\times$ \\
& C-FID $\downarrow$ 
& \textbf{0.0086} & 0.0161 & 0.3263 & 0.0099 & 0.0103 \\
& Disc $\downarrow$ 
& 0.0129 & 0.0110 & 0.1182 & \textbf{0.0104} & 0.0365 \\

\midrule

\multirow{4}{*}{\textsc{stocks}}
& Time (s) $\downarrow$ 
& 5.94 & 78.53 & \textbf{2.63} & 5.90 & 26.58 \\
& Speedup $\uparrow$ 
& 13.2$\times$ & 1.0$\times$ & \textbf{29.9$\times$} & 13.3$\times$ & 3.0$\times$ \\
& C-FID $\downarrow$ 
& 0.1275 & 0.2398 & 0.4064 & \textbf{0.1270} & 0.1959 \\
& Disc $\downarrow$ 
& 0.0888 & 0.1112 & 0.1108 & \textbf{0.0760} & 0.0903 \\

\midrule

\multirow{4}{*}{\textsc{energy}}
& Time (s) $\downarrow$ 
& 56.46 & 1178.48 & \textbf{28.01} & 49.45 & 115.24 \\
& Speedup $\uparrow$ 
& 20.9$\times$ & 1.0$\times$ & \textbf{42.1$\times$} & 23.8$\times$ & 10.2$\times$ \\
& C-FID $\downarrow$ 
& 0.0883 & 0.0817 & 0.1231 & \textbf{0.0813} & 0.0942 \\
& Disc $\downarrow$ 
& 0.1233 & 0.1421 & 0.2629 & 0.1362 & \textbf{0.1226} \\

\midrule

\multirow{4}{*}{\textsc{fmri}}
& Time (s) $\downarrow$ 
& 44.41 & 638.29 & 47.28 & \textbf{39.16} & 68.58 \\
& Speedup $\uparrow$ 
& 14.4$\times$ & 1.0$\times$ & 13.5$\times$ & \textbf{16.3$\times$} & 9.3$\times$ \\
& C-FID $\downarrow$ 
& \textbf{0.0937} & 0.1154 & 0.0937 & 0.1039 & 0.1548 \\
& Disc $\downarrow$ 
& 0.1021 & 0.1632 & \textbf{0.0981} & 0.1636 & 0.1874 \\

\bottomrule
\end{tabular}}
\end{table}

\section{Hyperparameter Sensitivity on All Datasets}
\label{sec:sensitivity_appendix}

We extend the sensitivity analysis of Sec.~\ref{sec:sensitivity} to all six datasets used in the unconditional generation experiments. For each dataset we sweep $l_{\mathrm{coarse}}\!\in\!\{10, 20, 30, 50, 100\}$, the late-step micro window $K_{\mathrm{micro}}\!\in\!\{0, 4, 8, 12, 20\}$, and $\tau_{\mathrm{phase}}\!\in\!\{0.02, 0.04, 0.08, 0.16, 0.32\}$ around its per-dataset balanced default (dotted vertical line in each panel), holding the remaining hyperparameters fixed. The qualitative patterns are consistent across datasets: $l_{\mathrm{coarse}}$ exhibits a sweet spot at the default, $K_{\mathrm{micro}}$ has a threshold-like effect, and $\tau_{\mathrm{phase}}$ has essentially no measurable effect on either C-FID or sampling time within the swept range. Together these confirm that the per-dataset hyperparameter choices used in our main experiments are well calibrated and that the sampler is robust to the choice of $\tau_{\mathrm{phase}}$.

\begin{figure}[!h]
\centering
\subfloat[\textsc{sines}\label{fig:hyper_appx_sines}]{%
\includegraphics[width=\linewidth]{./figs/hyper_sines.pdf}}\\[0.3em]
\subfloat[\textsc{stocks}\label{fig:hyper_appx_stocks}]{%
\includegraphics[width=\linewidth]{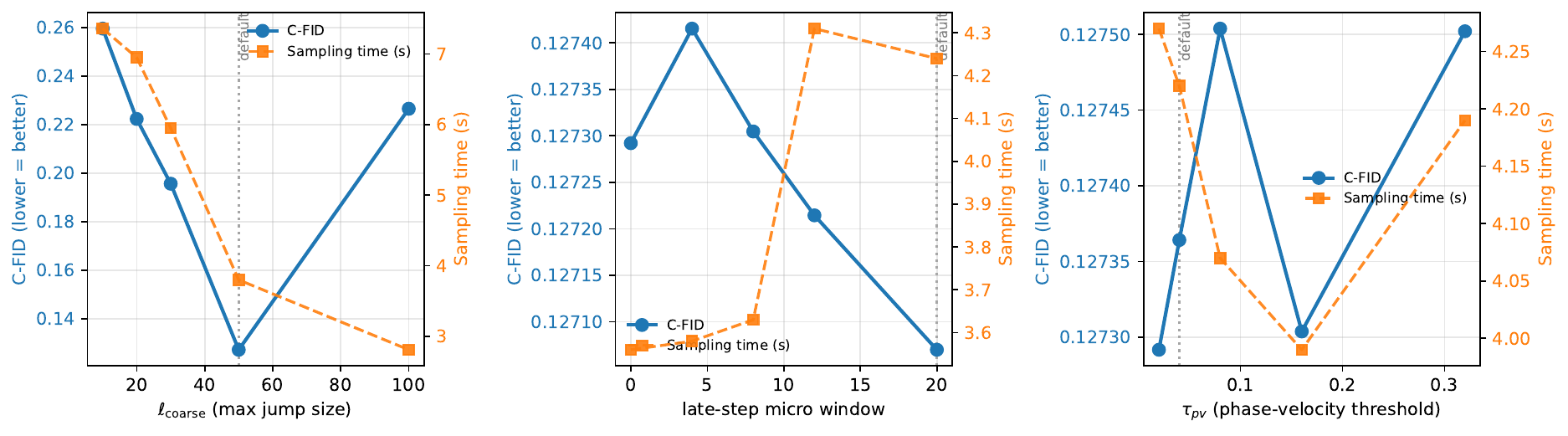}}\\[0.3em]
\subfloat[\textsc{etth}\label{fig:hyper_appx_etth}]{%
\includegraphics[width=\linewidth]{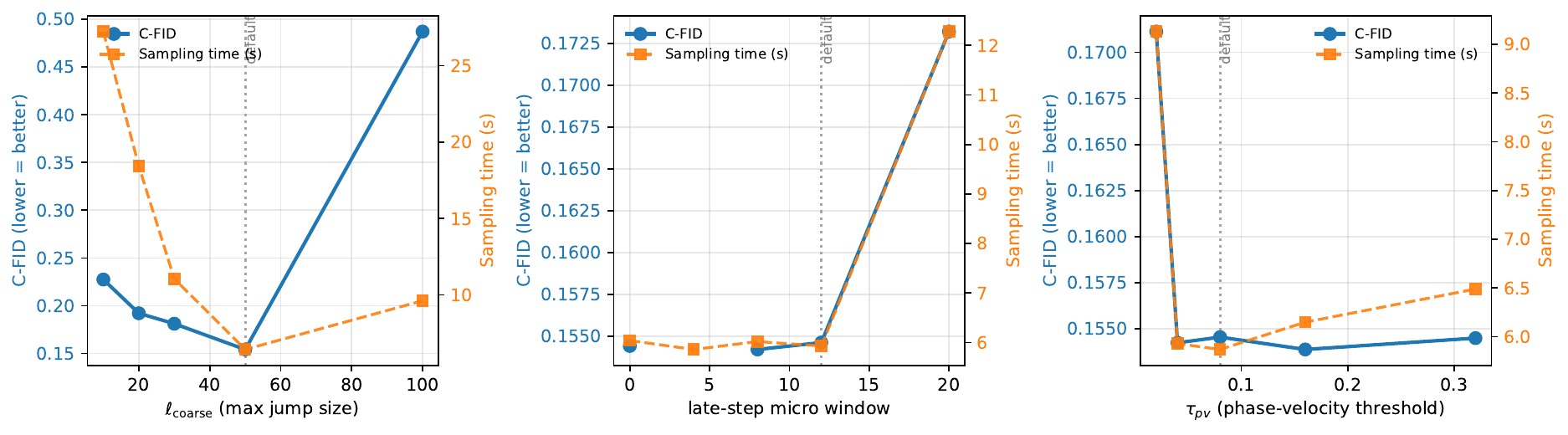}}
\caption{Hyperparameter sensitivity on (a) \textsc{sines}, (b) \textsc{stocks}, and (c) \textsc{etth}. Same protocol as Fig.~\ref{fig:hyper}: in each row, $l_{\mathrm{coarse}}$ shows a U-shape with the minimum at the default (dotted line), $K_{\mathrm{micro}}$ has a threshold-like effect, and $\tau_{\mathrm{phase}}$ has no measurable effect.}

\label{fig:hyper_appx_1}
\end{figure}

\begin{figure}[!h]
\centering
\subfloat[\textsc{mujoco}\label{fig:hyper_appx_mujoco}]{%
\includegraphics[width=\linewidth]{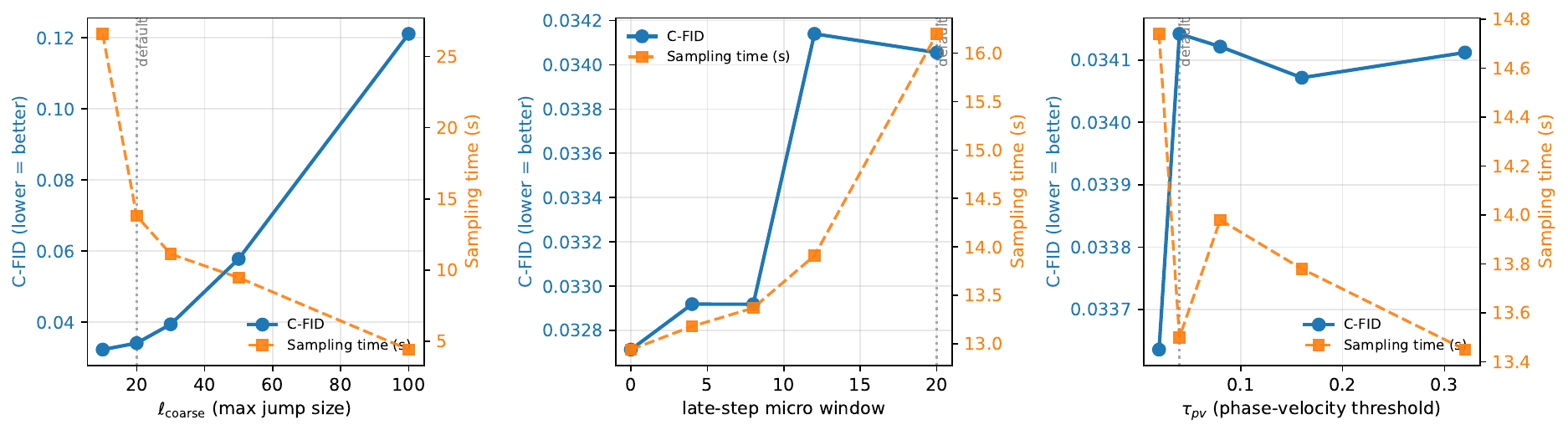}}\\[0.3em]
\subfloat[\textsc{energy}\label{fig:hyper_appx_energy}]{%
\includegraphics[width=\linewidth]{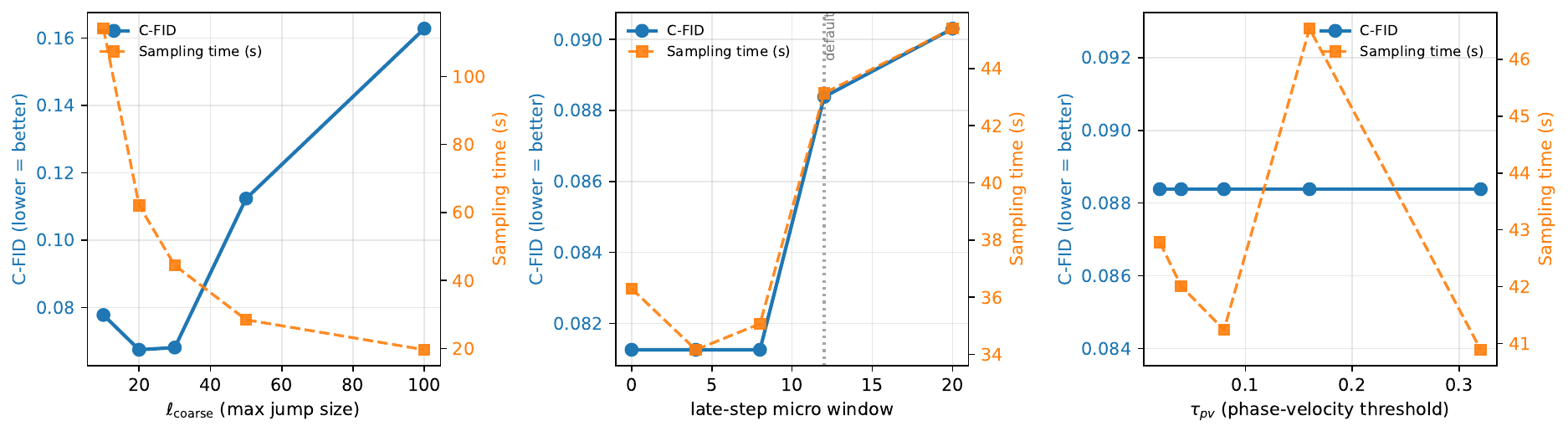}}\\[0.3em]
\subfloat[\textsc{fmri}\label{fig:hyper_appx_fmri}]{%
\includegraphics[width=\linewidth]{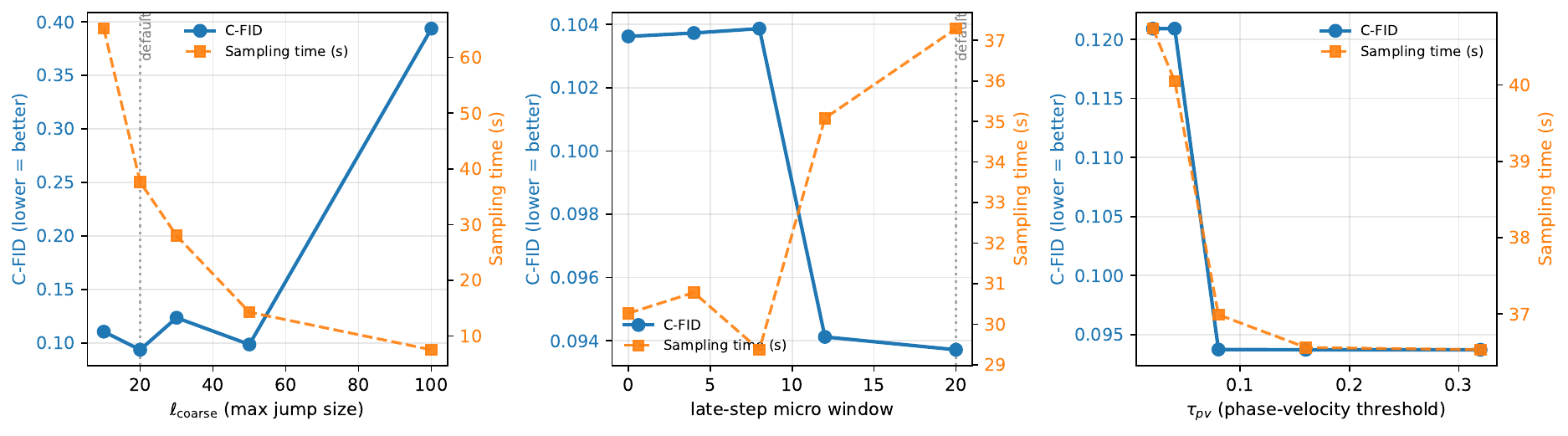}}
\caption{Hyperparameter sensitivity on (a) \textsc{mujoco}, (b) \textsc{energy}, and (c) \textsc{fmri}. Same protocol as Fig.~\ref{fig:hyper}.}
\label{fig:hyper_appx_2}
\end{figure}

\begin{figure}[!h]
\centering
\includegraphics[width=\linewidth]{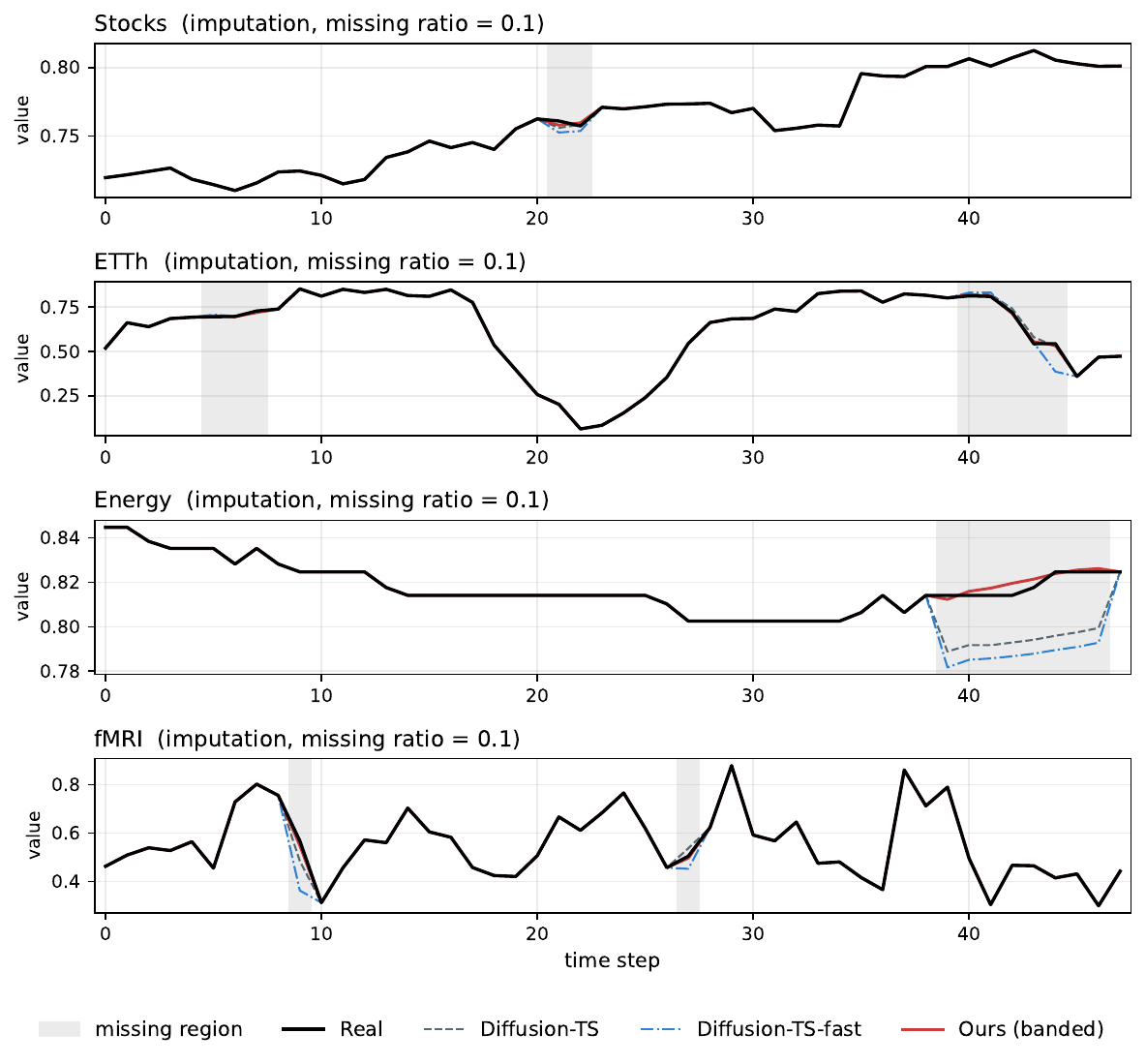}
\caption{Imputation, missing ratio $= 0.1$. Black: ground truth. Light gray shading: held-out positions. Gray dashed: Diffusion-TS DDPM. Blue dash-dot: Diffusion-TS-fast (DDIM-200). Red solid: our band-aware sampler.}
\label{fig:cond_qual_impute_mr0.1}
\end{figure}

\begin{figure}[!h]
\centering
\includegraphics[width=\linewidth]{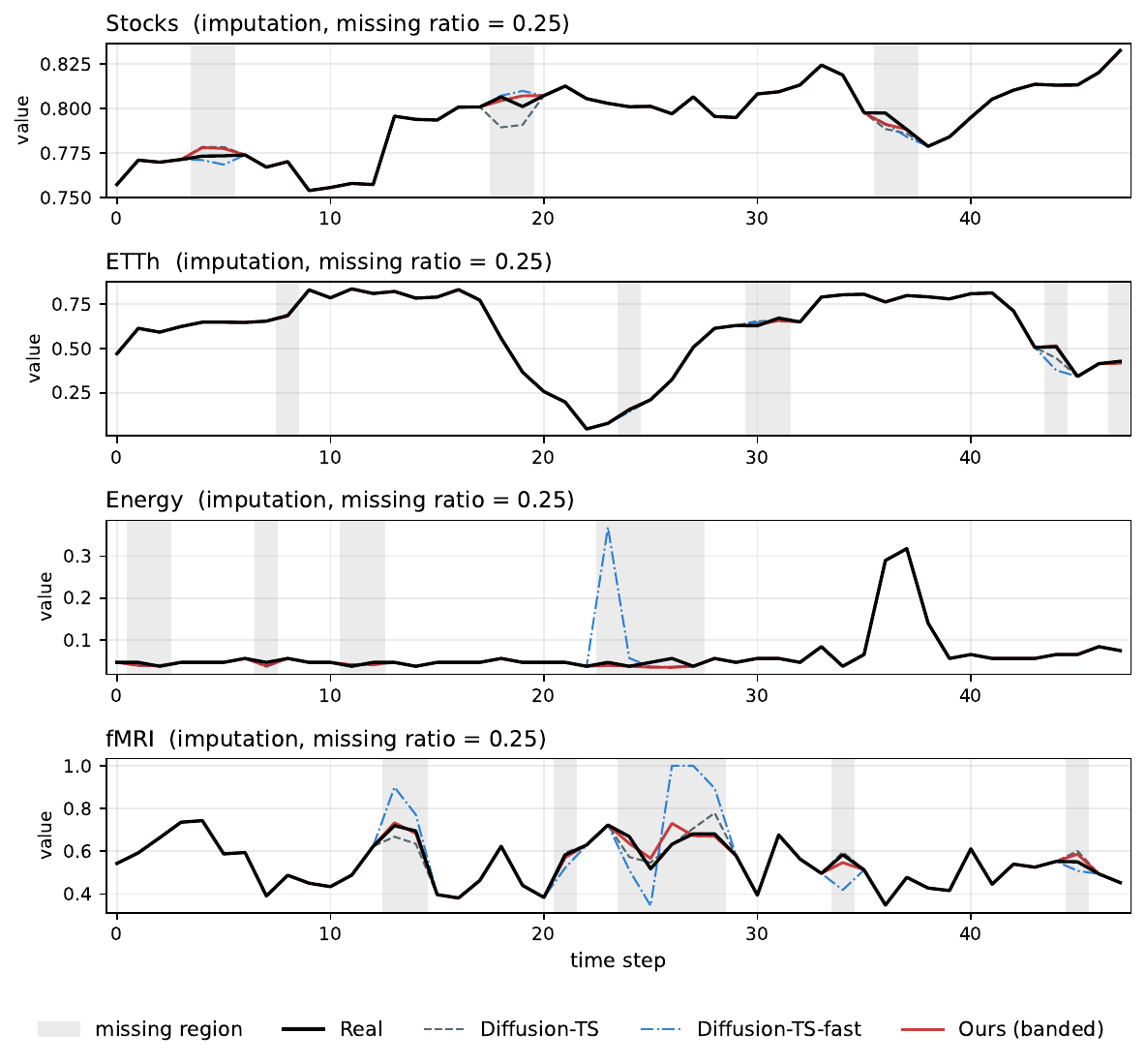}
\caption{Imputation, missing ratio $= 0.25$. Same layout as Fig.~\ref{fig:cond_qual_impute_mr0.1}.}
\label{fig:cond_qual_impute_mr0.25}
\end{figure}

\begin{figure}[!h]
\centering
\includegraphics[width=\linewidth]{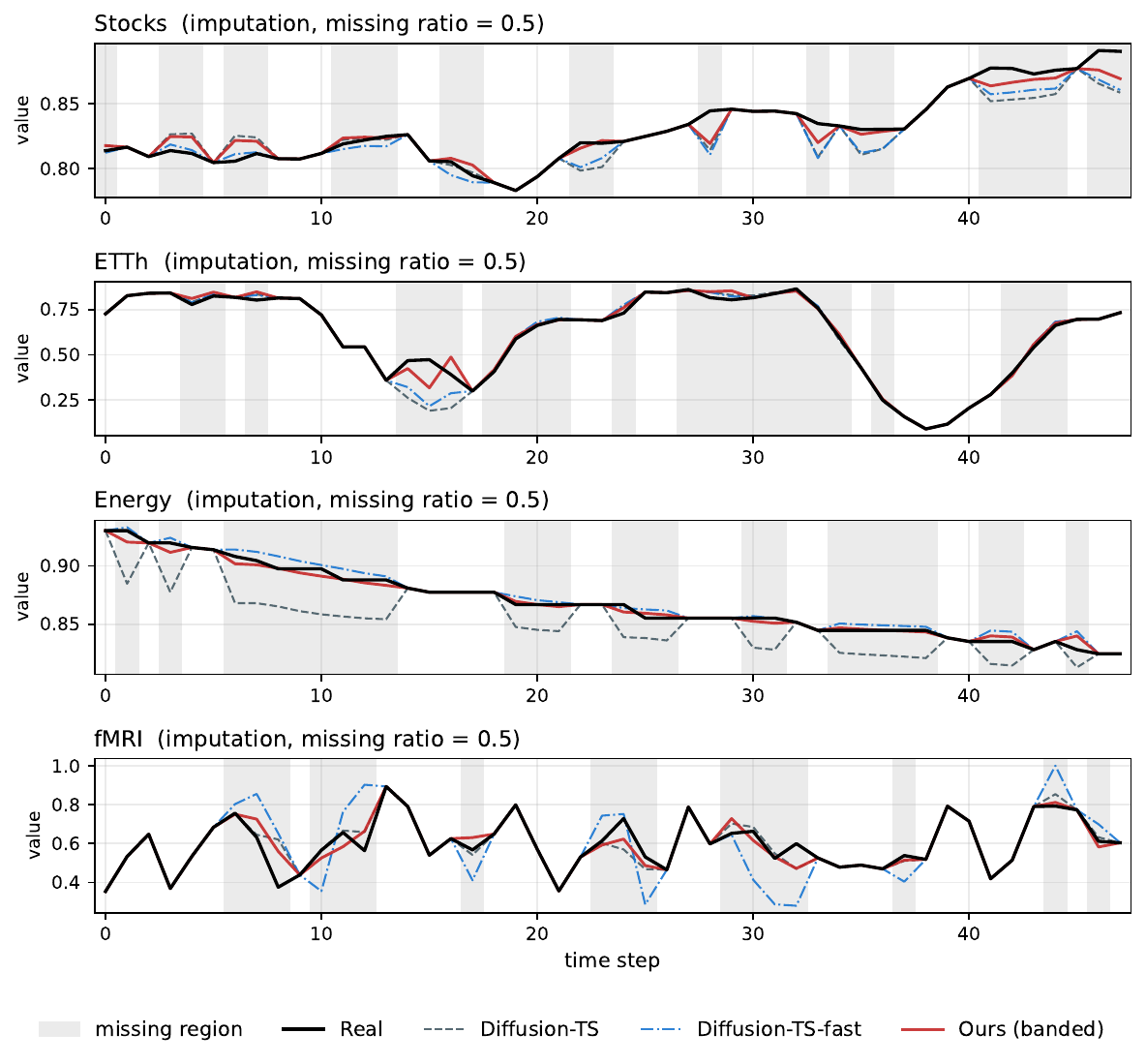}
\caption{Imputation, missing ratio $= 0.5$. Same layout as Fig.~\ref{fig:cond_qual_impute_mr0.1}.}
\label{fig:cond_qual_impute_mr0.5}
\end{figure}

\begin{figure}[!h]
\centering
\includegraphics[width=\linewidth]{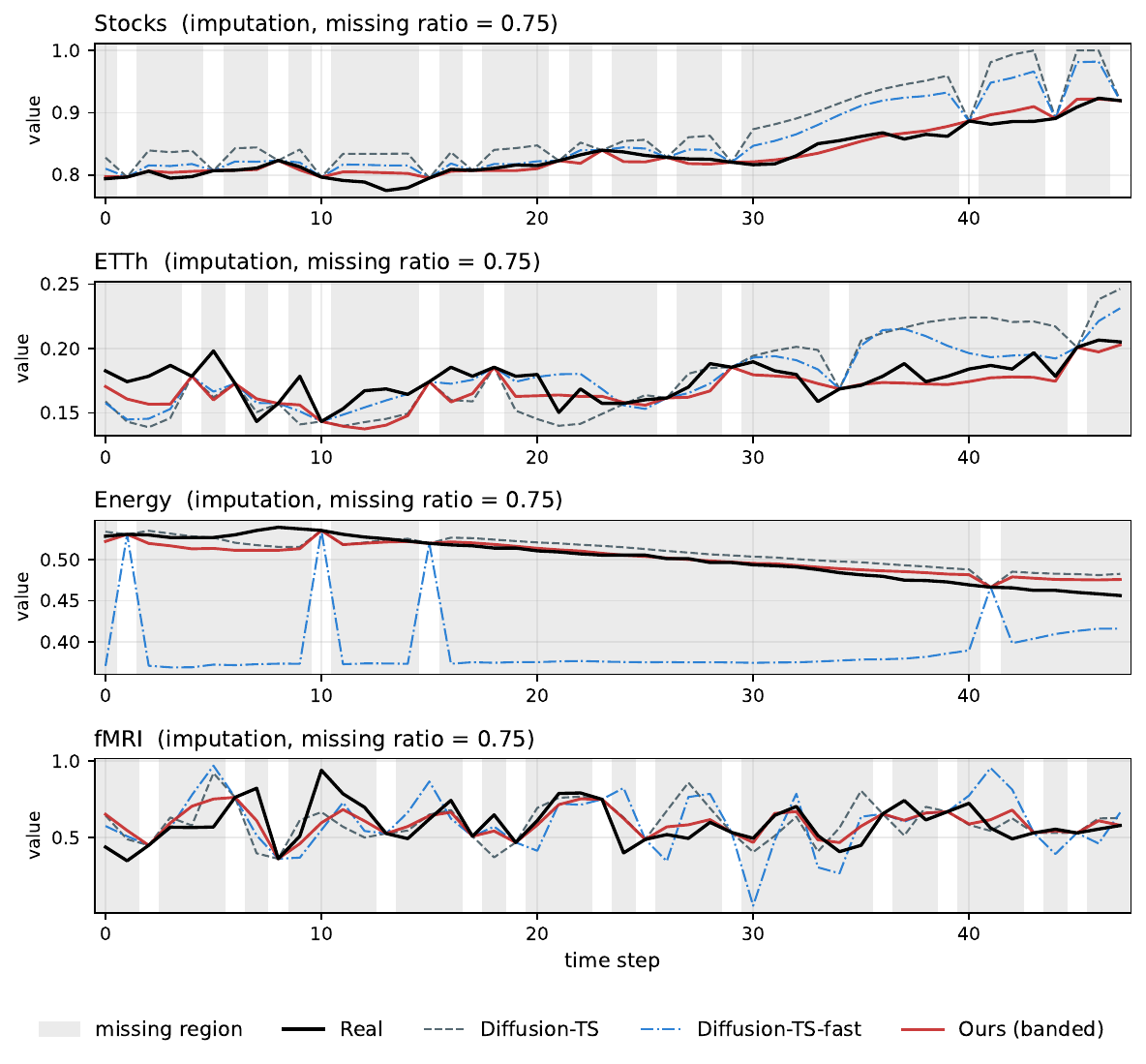}
\caption{Imputation, missing ratio $= 0.75$. Same layout as Fig.~\ref{fig:cond_qual_impute_mr0.1}.}
\label{fig:cond_qual_impute_mr0.75}
\end{figure}

\begin{figure}[!h]
\centering
\includegraphics[width=\linewidth]{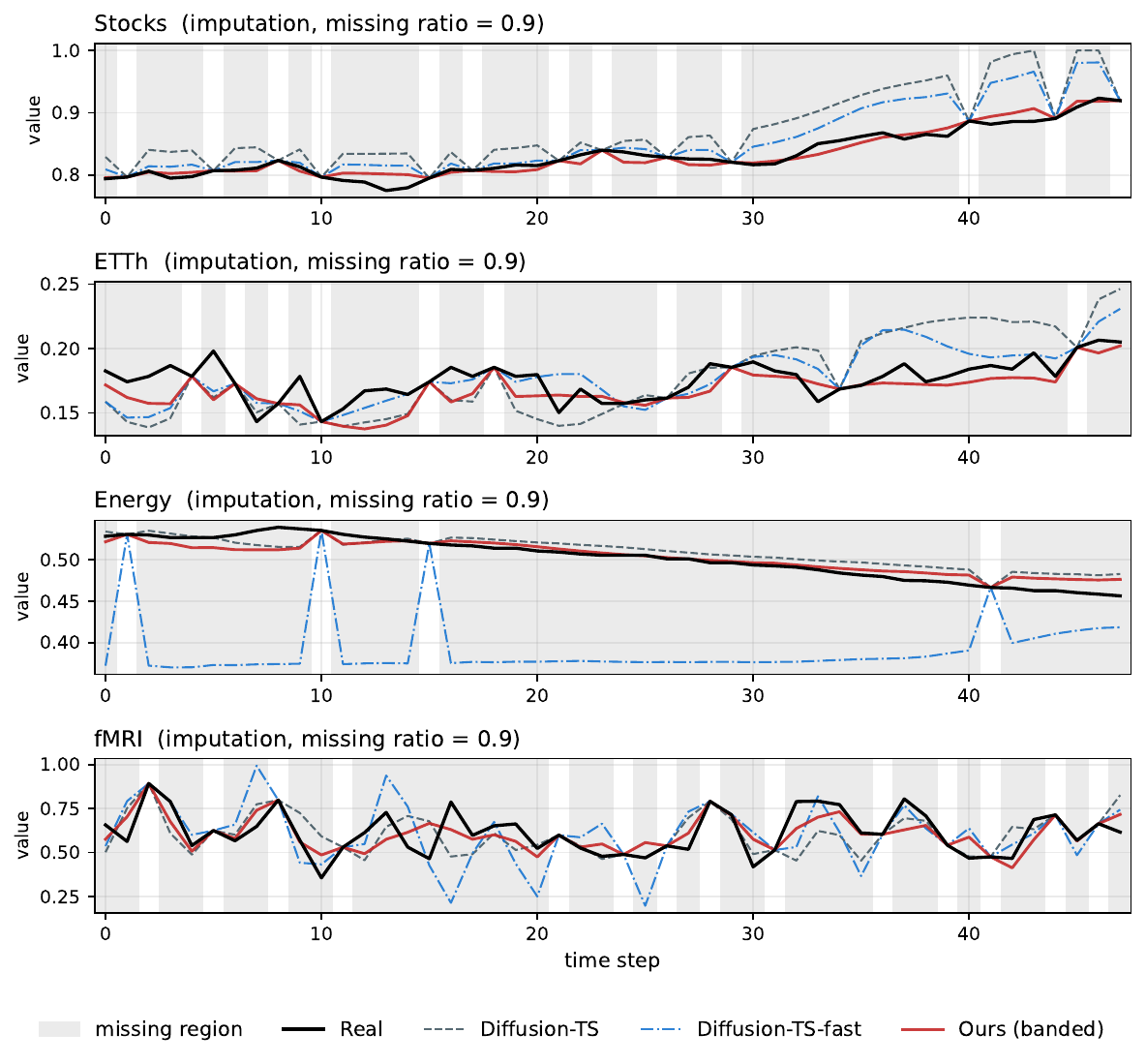}
\caption{Imputation, missing ratio $= 0.9$. Same layout as Fig.~\ref{fig:cond_qual_impute_mr0.1}.}
\label{fig:cond_qual_impute_mr0.9}
\end{figure}

\begin{figure}[!h]
\centering
\includegraphics[width=\linewidth]{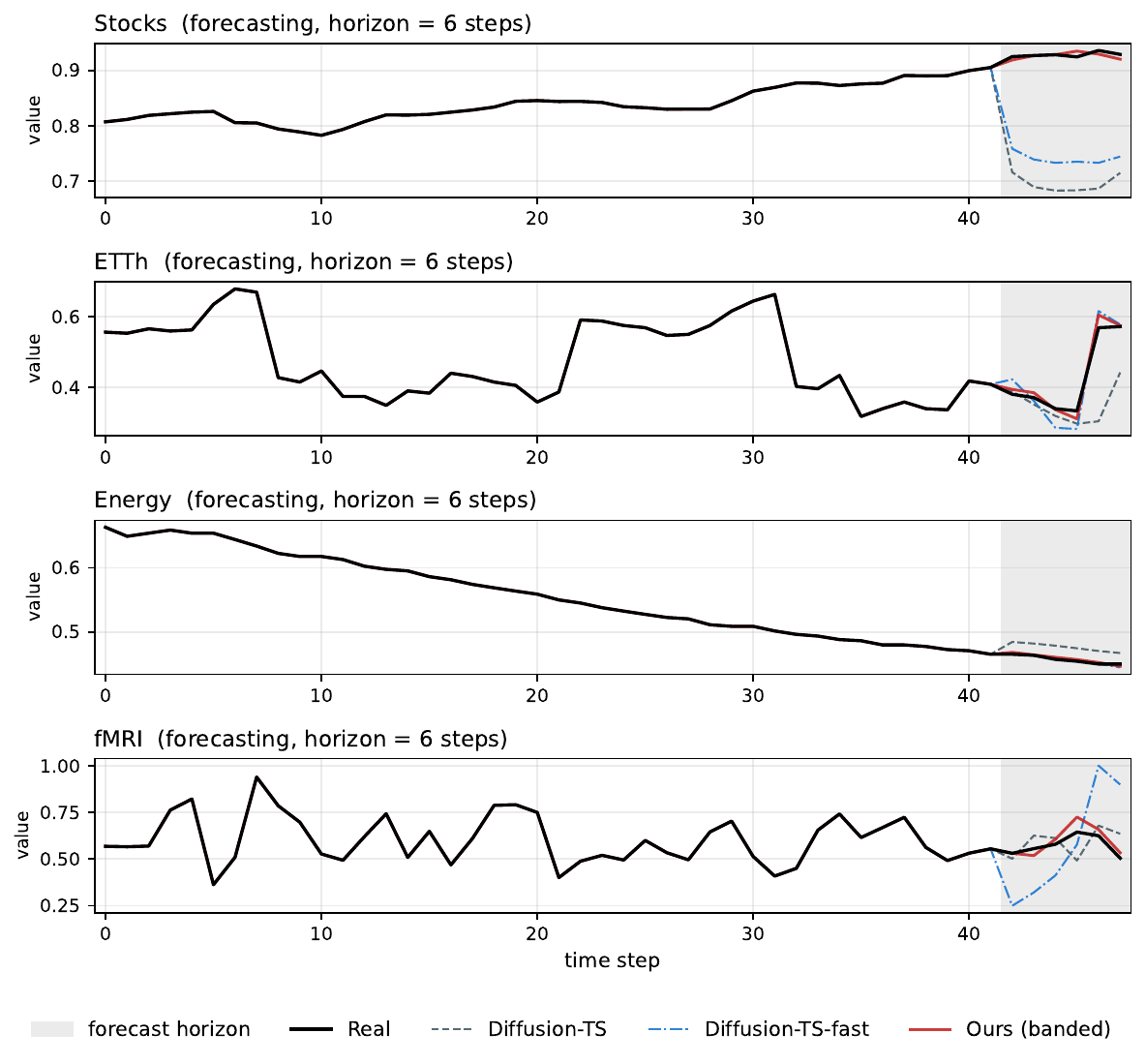}
\caption{Forecasting, prediction horizon $= 6$ steps. Same layout as Fig.~\ref{fig:cond_qual_impute_mr0.1}; the shaded region is the unobserved horizon.}
\label{fig:cond_qual_forecast_pl6}
\end{figure}

\begin{figure}[!h]
\centering
\includegraphics[width=\linewidth]{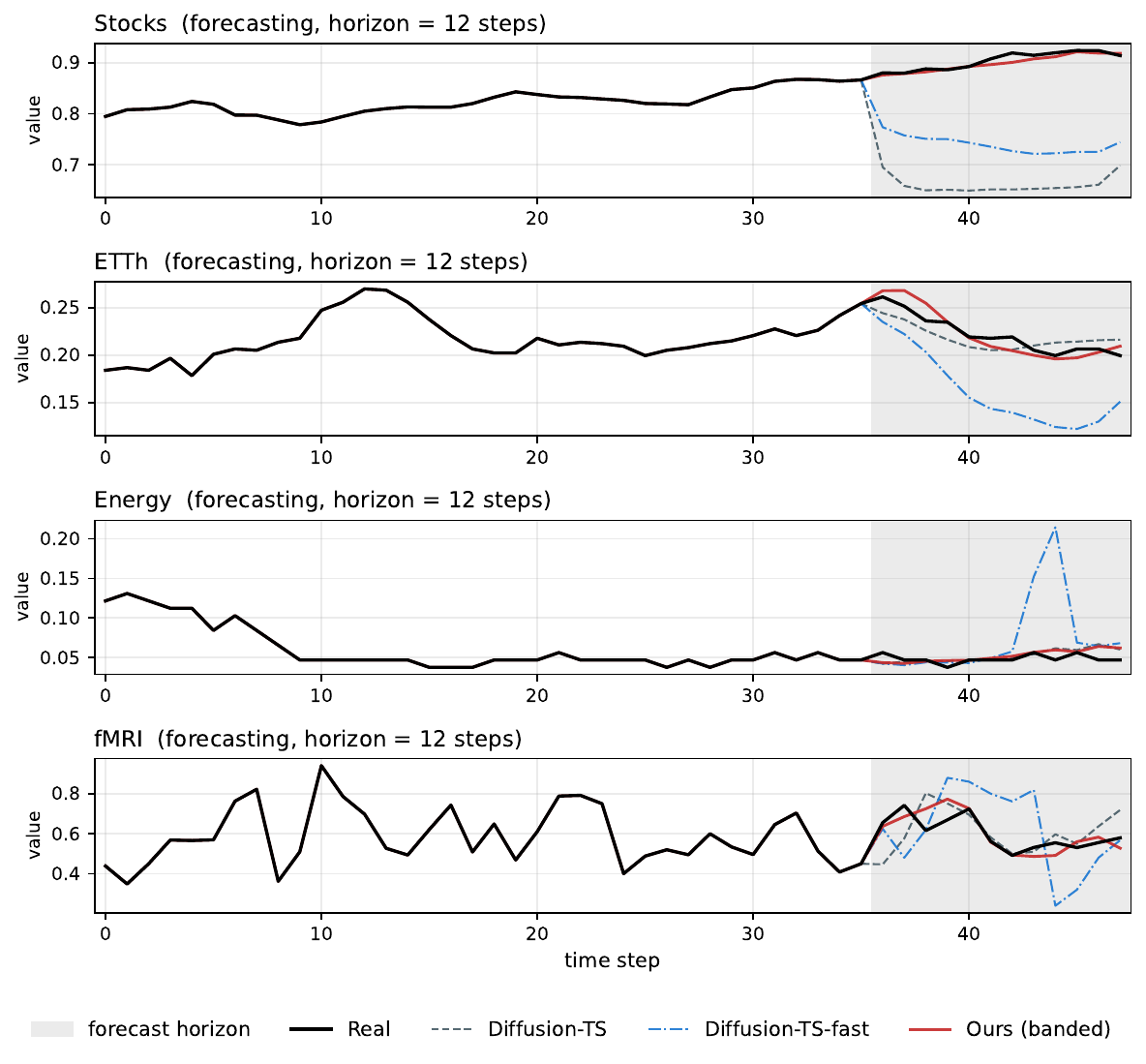}
\caption{Forecasting, prediction horizon $= 12$ steps. Same layout as Fig.~\ref{fig:cond_qual_forecast_pl6}.}
\label{fig:cond_qual_forecast_pl12}
\end{figure}

\begin{figure}[!h]
\centering
\includegraphics[width=\linewidth]{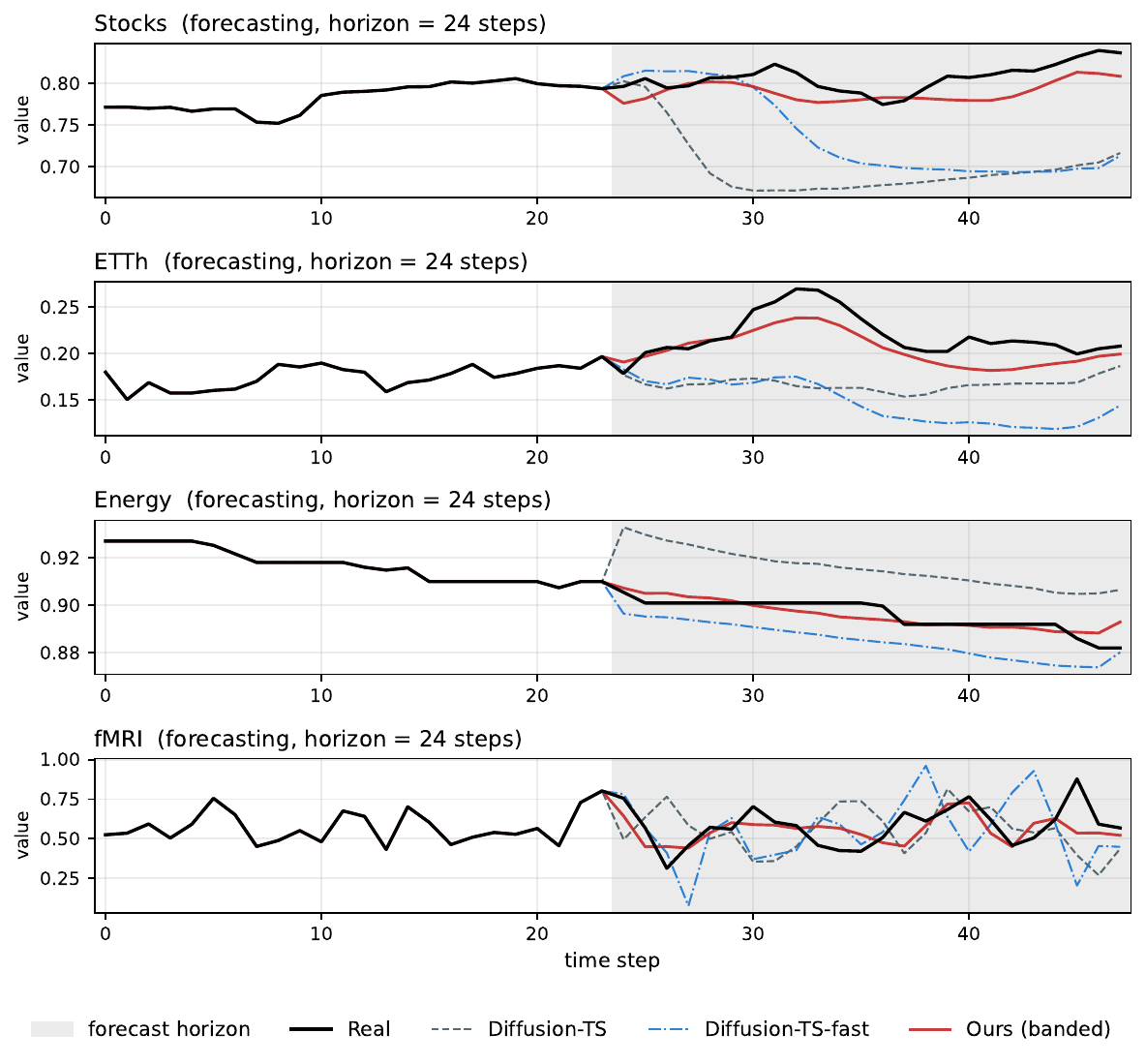}
\caption{Forecasting, prediction horizon $= 24$ steps. Same layout as Fig.~\ref{fig:cond_qual_forecast_pl6}.}
\label{fig:cond_qual_forecast_pl24}
\end{figure}

\begin{figure}[!h]
\centering
\includegraphics[width=\linewidth]{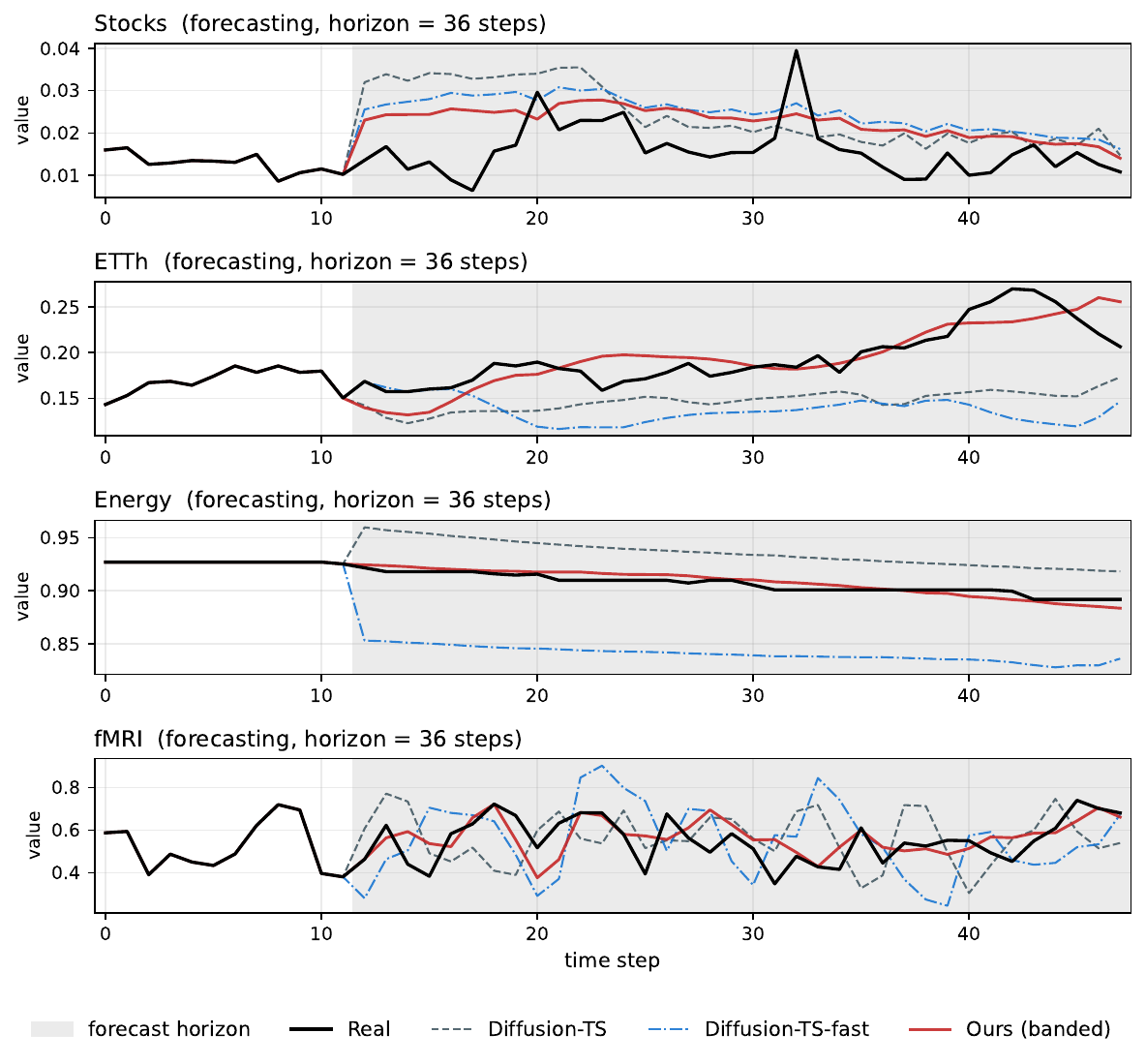}
\caption{Forecasting, prediction horizon $= 36$ steps. Same layout as Fig.~\ref{fig:cond_qual_forecast_pl6}.}
\label{fig:cond_qual_forecast_pl36}
\end{figure}

\section{Visual Comparison of Conditional Generation}
\label{sec:cond_qual_appendix}

This appendix extends the single-dataset case study of
Sec.~\ref{sec:case_study_qual} to all four conditional datasets
(\textsc{stocks}, \textsc{etth}, \textsc{energy}, \textsc{fmri}) and to
the full set of operating points reported in
Tables~\ref{tab:cond_impute} and~\ref{tab:cond_forecast}: imputation at
missing ratios $\{0.1, 0.25, 0.5, 0.75, 0.9\}$
(Figs.~\ref{fig:cond_qual_impute_mr0.1}-\ref{fig:cond_qual_impute_mr0.9})
and forecasting at horizons $\{6, 12, 24, 36\}$
(Figs.~\ref{fig:cond_qual_forecast_pl6}-\ref{fig:cond_qual_forecast_pl36}).
For each (dataset, task) cell, the visualised sample / channel pair is
chosen automatically to maximise horizon variance plus banded
competitiveness (see \texttt{Experiments/plot\_cond\_qualitative.py}); the
selection rule is fixed across methods so the same time series is shown
for all three samplers.

\section{Limitations}
\label{sec:limitations}
Despite our StrideDiffusion achieves great efforts both the acceleration and balance between efficiency and effectiveness. We still have several limitations exist indicating further investigation.

The adaptive leap schedule (Section 4.2.2) heuristically maps the gating results to several predefined step sizes. Sensitivity analysis in Section 5.5 shows that the design is robust to $\tau_{\mathrm{phase}}$ and exhibits only slight sensitivity to the other thresholds within the recommended range; therefore, the heuristic scheduling is not a fragile component. Thus, one possible extension of this paper is to replace the heuristic with a lightweight policy. Using the spectral statistics (relative band energy, log-power shift, phase velocity) already generated by the gating itself as input, a lightweight model is learned to output appropriate step sizes. This extension is intentionally omitted to preserve the design premise that StrideDiffusion is completely training-free during deployment.

\clearpage
\section*{NeurIPS Paper Checklist}

\begin{enumerate}

\item {\bf Claims}
    \item[] Question: Do the main claims made in the abstract and introduction accurately reflect the paper's contributions and scope?
    \item[] Answer: \answerYes{} 
    \item[] Justification: The abstract and introduction state the three concrete contributions.

    \item[] Guidelines:
    \begin{itemize}
        \item The answer \answerNA{} means that the abstract and introduction do not include the claims made in the paper.
        \item The abstract and/or introduction should clearly state the claims made, including the contributions made in the paper and important assumptions and limitations. A \answerNo{} or \answerNA{} answer to this question will not be perceived well by the reviewers. 
        \item The claims made should match theoretical and experimental results, and reflect how much the results can be expected to generalize to other settings. 
        \item It is fine to include aspirational goals as motivation as long as it is clear that these goals are not attained by the paper. 
    \end{itemize}

\item {\bf Limitations}
    \item[] Question: Does the paper discuss the limitations of the work performed by the authors?
    \item[] Answer: \answerYes{} 
    \item[] Justification: We have illustrated the limitation in Appendix~\ref{sec:limitations}
    \item[] Guidelines:
    \begin{itemize}
        \item The answer \answerNA{} means that the paper has no limitation while the answer \answerNo{} means that the paper has limitations, but those are not discussed in the paper. 
        \item The authors are encouraged to create a separate ``Limitations'' section in their paper.
        \item The paper should point out any strong assumptions and how robust the results are to violations of these assumptions (e.g., independence assumptions, noiseless settings, model well-specification, asymptotic approximations only holding locally). The authors should reflect on how these assumptions might be violated in practice and what the implications would be.
        \item The authors should reflect on the scope of the claims made, e.g., if the approach was only tested on a few datasets or with a few runs. In general, empirical results often depend on implicit assumptions, which should be articulated.
        \item The authors should reflect on the factors that influence the performance of the approach. For example, a facial recognition algorithm may perform poorly when image resolution is low or images are taken in low lighting. Or a speech-to-text system might not be used reliably to provide closed captions for online lectures because it fails to handle technical jargon.
        \item The authors should discuss the computational efficiency of the proposed algorithms and how they scale with dataset size.
        \item If applicable, the authors should discuss possible limitations of their approach to address problems of privacy and fairness.
        \item While the authors might fear that complete honesty about limitations might be used by reviewers as grounds for rejection, a worse outcome might be that reviewers discover limitations that aren't acknowledged in the paper. The authors should use their best judgment and recognize that individual actions in favor of transparency play an important role in developing norms that preserve the integrity of the community. Reviewers will be specifically instructed to not penalize honesty concerning limitations.
    \end{itemize}

\item {\bf Theory assumptions and proofs}
    \item[] Question: For each theoretical result, does the paper provide the full set of assumptions and a complete (and correct) proof?
    \item[] Answer: \answerYes{} 
    \item[] Justification: Theoretical statements (Lemma~\ref{lem:bandwise_ddim_sensitivity}, Corollary~\ref{cor:inactive_band_stability}, Proposition~\ref{prop:spectral_activity_indicator}, Theorem~\ref{thm:spectral_guided_leap_error}) are stated in Section~\ref{sec:theory_spectral_stability} with their assumptions explicit, and complete proofs are provided in Appendices~\ref{app:proof_bandwise_ddim_sensitivity},~\ref{app:proof_inactive_band_stability},~\ref{app:discussion_xrho_x0},~\ref{app:proof_spectral_activity_indicator}, and~\ref{app:proof_spectral_guided_leap_error}. Theoretical statement of extension to DPM-Solver-2 (Proposition~\ref{prop:dpm_solver_history}) and its proof are stated in Appendix~\ref{app:dpm_solver_extension}.  

    \item[] Guidelines:
    \begin{itemize}
        \item The answer \answerNA{} means that the paper does not include theoretical results. 
        \item All the theorems, formulas, and proofs in the paper should be numbered and cross-referenced.
        \item All assumptions should be clearly stated or referenced in the statement of any theorems.
        \item The proofs can either appear in the main paper or the supplemental material, but if they appear in the supplemental material, the authors are encouraged to provide a short proof sketch to provide intuition. 
        \item Inversely, any informal proof provided in the core of the paper should be complemented by formal proofs provided in appendix or supplemental material.
        \item Theorems and Lemmas that the proof relies upon should be properly referenced. 
    \end{itemize}

    \item {\bf Experimental result reproducibility}
    \item[] Question: Does the paper fully disclose all the information needed to reproduce the main experimental results of the paper to the extent that it affects the main claims and/or conclusions of the paper (regardless of whether the code and data are provided or not)?
    \item[] Answer: \answerYes{} 
    \item[] Justification: Section~\ref{sec:exp_setup} specifies datasets, sequence length, baselines, metrics, and hardware; Section~\ref{sec:adaptive_sample} and the pseudocode in Appendix~\ref{app:inference_pseudocode} fully describe the sampler; per-dataset hyperparameters are listed in the appendix, and an anonymized code repository is linked in the abstract.
    \item[] Guidelines:
    \begin{itemize}
        \item The answer \answerNA{} means that the paper does not include experiments.
        \item If the paper includes experiments, a \answerNo{} answer to this question will not be perceived well by the reviewers: Making the paper reproducible is important, regardless of whether the code and data are provided or not.
        \item If the contribution is a dataset and\slash or model, the authors should describe the steps taken to make their results reproducible or verifiable. 
        \item Depending on the contribution, reproducibility can be accomplished in various ways. For example, if the contribution is a novel architecture, describing the architecture fully might suffice, or if the contribution is a specific model and empirical evaluation, it may be necessary to either make it possible for others to replicate the model with the same dataset, or provide access to the model. In general. releasing code and data is often one good way to accomplish this, but reproducibility can also be provided via detailed instructions for how to replicate the results, access to a hosted model (e.g., in the case of a large language model), releasing of a model checkpoint, or other means that are appropriate to the research performed.
        \item While NeurIPS does not require releasing code, the conference does require all submissions to provide some reasonable avenue for reproducibility, which may depend on the nature of the contribution. For example
        \begin{enumerate}
            \item If the contribution is primarily a new algorithm, the paper should make it clear how to reproduce that algorithm.
            \item If the contribution is primarily a new model architecture, the paper should describe the architecture clearly and fully.
            \item If the contribution is a new model (e.g., a large language model), then there should either be a way to access this model for reproducing the results or a way to reproduce the model (e.g., with an open-source dataset or instructions for how to construct the dataset).
            \item We recognize that reproducibility may be tricky in some cases, in which case authors are welcome to describe the particular way they provide for reproducibility. In the case of closed-source models, it may be that access to the model is limited in some way (e.g., to registered users), but it should be possible for other researchers to have some path to reproducing or verifying the results.
        \end{enumerate}
    \end{itemize}

\item {\bf Open access to data and code}
    \item[] Question: Does the paper provide open access to the data and code, with sufficient instructions to faithfully reproduce the main experimental results, as described in supplemental material?
    \item[] Answer: \answerYes{} 
    \item[] Justification: An anonymized code repository is provided in the abstract (\url{https://anonymous.4open.science/r/stridediff-ts}), containing the sampler implementation, training/inference scripts, and configuration files. All datasets used (Sines, Stocks, ETTh, MuJoCo, Energy, fMRI) are publicly available and follow the Diffusion-TS preprocessing protocol referenced in Section~\ref{sec:exp_setup}.
    \item[] Guidelines:
    \begin{itemize}
        \item The answer \answerNA{} means that paper does not include experiments requiring code.
        \item Please see the NeurIPS code and data submission guidelines (\url{https://neurips.cc/public/guides/CodeSubmissionPolicy}) for more details.
        \item While we encourage the release of code and data, we understand that this might not be possible, so \answerNo{} is an acceptable answer. Papers cannot be rejected simply for not including code, unless this is central to the contribution (e.g., for a new open-source benchmark).
        \item The instructions should contain the exact command and environment needed to run to reproduce the results. See the NeurIPS code and data submission guidelines (\url{https://neurips.cc/public/guides/CodeSubmissionPolicy}) for more details.
        \item The authors should provide instructions on data access and preparation, including how to access the raw data, preprocessed data, intermediate data, and generated data, etc.
        \item The authors should provide scripts to reproduce all experimental results for the new proposed method and baselines. If only a subset of experiments are reproducible, they should state which ones are omitted from the script and why.
        \item At submission time, to preserve anonymity, the authors should release anonymized versions (if applicable).
        \item Providing as much information as possible in supplemental material (appended to the paper) is recommended, but including URLs to data and code is permitted.
    \end{itemize}

\item {\bf Experimental setting/details}
    \item[] Question: Does the paper specify all the training and test details (e.g., data splits, hyperparameters, how they were chosen, type of optimizer) necessary to understand the results?
    \item[] Answer: \answerYes{} 
    \item[] Justification: Section~\ref{sec:exp_setup} reports baselines, datasets, evaluation metrics, sequence length, and the computing setup. Hyperparameters of the sampler ($l_{\mathrm{coarse}}$, $l_{\mathrm{mid}}$, $l_{\mathrm{fine}}$, $K_{\mathrm{micro}}$, $\tau_{\mathrm{energy}}$, $\tau_{\mathrm{phase}}$) and how they are selected are described in Section~\ref{sec:adaptive_sample} and analyzed in Section~\ref{sec:sensitivity} and Appendix~\ref{sec:sensitivity_appendix}. Since StrideDiffusion is training-free, it inherits the pretrained Diffusion-TS backbone whose training protocol we follow without modification.
    \item[] Guidelines:
    \begin{itemize}
        \item The answer \answerNA{} means that the paper does not include experiments.
        \item The experimental setting should be presented in the core of the paper to a level of detail that is necessary to appreciate the results and make sense of them.
        \item The full details can be provided either with the code, in appendix, or as supplemental material.
    \end{itemize}

\item {\bf Experiment statistical significance}
    \item[] Question: Does the paper report error bars suitably and correctly defined or other appropriate information about the statistical significance of the experiments?
    \item[] Answer: \answerYes{} 
    \item[] Justification: All quantitative tables in the unconditional generation experiments (Table~\ref{tab:results-unconditional-generation}) report mean$\pm$standard deviation across multiple random seeds, capturing run-to-run variability for both fidelity and diversity metrics. Conditional generation tables report means across several seeds.
    \item[] Guidelines:
    \begin{itemize}
        \item The answer \answerNA{} means that the paper does not include experiments.
        \item The authors should answer \answerYes{} if the results are accompanied by error bars, confidence intervals, or statistical significance tests, at least for the experiments that support the main claims of the paper.
        \item The factors of variability that the error bars are capturing should be clearly stated (for example, train/test split, initialization, random drawing of some parameter, or overall run with given experimental conditions).
        \item The method for calculating the error bars should be explained (closed form formula, call to a library function, bootstrap, etc.)
        \item The assumptions made should be given (e.g., Normally distributed errors).
        \item It should be clear whether the error bar is the standard deviation or the standard error of the mean.
        \item It is OK to report 1-sigma error bars, but one should state it. The authors should preferably report a 2-sigma error bar than state that they have a 96\% CI, if the hypothesis of Normality of errors is not verified.
        \item For asymmetric distributions, the authors should be careful not to show in tables or figures symmetric error bars that would yield results that are out of range (e.g., negative error rates).
        \item If error bars are reported in tables or plots, the authors should explain in the text how they were calculated and reference the corresponding figures or tables in the text.
    \end{itemize}

\item {\bf Experiments compute resources}
    \item[] Question: For each experiment, does the paper provide sufficient information on the computer resources (type of compute workers, memory, time of execution) needed to reproduce the experiments?
    \item[] Answer: \answerYes{} 
    \item[] Justification: Section~\ref{sec:exp_setup} reports the hardware (24 vCPUs / 48 threads Intel Xeon Silver 4310 @ 2.10\,GHz, 256\,GiB RAM, single NVIDIA A5000 GPU, 24\,GB VRAM) on which all timings are measured, and Tables~\ref{tab:results-unconditional-generation} and~\ref{tab:cond_impute} report wall-clock inference time per dataset and configuration.
    \item[] Guidelines:
    \begin{itemize}
        \item The answer \answerNA{} means that the paper does not include experiments.
        \item The paper should indicate the type of compute workers CPU or GPU, internal cluster, or cloud provider, including relevant memory and storage.
        \item The paper should provide the amount of compute required for each of the individual experimental runs as well as estimate the total compute. 
        \item The paper should disclose whether the full research project required more compute than the experiments reported in the paper (e.g., preliminary or failed experiments that didn't make it into the paper). 
    \end{itemize}
    
\item {\bf Code of ethics}
    \item[] Question: Does the research conducted in the paper conform, in every respect, with the NeurIPS Code of Ethics \url{https://neurips.cc/public/EthicsGuidelines}?
    \item[] Answer: \answerYes{} 
    \item[] Justification: The research uses only publicly available, non-sensitive time-series benchmarks and does not involve human subjects, personally identifiable information, or deployed decision-making systems. The work conforms with the NeurIPS Code of Ethics.
    \item[] Guidelines:
    \begin{itemize}
        \item The answer \answerNA{} means that the authors have not reviewed the NeurIPS Code of Ethics.
        \item If the authors answer \answerNo, they should explain the special circumstances that require a deviation from the Code of Ethics.
        \item The authors should make sure to preserve anonymity (e.g., if there is a special consideration due to laws or regulations in their jurisdiction).
    \end{itemize}

\item {\bf Broader impacts}
    \item[] Question: Does the paper discuss both potential positive societal impacts and negative societal impacts of the work performed?
    \item[] Answer: \answerNA{} 
    \item[] Justification: The paper proposes a training-free sampler that accelerates inference of pretrained time-series diffusion models; it neither introduces a new generative capability nor targets sensitive applications. The contribution is foundational and methodological, with no direct path to negative societal applications beyond those already attached to time-series generation in general.
    \item[] Guidelines:
    \begin{itemize}
        \item The answer \answerNA{} means that there is no societal impact of the work performed.
        \item If the authors answer \answerNA{} or \answerNo, they should explain why their work has no societal impact or why the paper does not address societal impact.
        \item Examples of negative societal impacts include potential malicious or unintended uses (e.g., disinformation, generating fake profiles, surveillance), fairness considerations (e.g., deployment of technologies that could make decisions that unfairly impact specific groups), privacy considerations, and security considerations.
        \item The conference expects that many papers will be foundational research and not tied to particular applications, let alone deployments. However, if there is a direct path to any negative applications, the authors should point it out. For example, it is legitimate to point out that an improvement in the quality of generative models could be used to generate Deepfakes for disinformation. On the other hand, it is not needed to point out that a generic algorithm for optimizing neural networks could enable people to train models that generate Deepfakes faster.
        \item The authors should consider possible harms that could arise when the technology is being used as intended and functioning correctly, harms that could arise when the technology is being used as intended but gives incorrect results, and harms following from (intentional or unintentional) misuse of the technology.
        \item If there are negative societal impacts, the authors could also discuss possible mitigation strategies (e.g., gated release of models, providing defenses in addition to attacks, mechanisms for monitoring misuse, mechanisms to monitor how a system learns from feedback over time, improving the efficiency and accessibility of ML).
    \end{itemize}
    
\item {\bf Safeguards}
    \item[] Question: Does the paper describe safeguards that have been put in place for responsible release of data or models that have a high risk for misuse (e.g., pre-trained language models, image generators, or scraped datasets)?
    \item[] Answer: \answerNA{} 
    \item[] Justification: We release only sampler code; no pretrained generative model, scraped dataset, or other high-risk asset is released. The released code uses publicly available time-series benchmarks under their original licenses.
    \item[] Guidelines:
    \begin{itemize}
        \item The answer \answerNA{} means that the paper poses no such risks.
        \item Released models that have a high risk for misuse or dual-use should be released with necessary safeguards to allow for controlled use of the model, for example by requiring that users adhere to usage guidelines or restrictions to access the model or implementing safety filters. 
        \item Datasets that have been scraped from the Internet could pose safety risks. The authors should describe how they avoided releasing unsafe images.
        \item We recognize that providing effective safeguards is challenging, and many papers do not require this, but we encourage authors to take this into account and make a best faith effort.
    \end{itemize}

\item {\bf Licenses for existing assets}
    \item[] Question: Are the creators or original owners of assets (e.g., code, data, models), used in the paper, properly credited and are the license and terms of use explicitly mentioned and properly respected?
    \item[] Answer: \answerYes{} 
    \item[] Justification: All baselines (Diffusion-TS, DiffWave, DiffTime/CSDI, DDIM, DPM-Solver) and benchmark datasets (Sines, Stocks, ETTh, MuJoCo, Energy, fMRI) are cited in Sections~\ref{sec:related_work} and~\ref{sec:exp_setup}. We use them under their original open-source licenses (predominantly MIT/Apache-2.0 for code and the dataset providers' standard terms) and follow each dataset's preprocessing protocol as specified in the Diffusion-TS reference implementation.
    \item[] Guidelines:
    \begin{itemize}
        \item The answer \answerNA{} means that the paper does not use existing assets.
        \item The authors should cite the original paper that produced the code package or dataset.
        \item The authors should state which version of the asset is used and, if possible, include a URL.
        \item The name of the license (e.g., CC-BY 4.0) should be included for each asset.
        \item For scraped data from a particular source (e.g., website), the copyright and terms of service of that source should be provided.
        \item If assets are released, the license, copyright information, and terms of use in the package should be provided. For popular datasets, \url{paperswithcode.com/datasets} has curated licenses for some datasets. Their licensing guide can help determine the license of a dataset.
        \item For existing datasets that are re-packaged, both the original license and the license of the derived asset (if it has changed) should be provided.
        \item If this information is not available online, the authors are encouraged to reach out to the asset's creators.
    \end{itemize}

\item {\bf New assets}
    \item[] Question: Are new assets introduced in the paper well documented and is the documentation provided alongside the assets?
    \item[] Answer: \answerYes{} 
    \item[] Justification: The only new asset is the StrideDiffusion sampler implementation, released anonymously at \url{https://anonymous.4open.science/r/stridediff-ts} with a README, configuration files, and scripts to reproduce the main tables. Pseudocode is also provided in Appendix~\ref{app:inference_pseudocode}.
    \item[] Guidelines:
    \begin{itemize}
        \item The answer \answerNA{} means that the paper does not release new assets.
        \item Researchers should communicate the details of the dataset\slash code\slash model as part of their submissions via structured templates. This includes details about training, license, limitations, etc. 
        \item The paper should discuss whether and how consent was obtained from people whose asset is used.
        \item At submission time, remember to anonymize your assets (if applicable). You can either create an anonymized URL or include an anonymized zip file.
    \end{itemize}

\item {\bf Crowdsourcing and research with human subjects}
    \item[] Question: For crowdsourcing experiments and research with human subjects, does the paper include the full text of instructions given to participants and screenshots, if applicable, as well as details about compensation (if any)? 
    \item[] Answer: \answerNA{} 
    \item[] Justification: The paper does not involve crowdsourcing or research with human subjects; all experiments use publicly available time-series benchmarks.
    \item[] Guidelines:
    \begin{itemize}
        \item The answer \answerNA{} means that the paper does not involve crowdsourcing nor research with human subjects.
        \item Including this information in the supplemental material is fine, but if the main contribution of the paper involves human subjects, then as much detail as possible should be included in the main paper. 
        \item According to the NeurIPS Code of Ethics, workers involved in data collection, curation, or other labor should be paid at least the minimum wage in the country of the data collector. 
    \end{itemize}

\item {\bf Institutional review board (IRB) approvals or equivalent for research with human subjects}
    \item[] Question: Does the paper describe potential risks incurred by study participants, whether such risks were disclosed to the subjects, and whether Institutional Review Board (IRB) approvals (or an equivalent approval/review based on the requirements of your country or institution) were obtained?
    \item[] Answer: \answerNA{} 
    \item[] Justification: The paper does not involve human subjects, so IRB approval is not applicable.
    \item[] Guidelines:
    \begin{itemize}
        \item The answer \answerNA{} means that the paper does not involve crowdsourcing nor research with human subjects.
        \item Depending on the country in which research is conducted, IRB approval (or equivalent) may be required for any human subjects research. If you obtained IRB approval, you should clearly state this in the paper. 
        \item We recognize that the procedures for this may vary significantly between institutions and locations, and we expect authors to adhere to the NeurIPS Code of Ethics and the guidelines for their institution. 
        \item For initial submissions, do not include any information that would break anonymity (if applicable), such as the institution conducting the review.
    \end{itemize}

\item {\bf Declaration of LLM usage}
    \item[] Question: Does the paper describe the usage of LLMs if it is an important, original, or non-standard component of the core methods in this research? Note that if the LLM is used only for writing, editing, or formatting purposes and does \emph{not} impact the core methodology, scientific rigor, or originality of the research, declaration is not required.
    \item[] Answer: \answerNA{} 
    \item[] Justification: LLMs are not used as a component of the proposed method; the sampler operates on pretrained time-series diffusion backbones and uses no language-model component. Any LLM use was limited to writing/editing assistance, which does not require declaration.
    \item[] Guidelines:
    \begin{itemize}
        \item The answer \answerNA{} means that the core method development in this research does not involve LLMs as any important, original, or non-standard components.
        \item Please refer to our LLM policy in the NeurIPS handbook for what should or should not be described.
    \end{itemize}

\end{enumerate}

\end{document}